\newcommand{\ours}{{Federating with the Right Collaborators}} 
\newcommand{\ourshort}{{FedeRiCo}} 
\newcommand{\para}[1]{{\textbf{#1}\ \ }}
\title{Find Your Friends: Personalized Federated Learning with the Right Collaborators}
\author{%
  Amy Sui\thanks{Equal Contribution. $^\dagger$Work done while at Layer 6 AI.} \\
  Layer 6 AI\\
  \texttt{amy@layer6.ai} \\
  \And
    Junfeng Wen$^{*\dagger}$ \\
  Carleton University\\
  \texttt{junfengwen@gmail.com} \\
  \And
    Yenson Lau \\
  Layer 6 AI\\
  \texttt{yenson@layer6.ai} \\
  \And
  Brendan Leigh Ross \\
  Layer 6 AI\\
  \texttt{brendan@layer6.ai} \\
  \And
  Jesse C. Cresswell \\
  Layer 6 AI\\
  \texttt{jesse@layer6.ai} \\
}
\begin{document}

\maketitle

\begin{abstract}
In the traditional federated learning setting, a central server coordinates a network of clients 
to train one global model. 
However, the global model may serve many clients poorly due to data heterogeneity. 
Moreover, there may not exist a trusted central party that can coordinate the clients to ensure that each of them can benefit from others.
To address these concerns, we present a novel decentralized framework, \textit{\ourshort}, 
where each client can learn as much or as little from other clients as is optimal for its local data distribution. 
Based on expectation-maximization, {\ourshort} estimates the utilities of other participants' models on each client's data so that everyone can select the right collaborators for learning. 
As a result, our algorithm outperforms other federated, personalized, and/or decentralized approaches on several benchmark datasets, being the \textit{only} approach that consistently performs better than training with local data only.

\end{abstract}

\section{Introduction}

Federated learning~(FL)~\citep{mcmahan2017communication} offers a framework in which a single server-side model is collaboratively trained across decentralized datasets held by clients. 
It has been successfully deployed in practice for developing machine learning models without direct access to user data, which is essential in highly regulated industries such as banking and healthcare~\citep{long2020federated,Sadilek2021}. 
For example, several hospitals that each collect patient data may want to merge their datasets for increased diversity and dataset size but are prohibited due to privacy regulations.

\begin{figure}[ht]
    \centering
    \includegraphics[width=0.49\textwidth]{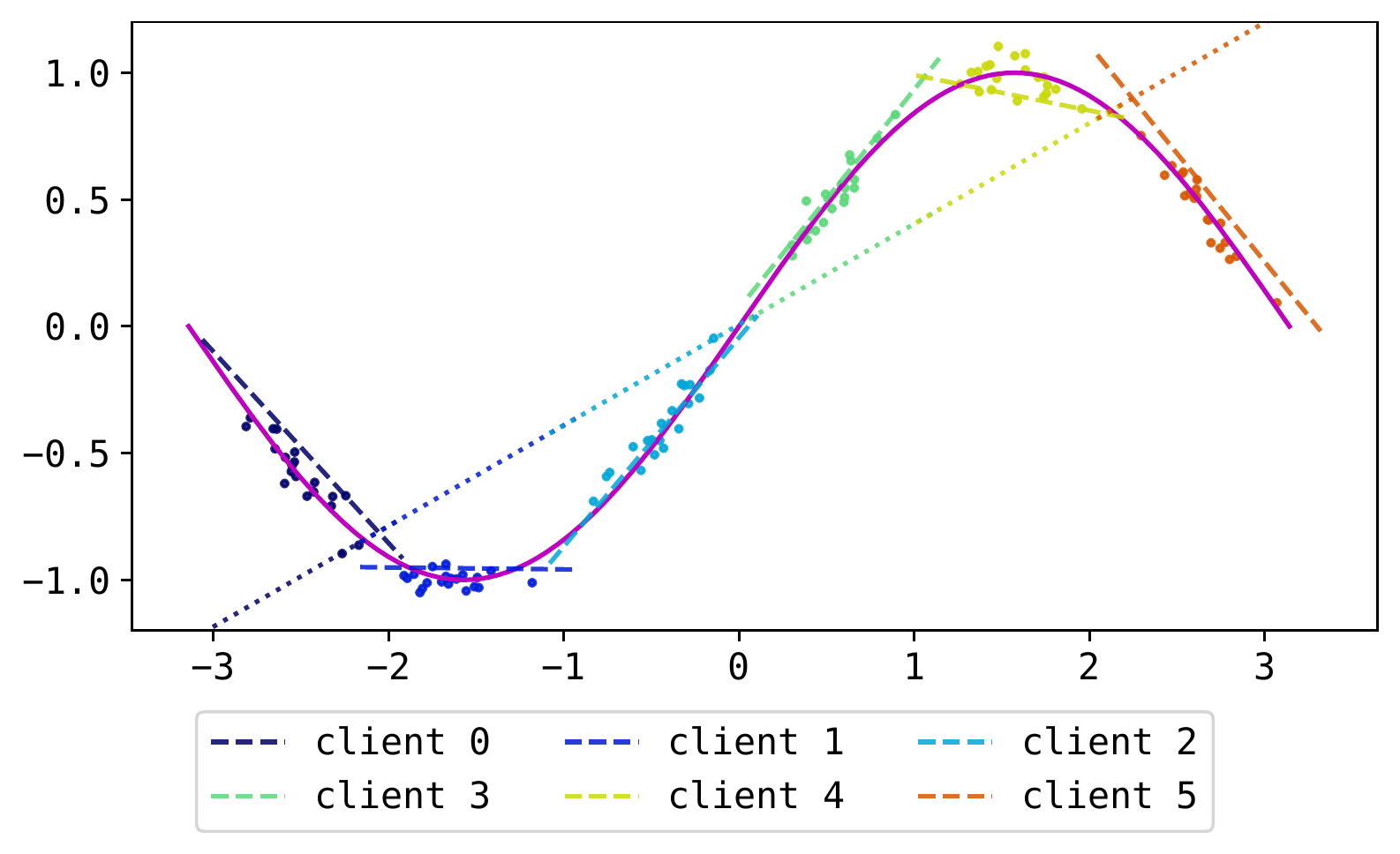}
    \includegraphics[width=0.49\textwidth]{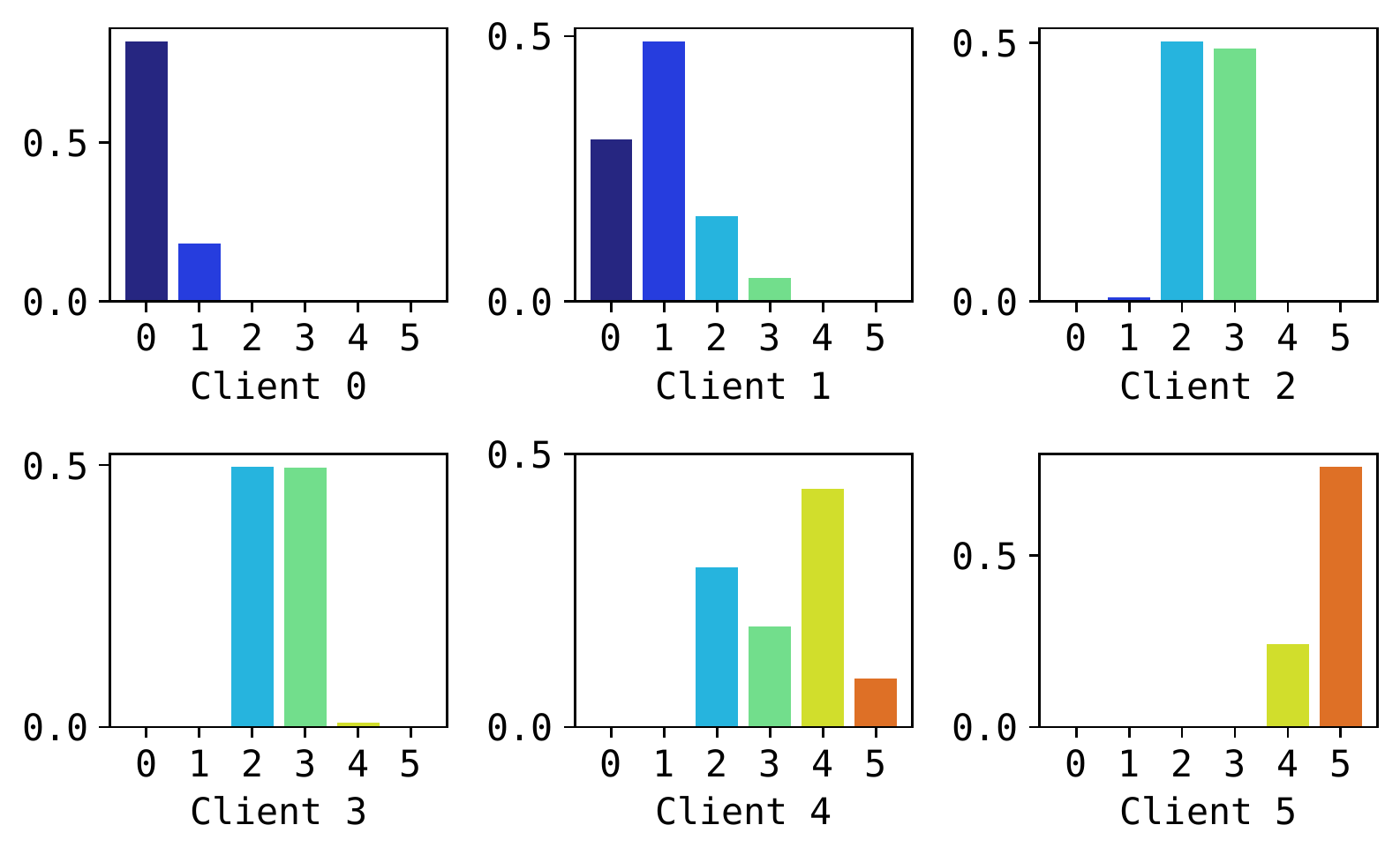}
    \caption{{\bf Left:} Noisy data points generated for each client along a sine curve (solid magenta line) where the $x$-axis and $y$-axis correspond to input and output respectively. The corresponding model learned by FedAvg (dotted line) fails to adapt to the local data seen by each client, in contrast to the models learned by each client using our {\ourshort} (dashed lines). {\bf Right:} The weights used by {\ourshort} to average participant outputs for each client. As the client index increases, the data is generated from successive intervals of the sine curve, and collaborator weights change accordingly.}
    \label{fig:sine_example}
\end{figure}

Traditional FL methods like Federated Averaging~(FedAvg)~\citep{mcmahan2017communication} can achieve noticeable improvement over local training when the participating clients' data are homogeneous. 
However, each client's data is likely to have a different distribution from others in practice~\citep{zhao2018federated,adnan2022}.
Such differences make it much more challenging to learn a global model that works well for all participants. 
As an illustrative example, consider a simple scenario where each client seeks to fit a linear model to limited data, on an interval of the sine curve as shown in \cref{fig:sine_example}. 
This is analogous to the FL setting where several participating clients would like to collaborate, but each client only has access to data from its own data distribution. 
It is clear that no single linear model can be adequate to describe the entire joint dataset, so a global model learned by FedAvg can perform poorly, as shown by the dotted line. 
Ideally, each client should benefit from collaboration by increasing the effective size and diversity of data, 
but in practice, forcing everyone to use the same global model without proper personalization can hurt performance on their own data distribution~\citep{kulkarni2020survey, tan2022towards}. 

To address this, we propose \textbf{Fede}rating with the \textbf{Ri}ght \textbf{Co}llaborators~(\ourshort), a novel framework suitable for every client to find other participants with similar data distributions to collaborate with. 
Back to our illustration in \cref{fig:sine_example}. 
{\ourshort} enables each client to choose the \emph{right collaborators} as shown on the plots on the right-hand side: each client is able to correctly leverage information from the neighboring clients when it is beneficial to do so. 
The final personalized models can serve the local distributions well, as demonstrated in the left plot. 

More specifically, our {\ourshort} assumes that each client has an underlying data distribution, and exploits the hidden relationship among the clients' data. 
By selecting the most relevant clients, each client can collaborate as much or as little as they need, and learn a personalized mixture model to fit the local data. 
Additionally, {\ourshort} achieves this in a fully decentralized manner that is not beholden to any central authority~\citep{li2021decentralized, huang2021personalized,kalra2021proxyfl}.

\para{Our contributions}
We propose \ourshort, a novel decentralized and personalized FL framework derived based on expectation-maximization~(EM).  
Within this framework, we propose a communication-efficient protocol suitable for fully-decentralized learning.  
Through extensive experiments on several benchmark datasets, we demonstrate that our approach finds good client collaboration 
and outperforms other methods in the non-i.i.d.\ data distributions setting. 

\para{Paper outline}
The rest of the paper is organized as follows. In Section 2 we discuss related approaches towards decentralized federated learning and personalization. Section 3 describes our algorithm formulation and its relationship to expectation-maximization, and an efficient protocol for updating clients. We provide experimental results in Section 4, and conclude in Section 5.

\section{Related work for personalized FL}
\label{sec:related}
\para{Meta-learning} 
Federated learning can be interpreted as a meta-learning problem, where the goal is to extract a global meta-model based on data from several clients. 
This meta-model 
can be learned using, for instance, the well-known Federated Averaging (FedAvg) algorithm \citep{mcmahan2017communication}, and personalization can then be achieved by locally fine-tuning the meta-model~\citep{jiang2019improving}. 
Later studies explored methods to learn improved meta-models. 
\cite{khodak2019adaptive} proposed ARUBA, a meta-learning algorithm based on online convex optimization, and demonstrates that it can improve upon FedAvg's performance. 
Per-FedAvg~\citep{fallah2020personalized} uses the Model Agnostic Meta-Learning (MAML) framework to build the initial meta-model. 
However, MAML requires computing or approximating the Hessian term and can therefore be computationally prohibitive. 
\cite{acar2021debiasing} adopted gradient correction methods to explicitly de-bias the meta-model from the statistical heterogeneity of client data and achieved sample-efficient customization of the meta-model.

\para{Model regularization / interpolation} 
Several works improve personalization performance by regularizing the divergence between the global and local models~\citep{hanzely2020federated, li2021ditto, huang2021personalized}. 
Similarly, PFedMe~\citep{t2020personalized} formulates personalization as a proximal regularization problem using Moreau envelopes. 
FML~\citep{shen2020federated} adopts knowledge distillation to regularize the predictions between local and global models and handle model heterogeneity.
In recent work, SFL~\cite{chen2022personalized} also formulates the personalization as a bi-level optimization problem with an additional regularization term on the distance between local models and its neighbor models according to a connection graph. Specifically, SFL adopts GCN to represent the connection graph and learns the graph as part of the optimization to encourage useful client collaborations.
Introduced in \cite{mansour2020three} as one of the three methods for achieving personalization in FL, model interpolation involves mixing a client's local model with a jointly trained global model to build personalized models for each client. \cite{deng2020adaptive} further derives generalization bounds for mixtures of local and global models. 

\para{Multi-task learning}
Personalized FL naturally fits into the multi-task learning (MTL) framework. MOCHA \citep{smith2017federated} utilizes MTL to address both systematic and statistical heterogeneity but is restricted to simple convex models. VIRTUAL \citep{corinzia2019variational} is a federated MTL framework for non-convex models based on a hierarchical Bayesian network formed by the central server and the clients, and inference is performed using variational methods. 
SPO~\citep{cui2021collaboration}
applies Specific Pareto Optimization to identify the optimal collaborator sets
and learn a hypernetwork for all clients. 
While also aiming to identify necessary collaborators, SPO adopts a centralized FL setting with clients 
jointly training the hypernetwork. In contrast, our work focuses on decentralized FL where clients 
aggregate updates from collaborators, and jointly make predictions.

In a similar spirit to our work, \cite{marfoq2021federated} assumes that the data distribution of each client is a mixture of several underlying distributions/components. 
Federated MTL is then formulated as a problem of modeling the underlying distributions using Federated Expectation-Maximization~(FedEM).  Clients jointly update a set of several 
component models, and each maintains a customized set of weights, corresponding to the mixing coefficients of the underlying distributions, for predictions.
One shortcoming of FedEM is that it uses an instance-level weight assignment in training time but a client-level weight assignment in inference time. 
As a concrete example, consider a client consisting of a 20\%/80\% data mixture from distributions A and B. 
FedEM will learn two models, one for each distribution. 
Given a new data point at inference time, the client will always predict
$0.2\cdot\text{pred}_A + 0.8\cdot\text{pred}_B$, {\em regardless of whether it came from distribution A or B}. 
This is caused by the mismatched behaviour between training and inference time.
On the contrary, {\ourshort} naturally considers a client-level weight assignment for both training and inference in a decentralized setting.

\para{Other approaches}
Clustering-based approaches are also popular for personalized FL~\citep{sattler2020clustered,ghosh2020efficient,mansour2020three}. 
Such personalization lacks flexibility since each client can only collaborate with other clients within the same cluster.
FedFomo~\citep{zhang2021personalized} interpolates the model updates of each client with those of other clients to improve local performance. 
FedPer~\citep{arivazhagan2019federated} divides 
the neural network model into base and personalization layers. 
Base layers are trained jointly, whereas personalization layers are trained locally.

\section{Federated Learning with the Right Collaborators}
\label{sec:method}

\subsection{Problem Formulation}
We consider a federated learning~(FL) scenario with $K$ clients. 
Let $[K]:=\{1,2,\dots,K\}$ denote the set of positive integers up until $K$.
Each client $i\in[K]$ consists of a local dataset $D_i=\{(\vx_s^{(i)},y_s^{(i)})\}_{s=1}^{n_i}$ where $n_i$ is the number of examples for client $i$, and the input $\vx_s\in\Xcal$ and output $y_s\in\Ycal$ are drawn from a joint distribution $\Dcal_i$ over the space $\Xcal\times\Ycal$.

The goal of personalized FL is to find a prediction model $h_i:\Xcal\mapsto\Ycal$ that can perform well on the local distribution $\Dcal_i$ for each client. 
One of the main challenges in personalized FL is that we do not know if two clients $i$ and $j$ share the same underlying data distribution. 
If their data distributions are vastly different, forcing them to collaborate is likely to result in worse performance compared to local training without collaboration.
Our method, \textbf{Fede}rating with the \textbf{Ri}ght \textbf{Co}llaborators~(\ourshort), is designed to address this problem so that each client can choose to collaborate or not, depending on their data distributions. 
{\ourshort} is a decentralized framework (i.e.\ without a central server). 
For better exposition, \cref{subsec:all2all_federico} first demonstrates how our algorithm works in a \emph{hypothetical} all-to-all communication setting, an assumption that is then removed in \cref{subsec:decentralized_federico} which presents several practical considerations for {\ourshort} to work with limited communication. 

\subsection{{\ourshort} with all-to-all communication}
\label{subsec:all2all_federico}

\begin{wrapfigure}[3]{r}{0.35\textwidth}
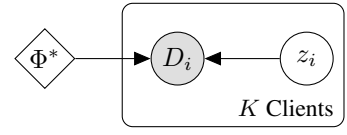

\centering
\vspace{-5em}
\tikz{ %
    \node[latent, diamond] (Phi) {$\Phi^*$} ; %
    \node[obs, right=of Phi] (D) {$D_i$} ; %
    \node[latent, right=of D] (z) {$z_i$} ; %
    \plate[inner sep=0.25cm, xshift=-0.12cm, yshift=0.12cm] {plate1} {(z) (D)} {$K$ Clients}; %
    \edge {Phi,z} {D} ; %
}
\caption{Graphical model}
\label{fig:graphical_model}
\end{wrapfigure}

Note that every local distribution $\Dcal_i$ can always be represented as a mixture of $\{\Dcal_j\}_{j=1}^K$ with some client weights $\vpi_i\!=\![\pi_{i1},\dots,\pi_{iK}]\!\in\!\Delta^{K}$, where $\Delta^{K}$ is the ($K\!-\!1$)-dimensional simplex\footnote{One-hot $\vpi_i$ is always feasible, but other mixing coefficients may exist.}. 
Let $z_i$ be the latent assignment variable of client $i$, and $\Pi:=[\vpi_1,\dots,\vpi_K]^\top$ be the prior $\Pi_{ij}=\Pr(z_i=j)$. 
Suppose that the conditional probability $p_i(y|\vx)$ satisfies $-\log p_i(y|\vx)=\ell(h_{\vphi_i^*}(\vx),y)+c$ for some parameters $\vphi_i^*\in\RR^d$, loss function $\ell:\Ycal\times\Ycal\mapsto\RR^+$, and normalization constant $c$.
By using the stacked notation $\Phi^*=[\vphi_1^*,\dots,\vphi_K^*]\in\RR^{d\times K}$, \cref{fig:graphical_model} shows the graphical model of how the local dataset is generated.
Our goal is to learn the parameters $\Theta:=(\Phi,\Pi)$ by maximizing the log-likelihood:
\begin{equation}
f(\Theta):=\frac{1}{n}\log p(D;\Theta)
=\frac{1}{n}\sum_{i=1}^K\log p(D_i;\Theta)
=\frac{1}{n}\sum_{i=1}^K
\log \sum_{z_i=1}^Kp(D_i,z_i;\Theta).
\label{eq:log_like_obj}
\end{equation}
where $D:=\cup_i D_i$ and $n:=\sum_i n_i$. 
One standard approach to optimization with latent variables is expectation maximization~(EM)~\citep{dempster1977maximum}.
The corresponding variational lower bound is given by (all detailed derivations of this section can be found in \cref{app:derivations})
\begin{equation}
\Lcal(q,\Theta)
:=\frac{1}{n}
\sum_i
\mathbb{E}_{q(z_i)}
[\log p(D_i,z_i;\Theta)]+C,
\label{eq:var_lower_bound}
\end{equation}
where $C$ is a constant not depending on $\Theta$. 
To obtain concrete objective functions suitable for optimization, we further assume that $p_i(x)=p(x),\forall i\in[K]$. 
Similar to \citet{marfoq2021federated}, this assumption is required due to technical reasons and can be relaxed if needed. 
With this assumption, we perform the following updates at each iteration $t$:  \begin{itemize}[leftmargin=*,nolistsep,noitemsep,topsep=0em]
\item {\bf E-step:} 
    For each client, find the best $q$, which is the posterior $p(z_i=j|D_i;\Theta^{(t-1)})$ given the current parameters $\Theta^{(t-1)}$:
    \begin{align}
    w_{ij}^{(t)}
    :=q^{(t)}(z_i=j)
    \propto\Pi_{ij}^{(t-1)}
    \exp\left[
        -\sum_{s=1}^{n_i} 
        \ell\left(
            h_{\vphi_j^{(t-1)}}(\vx_s^{(i)}),\ y_s^{(i)}
        \right)
    \right].
    \label{eq:e_step_posterior}
    \end{align}
\item {\bf M-step:}
    Given the posterior $q^{(t)}$ from the E-step, maximize $\Lcal$ w.r.t.\ $\Theta=(\Phi,\Pi)$: 
    \begin{align}
    \Pi^{(t)}_{ij}=w^{(t)}_{ij}
    \qquad\text{and}\qquad
    \Phi^{(t)}
    \in\argmin_{\Phi}
    \frac{1}{n}
    \sum_{i=1}^K
    \widehat{\Lcal}_{w,i}(\Phi)
    \label{eq:m_step_updates}
    \\
    \text{where}\qquad
    \widehat{\Lcal}_{w,i}(\Phi):=
    \sum_{j=1}^K
    w_{ij}^{(t)}
    \sum_{s=1}^{n_i}
    \ell\left(
        h_{\vphi_j}(\vx_s^{(i)}),\ y_s^{(i)}
    \right).
    \label{eq:local_weighted_loss}
    \end{align}
\end{itemize}

Bear in mind that each client can only see its local data $D_i$ in the federated setting. 
The E-step is easy to compute once the models from other clients $\vphi_j, j\neq i$ are available. 
$\Pi_{ij}^{(t)}$ is also easy to obtain as the posterior $w_{ij}^{(t)}$ is stored locally. 
However, $\Phi^{(t)}$ is trickier to compute since each client can potentially update $\Phi$ towards different directions due to data heterogeneity amongst the clients. 
To stabilize optimization and avoid overfitting from client updates, we rely on small gradient steps in lieu of full optimization in each round.
To compute $\Phi^{(t)}$ algorithmically, each client $i$: 

\begin{enumerate}[leftmargin=*,nolistsep,noitemsep,topsep=0em]
\item
Fixes $w_{ij}^{(t)}$ and computes the local gradient $\nabla\widehat{\Lcal}_{w,i}(\Phi^{(t-1)})$ on local $D_i$. 
\item
Broadcasts $\nabla\widehat{\Lcal}_{w,i}(\Phi^{(t-1)})$ to and receives $\nabla\widehat{\Lcal}_{w,j}(\Phi^{(t-1)})$ from other clients $j\neq i$. 
The models are updated based on the aggregated gradient with step size $\eta>0$:
\begin{equation}
\Phi^{(t)}=\Phi^{(t-1)}
-\eta\sum_{j=1}^K 
\nabla\widehat{\Lcal}_{w,j}(\Phi^{(t-1)}).
\label{eq:update_with_agg_grad}
\end{equation}
\end{enumerate}
Each client uses $\widehat{h}_i(\vx)=\sum_j w_{ij}^{(t)}h_{\vphi^{(t)}_j}(\vx)$ for prediction after convergence.

\para{Remark 1}
The posterior $w_{ij}^{(t)}$ (or equivalently the prior in the next iteration $\Pi_{ij}^{(t)}$) reflects the importance of model $\vphi_j$ on the data $D_i$. 
When $w_{ij}^{(t)}$ is one-hot with a one in the $i$th position, client $i$ can perform learning by itself without collaborating with others.
When $w_{ij}^{(t)}$ is more diverse, client $i$ can find the right collaborators with useful models $\vphi_j$.
Such flexibility enables each client to make its own decision on whether or not to collaborate with others, hence the name of our algorithm. 

\para{Remark 2}
Unlike prior work~\citep{mansour2020three,marfoq2021federated}, our assignment variable $z$ and probability $\Pi$ are on the client level. 
If we assume that all clients share the same prior (i.e., there is only a vector $\vpi$ instead of a matrix $\Pi$), the algorithm would be similar to HypCluster~\citep{mansour2020three}.
\cite{marfoq2021federated} used a similar formulation as ours but their assignment variable $z$ is on the instance level: every data point (instead of client) comes from a mixture of distributions.
Such an approach can cause several issues at inference time, as the assignment for novel data point is unknown. 
We refer the interested readers to \cref{sec:related} and \cref{sec:exp} for further comparison.

\para{Theoretical Convergence}
Under some regularity assumptions, our algorithm converges as follows:

\begin{restatable}{theorem}{convergence}[Convergence]\label{thm:convergence}
Under Assumptions \ref{asm:same_dist}-\ref{asm:bounded_dissim}, when the clients use SGD with learning rate $\eta=\frac{a_0}{\sqrt{T}}$, and after sufficient rounds $T$, the iterates of our algorithm satisfy
\begin{equation}
\frac{1}{T}\sum_{t=1}^T
\EE\|\nabla_{\Phi}f(\Phi^t,\Pi^t)\|_F^2
\le\Ocal\left(\frac{1}{\sqrt{T}}\right),
\qquad
\frac{1}{T}\sum_{t=1}^T
\Delta_{\Pi}f(\Phi^t,\Pi^t)
\le\Ocal\left(\frac{1}{T^{3/4}}\right),
\end{equation}
where the expectation is over the random batch samples and $\Delta_{\Pi}f(\Phi^t,\Pi^t):=f(\Phi^t,\Pi^t)-f(\Phi^t,\Pi^{t+1})\ge 0$.
\end{restatable}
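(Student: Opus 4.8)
The plan is to treat the procedure as an instance of expectation--maximization with a closed-form M-step on $\Pi$ and a single stochastic gradient M-step on $\Phi$, and to run the classical EM monotonicity argument against a per-round surrogate. For each round $t$, freeze the E-step weights $w^{(t+1)}$ (the posterior at $\Theta^t=(\Phi^t,\Pi^t)$) and define the surrogate $Q^t(\Phi,\Pi)$ obtained by plugging $q=w^{(t+1)}$ into the variational bound \eqref{eq:var_lower_bound}. Working in the minimization (negative log-likelihood) convention so that $\Delta_{\Pi}f\ge 0$ reads as genuine progress, three properties drive everything: (i) $Q^t$ upper-bounds $f$, i.e.\ $Q^t(\Phi,\Pi)\ge f(\Phi,\Pi)$ for all $(\Phi,\Pi)$; (ii) it is tight at the current iterate, $Q^t(\Phi^t,\Pi^t)=f(\Phi^t,\Pi^t)$; and (iii) since the prior enters $Q^t$ only through $-\tfrac1n\sum_{ij}w^{(t+1)}_{ij}\log\Pi_{ij}$, the $\Phi$-gradient of $Q^t$ is independent of $\Pi$ and, by the EM gradient identity, equals $\nabla_{\Phi}f(\Phi^t,\Pi^t)$ at $\Phi^t$. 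The M-step $\Pi^{t+1}=w^{(t+1)}$ is exactly the simplex minimizer of $Q^t(\Phi^t,\cdot)$, and its optimality gap is the KL divergence $\tfrac1n\sum_i\mathrm{KL}(w^{(t+1)}_i\,\|\,\Pi^t_i)$.

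First I would establish the $\Phi$-bound. Split the one-round change $f(\Theta^{t+1})-f(\Theta^t)$ into a $\Pi$-part, equal to $-\Delta_{\Pi}f\le 0$, and a model part $f(\Phi^{t+1},\Pi^{t+1})-f(\Phi^t,\Pi^{t+1})$. For the model part, apply the $L$-smoothness descent lemma to $Q^t(\cdot,\Pi^{t+1})$ (smoothness of $\ell\circ h$, via the smoothness assumption, propagates to $Q^t$ in $\Phi$), use $\EE[\hat g^t]=\nabla_{\Phi}f(\Phi^t,\Pi^t)$ together with the bounded-variance and bounded-dissimilarity assumptions to control $\EE\|\hat g^t\|_F^2$, and then collapse the surrogate back to $f$ via properties (i)--(iii) and $Q^t(\Phi^t,\Pi^{t+1})\le Q^t(\Phi^t,\Pi^t)=f(\Phi^t,\Pi^t)$. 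This yields the clean per-round inequality $\EE f(\Theta^{t+1})\le f(\Theta^t)-\eta(1-\tfrac{L\eta}{2})\EE\|\nabla_{\Phi}f(\Theta^t)\|_F^2+\tfrac{L\eta^2}{2}\sigma^2$. Telescoping over $t=1,\dots,T$, bounding $f$ below by $f^{\star}$, and substituting $\eta=a_0/\sqrt T$ delivers $\tfrac1T\sum_t\EE\|\nabla_{\Phi}f(\Theta^t)\|_F^2\le\Ocal(1/\sqrt T)$.

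For the $\Pi$-bound I would rearrange the \emph{same} telescoped identity to isolate $\sum_t\Delta_{\Pi}f$, which equals the initial gap plus the accumulated model-step increments. Bounding each increment by smoothness of $f(\cdot,\Pi^{t+1})$ now introduces a cross term $\langle\nabla_{\Phi}f(\Phi^t,\Pi^{t+1}),\nabla_{\Phi}f(\Phi^t,\Pi^t)\rangle$ between gradients taken at the \emph{new} and \emph{old} priors. I would control their discrepancy by the Lipschitz dependence of $\nabla_{\Phi}f$ on $\Pi$ followed by Pinsker's inequality, $\|\Pi^{t+1}-\Pi^t\|_F^2\lesssim\mathrm{KL}\lesssim\Delta_{\Pi}f$, so the cross term is partly reabsorbed into $\sum_t\Delta_{\Pi}f$ (legitimate once $\eta$ is small) and partly into $\sum_t\|\nabla_{\Phi}f\|_F^2$ via Cauchy--Schwarz. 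Combining with the $\Ocal(\sqrt T)$ bound on $\sum_t\|\nabla_{\Phi}f\|_F^2$ from the first part and the $\Ocal(\eta^2 T)$ noise accumulation, then optimizing against $\eta=a_0/\sqrt T$, yields $\tfrac1T\sum_t\Delta_{\Pi}f\le\Ocal(1/T^{3/4})$.

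The main obstacle is precisely this last coupling: the stochastic gradient is built from the round-$t$ posterior weights (so its mean is $\nabla_{\Phi}f$ at $\Pi^t$), whereas the natural smoothness reference after the closed-form M-step is $f(\cdot,\Pi^{t+1})$. Disentangling the two intertwined recursions — one for $\|\nabla_{\Phi}f\|_F^2$, one for $\Delta_{\Pi}f$ — without letting the cross term blow up is what forces the asymmetric rates ($T^{-1/2}$ for the models versus the slower $T^{-3/4}$ for the weights) and forms the technical heart of the argument. Everything else (deriving surrogate smoothness from the assumptions on $\ell$ and $h$, the variance/dissimilarity bookkeeping, and recognizing the $\Pi$ M-step as a KL projection whose gap lower-bounds $\Delta_{\Pi}f$) is routine.
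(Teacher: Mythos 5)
Your route is genuinely different from the paper's: the paper never re-derives a convergence analysis at all, but instead verifies that $g_i^{(t)}$ and $g^{(t)}$ are \emph{partial first-order surrogates} of $f_i$ and $f$ (the three conditions, including the Hessian computation and the PSD lemma needed to show the residual $r_i=g_i-f_i$ is $\widetilde L$-smooth in $\Phi$) and then invokes the generic theorem of \citet{marfoq2021federated}. Your direct MM-plus-SGD argument for the \emph{first} bound is essentially sound for the single aggregated gradient step the theorem analyzes: $f\le Q^t$, tightness and the gradient identity at $(\Phi^t,\Pi^t)$, exactness of the closed-form $\Pi$-step, and $L$-smoothness of $Q^t$ in $\Phi$ (immediate from Assumption~\ref{asm:loss_smooth_bounded_grad}, since $Q^t$ is a fixed convex combination of the losses) give the standard telescoped inequality and the $\Ocal(1/\sqrt T)$ rate, without needing the paper's residual-smoothness lemma.

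The gap is in your $\Pi$-bound. You propose to control the cross term $\langle\nabla_\Phi f(\Phi^t,\Pi^{t+1}),\nabla_\Phi f(\Phi^t,\Pi^t)\rangle$ via a ``Lipschitz dependence of $\nabla_\Phi f$ on $\Pi$'' combined with Pinsker's inequality, but no such uniform Lipschitz constant follows from Assumptions \ref{asm:same_dist}--\ref{asm:bounded_dissim}. Writing $\nabla_{\vphi_k}f(\Phi,\Pi)=\frac1n\sum_i\gamma_{ik}(\Pi)\,\nabla\Lcal_{ik}$ with posterior $\gamma_{ik}(\Pi)\propto\Pi_{ik}e^{-\Lcal_{ik}}$, the derivative of the prior-to-posterior map carries the factor $e^{-\Lcal_{ik}}/\sum_j\Pi_{ij}e^{-\Lcal_{ij}}$, which is unbounded whenever a row of $\Pi$ places small mass on a component whose accumulated loss is far below the others' --- exactly the near--one-hot regime the algorithm drives itself into; only gradients of $\ell$ are assumed bounded, not $\ell$ itself, so the likelihood ratios $e^{\Lcal_{ij}-\Lcal_{ik}}$ cannot be controlled. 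A second symptom that this step is wrong: if your absorption scheme did go through, the terms you list ($\Ocal(1)$ initial gap, $\Ocal(\eta^2)\sum_t\EE\|\nabla_\Phi f\|_F^2=\Ocal(\eta^2\sqrt T)$, and $\Ocal(\eta^2T)$ noise) would yield $\frac1T\sum_t\Delta_\Pi f\le\Ocal(1/T)$, not the stated $\Ocal(T^{-3/4})$, so your sketch cannot actually produce the claimed exponent. The repair is more elementary and needs no Lipschitzness in $\Pi$ at all: Assumption~\ref{asm:loss_smooth_bounded_grad} gives the \emph{uniform} bound $\|\nabla_\Phi f(\Phi,\Pi)\|_F\le B\sqrt K$ for every $\Pi$, hence the summed expected cross term is at most
\begin{equation}
\eta B\sqrt K\sum_{t=1}^T\EE\|\nabla_\Phi f(\Theta^t)\|_F
\;\le\;
\eta B\sqrt K\sqrt{T\sum_{t=1}^T\EE\|\nabla_\Phi f(\Theta^t)\|_F^2}
\;=\;\Ocal\!\left(T^{1/4}\right),
\end{equation}
using $\eta=a_0/\sqrt T$ and the first bound $\sum_t\EE\|\nabla_\Phi f\|_F^2=\Ocal(\sqrt T)$; dividing by $T$ is precisely what produces the asymmetric $\Ocal(T^{-3/4})$ rate. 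So your identification of the cross term as the crux, and of the source of the asymmetry, is correct, but the mechanism you invoke to tame it is not available under the paper's assumptions.
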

Due to space limitations, further details and the complete proof are deferred to \cref{app:convergence}. The above theorem shows that the gradient w.r.t.\ the model parameters $\Phi$ and the improvement over the mixing coefficients $\Pi$ becomes small as we increase the round $T$, thus converging to a stationary point of the log-likelihood objective $f$.

\subsection{Communication-efficient protocol}
\label{subsec:decentralized_federico}

So far, we have discussed how {\ourshort} works in the all-to-all communication setting.
In practice, {\ourshort} does not require excessive model transmission, and this subsection discusses several practical considerations to ensure communication efficiency. 
Specifically, we 
tackle the bottlenecks
in both the E-step (\ref{eq:e_step_posterior}) and the M-step (\ref{eq:m_step_updates}) since they require joint information of all models $\Phi$.

\para{E-step}
For client $i$, the key missing quantity to compute (\ref{eq:e_step_posterior}) without all-to-all communication is the loss $\ell(\vphi_j^{(t-1)})$, or likelihood $p(D_i|z_i=j;\Phi^{(t-1)})$, of other clients' models $\vphi_j,j\neq i$. 
Since the models $\Phi$ are being updated slowly, one can expect that $\ell(\vphi_j^{(t-1)})$ will not be significantly different from the loss $\ell(\vphi_j^{(t-2)})$ of the previous iteration. 
Therefore, each client can maintain a list of losses for all the clients, sample a subset of clients in each round using a sampling scheme $\Scal$ (e.g., $\epsilon$-greedy sampling as discussed later), and only update the losses of the chosen clients. 

\para{M-step}
To clearly see how $\Phi$ is updated in the M-step, let's focus on the update to a specific client's model $\vphi_i$.
According to (\ref{eq:local_weighted_loss}) and (\ref{eq:update_with_agg_grad}), the update to $\vphi_i$ is given by
\begin{equation}
-\eta
\sum_{j=1}^K w_{ji}^{(t)}
\sum_{s=1}^{n_j}
\nabla_{\vphi_i}\ell\left(
    h_{\vphi_i}(\vx_s^{(j)}),\ y_s^{(j)}
\right).
\label{eq:update_with_agg_grad_i}
\end{equation}
Note that the aggregation is based on $w_{ji}^{(t)}$ instead of $w_{ij}^{(t)}$.
Intuitively, this suggests $\vphi_i$ should be updated based on how the model is being used by {\em other clients} rather than how client $i$ itself uses it.
If $\vphi_i$ does not appear to be useful to all clients, i.e.\ $w_{ji}^{(t)}=0,\ \forall j$, it does not get updated. 
Therefore, whenever client $i$ is sampled by another client $j$ using the sampling scheme $\Scal$, it will send $\vphi_i$ to $j$, and receives the gradient update 
$\vg_{ij}:=w_{ji}^{(t)}
\sum_{s=1}^{n_j}
\nabla_{\vphi_i}\ell\left(
    h_{\vphi_i}(\vx_s^{(j)}),\ y_s^{(j)}
\right)$ from client $j$.
One issue here is that $\vg_{ij}$ is governed by $w_{ji}^{(t)}$, which could be arbitrarily small, leading to no effective update to $\vphi_i$. 
We will show how this can be addressed by using an $\epsilon$-greedy sampling scheme.

\para{Sampling scheme $\Scal$}
We deploy an $\epsilon$-greedy scheme where, in each round, each client uniformly samples clients with probability $\epsilon\in[0,1]$ and samples the client(s) with the highest posterior(s) otherwise. 
This allows a trade off between emphasizing gradient updates from high-performing clients (small $\epsilon$), versus receiving updates from clients uniformly to find potential collaborators (large $\epsilon$).
The number $M$ of sampled clients (neighbors) per round and $\epsilon$ can be tuned based on the specific problem instance. 
We will show the effect of varying the hyperparameters in the experiments.

\para{Tracking the losses for the posterior}
The final practical consideration is the computation of the posterior $w_{ij}^{(t)}$. 
From the E-step (\ref{eq:e_step_posterior}) and the M-step (\ref{eq:m_step_updates}), one can see that $w_{ij}^{(t)}$ is the softmax transformation of the negative accumulative loss $L_{ij}^{(t)}:=\sum_{\tau=1}^{t-1} \ell_{ij}^{(\tau)}$ over rounds (see \cref{app:derivations} for derivation). 
However, the accumulative loss can be sensitive to noise and initialization. 
If one of the models, say $\vphi_j$, performs slightly better than other models for client $i$ at the beginning of training, 
then client $i$ is likely to sample $\vphi_j$ more frequently, thus enforcing the use of $\vphi_j$ even when other better models exist. 
To address this, we instead keep track of the exponential moving average of the loss with a momentum parameter $\beta\in[0,1)$,  $\widehat{L}_{ij}^{(t)}=(1-\beta)\widehat{L}_{ij}^{(t-1)}+\beta l_{ij}^{(t)}$, and compute $w_{ij}^{(t)}$ using $\widehat{L}_{ij}^{(t)}$. 
This encourages clients to seek new collaborators rather than focusing on existing ones.



\section{Experiments}
\label{sec:exp}
\subsection{Experimental settings}
\label{subsec:Experimental Settings}
We conduct a range of experiments to evaluate the performance of our proposed {\ourshort} with multiple datasets.
Additional experiment details and results can be found in \cref{app:exp_details}.

\para{Datasets}
We compare different methods on
several real-world datasets. 
We evaluate on image-classification tasks with the CIFAR-10, CIFAR-100~\citep{krizhevsky2009learning}, and Office-Home\footnote{This dataset has been made publically available for research purposes only.}~\citep{venkateswara2017deep} datasets. 
Particularly, we consider a non-IID data partition among clients by first splitting data by labels into several groups with disjoint label sets. 
Each group is considered a distribution, and each client samples from one distribution to form its local data. 
For each client, we randomly divide the local data into 80\% training data and 20\% test data.

\para{Baseline methods}
We compare our {\ourshort} to several federated learning baselines. 
FedAvg~\citep{mcmahan2017communication} trains a single global model for every client.
We also compare to other personalized FL approaches including FedAvg with local tuning (FedAvg+)~\citep{jiang2019improving}, Clustered FL~\citep{sattler2020clustered}, FedEM~\citep{marfoq2021federated}\footnote{We use implementations from \url{https://github.com/omarfoq/FedEM} for Clustered FL and FedEM}, FedFomo~\citep{zhang2021personalized}, as well as a local training baseline. 
All accuracy results are reported in mean and standard deviation across different random data split and random training seeds. 
Unless specified otherwise, we use 3 neighbors with $\epsilon=0.3$ and momentum $\beta=0.6$ as the default hyperparamters for {\ourshort} in all experiments. 
For FedEM, we use 4 components, which provides sufficient capacity to accommodate different numbers of label groups (or data distributions). 
For FedFomo, we hold out $20\%$ of the training data for client weight calculations. 
For FedAvg+, we follow \cite{marfoq2021federated} and update the local model with 1 epoch of local training. 

\para{Training settings}
For all models, we use the Adam optimizer with learning rate $0.01$. 
CIFAR experiments use 150 rounds of training, while Office-Home experiments use 400 rounds. 
CIFAR-10 results are reported across 5 different data splits and 3 different training seeds for each data split. CIFAR-100 and Office-Home results are reported across 3 different data splits with a different training seed for each split. 
\subsection{Performance comparison}
\label{subsec:Performance Comparison}

The performance of each FL method is shown in \cref{tab:performance}. Following the settings introduced by \cite{marfoq2021federated}, each client is evaluated on its own local testing data and the average accuracies weighted by local dataset sizes are reported. 
We observe that {\ourshort} has the best performance across all datasets and number of data distributions. 
\begin{table}[htb]
    \caption{
    Accuracy (in percentage) with different number of data distributions. Best results in bold.}
    \label{tab:performance}
    \centering
    \resizebox{\linewidth}{!}{
    \begin{tabular}{lccccccccc}\toprule[1.5pt]
         & \multicolumn{3}{c}{CIFAR-10 \# of distributions}& \multicolumn{3}{l}{CIFAR-100 \# of distributions}& \multicolumn{3}{l}{Office-Home \# of distributions}\\ \cmidrule{2-10}
        \ Method & 2 & 3 & 4  & 2 & 3 & 4  & 2 & 3 & 4 \\ \toprule
        FedAvg
        & 11.44 {\tiny $\pm$ 3.28} & 11.73 {\tiny $\pm$ 3.68} & 13.93 {\tiny $\pm$ 5.74}
        & 21.28 {\tiny $\pm$ 5.04} & 17.41 {\tiny $\pm$ 3.27} & 18.36 {\tiny $\pm$ 3.68}
        & 66.58 {\tiny $\pm$ 1.88} & 53.36 {\tiny $\pm$ 4.21} & 51.25 {\tiny $\pm$ 4.37} \\ 
        FedAvg+
        & 12.45 {\tiny $\pm$ 8.46} & 29.86 {\tiny $\pm$ 17.85} & 45.65 {\tiny $\pm$ 21.61}  
        & 29.95 {\tiny $\pm$ 1.07} & 35.33 {\tiny $\pm$ 1.77} & 36.17 {\tiny $\pm$ 3.27}   
        & 80.21 {\tiny $\pm$ 0.68} & 81.88 {\tiny $\pm$ 0.91} & 84.50 {\tiny $\pm$ 1.37}\\
        Local Training 
        & 40.09 {\tiny $\pm$ 2.84} & 55.27 {\tiny $\pm$ 3.11} & 69.03 {\tiny $\pm$ 7.05}  
        & 16.60 {\tiny $\pm$ 0.64} & 25.99 {\tiny $\pm$ 2.38} & 31.05 {\tiny $\pm$ 1.68}  
        & 76.76 {\tiny $\pm$ 0.23} & 83.30 {\tiny $\pm$ 0.32} & 88.05 {\tiny $\pm$ 0.44}\\
        Clustered FL
        & 11.50 {\tiny $\pm$ 3.65} & 15.24 {\tiny $\pm$ 5.79} & 16.43 {\tiny $\pm$ 5.17}  
        & 20.93 {\tiny $\pm$ 3.57} & 23.15 {\tiny $\pm$ 7.04} & 15.15 {\tiny $\pm$ 0.60}   
        & 66.58 {\tiny $\pm$ 1.88} & 53.36 {\tiny $\pm$ 4.21} & 51.25 {\tiny $\pm$ 4.37}\\
        FedEM
        & 41.21 {\tiny $\pm$ 10.83} & 55.08 {\tiny $\pm$ 6.71} & 63.61 {\tiny $\pm$ 9.93}  
        & 26.25 {\tiny $\pm$ 2.40} & 24.11 {\tiny $\pm$ 7.36} & 19.23 {\tiny $\pm$ 2.58}   
        & 22.59 {\tiny $\pm$ 1.95} & 28.72 {\tiny $\pm$ 1.83} & 22.46 {\tiny $\pm$ 3.99}\\
        FedFomo 
        & 42.24 {\tiny $\pm$ 8.32} & 59.45 {\tiny $\pm$ 5.57} & 71.05 {\tiny $\pm$ 6.09} 
        & 12.15 {\tiny $\pm$ 0.57} & 20.49 {\tiny $\pm$ 2.90} & 24.53 {\tiny $\pm$ 2.77}   
        & 78.61 {\tiny $\pm$ 0.78} & 82.57 {\tiny $\pm$ 0.24} & 87.86 {\tiny $\pm$ 0.77}\\
        \ourshort 
        & \textbf{56.61} {\tiny $\pm$ 2.51} & \textbf{69.76} {\tiny $\pm$ 2.25} & \textbf{78.22} {\tiny $\pm$ 4.80}  
        & \textbf{30.95} {\tiny $\pm$ 1.62} & \textbf{39.19} {\tiny $\pm$ 1.64} & \textbf{41.41} {\tiny $\pm$ 1.07}  
        & \textbf{83.56} {\tiny $\pm$ 0.49} & \textbf{90.28} {\tiny $\pm$ 0.75} & \textbf{93.76} {\tiny $\pm$ 0.12}\\
    \end{tabular}}
\end{table}
Here, local training can be seen as an indicator to assess if other methods benefit from client collaboration as local training has no collaboration at all. 
We observe that our proposed {\ourshort} is the only method that consistently outperforms local training, meaning that {\ourshort} is the only method that consistently encourages effective client collaborations. 
Notably, both FedEM and FedFomo performs comparably well to {\ourshort} on CIFAR-10 but worse when the dataset becomes more complex like CIFAR-100. 
This indicates that building the right collaborations among clients becomes a harder problem for more complex datasets. 
Moreover, FedEM can become worse as the number of distributions increases, even worse than local training, showing that it is increasingly hard for clients to participate effectively under the FedEM framework for complex problems with more data distributions. 

In addition, Clustered FL has similar performance to FedAvg, indicating that it is hard for Clustered FL to split into the right clusters. 
In Clustered FL~\citep{sattler2020clustered}, every client starts in the same cluster and cluster split only happens when the FL objective is close to a stationary point, i.e.\ the norm of averaged gradient update from all clients inside the cluster is small. 
Therefore, in a non-i.i.d setting like ours, the averaged gradient update might always be noisy and large, as clients with different distributions are pushing diverse updates to the clustered model. 
As a result, the cluster splitting rarely happens which makes clustered FL more like FedAvg. 

\subsection{Client collaboration}

\begin{figure}[htb]
    \centering
    \begin{subfigure}[b]{0.24\linewidth}
        \centering
        \includegraphics[width=\textwidth]{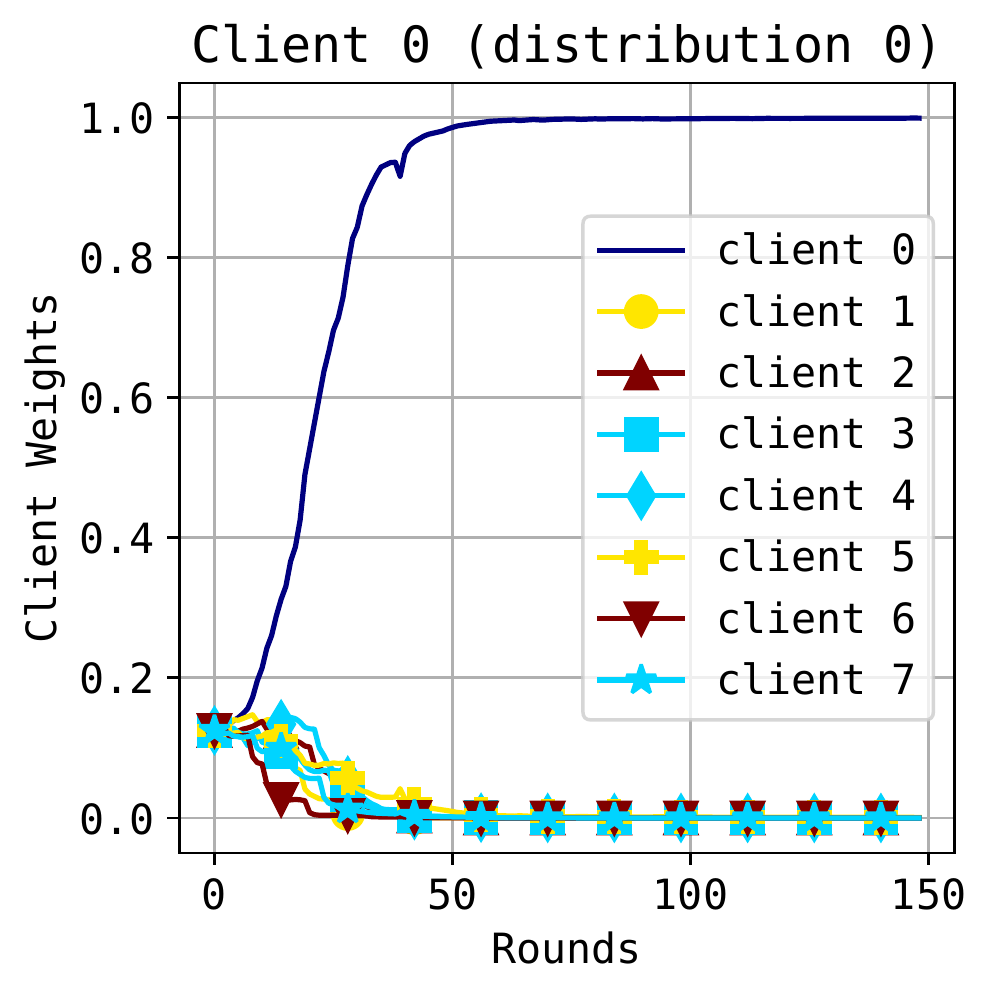}
    \end{subfigure}
    \begin{subfigure}[b]{0.24\linewidth}
        \centering
        \includegraphics[width=\textwidth]{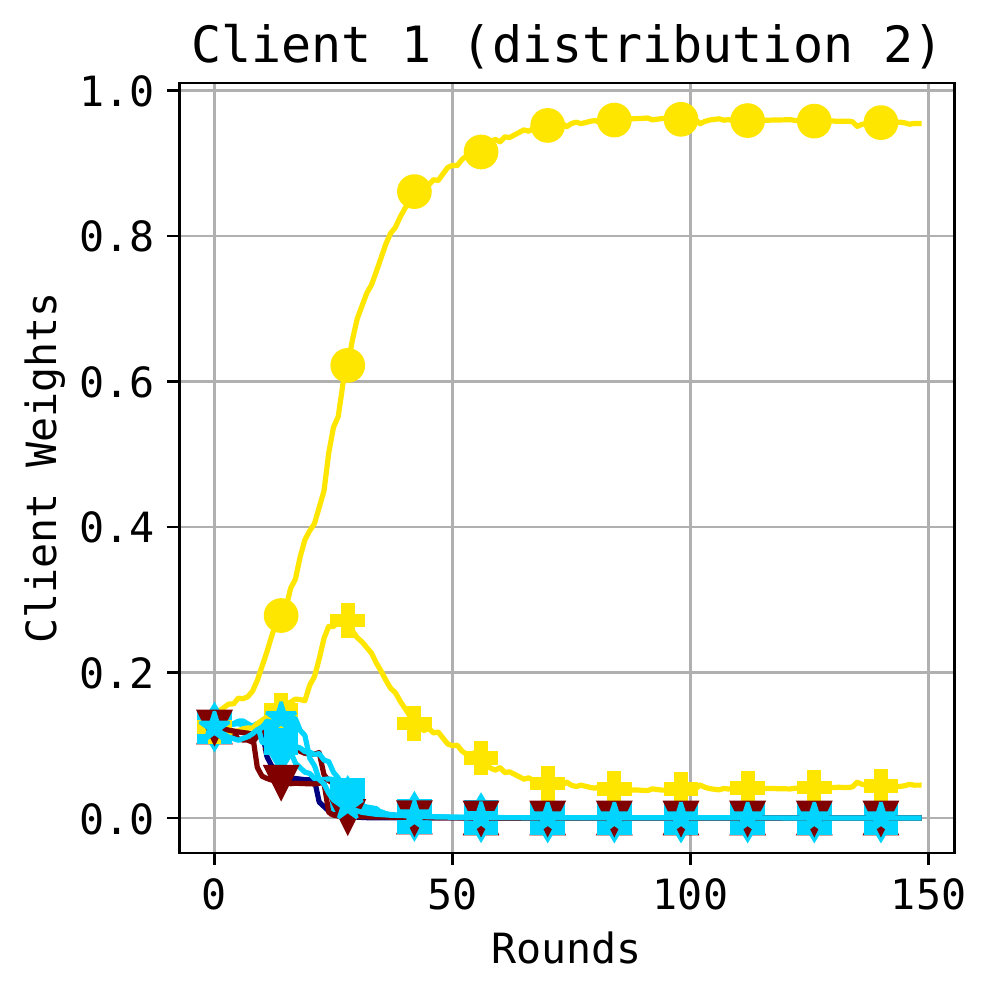}
    \end{subfigure}
    \begin{subfigure}[b]{0.24\linewidth}
        \centering
        \includegraphics[width=\textwidth]{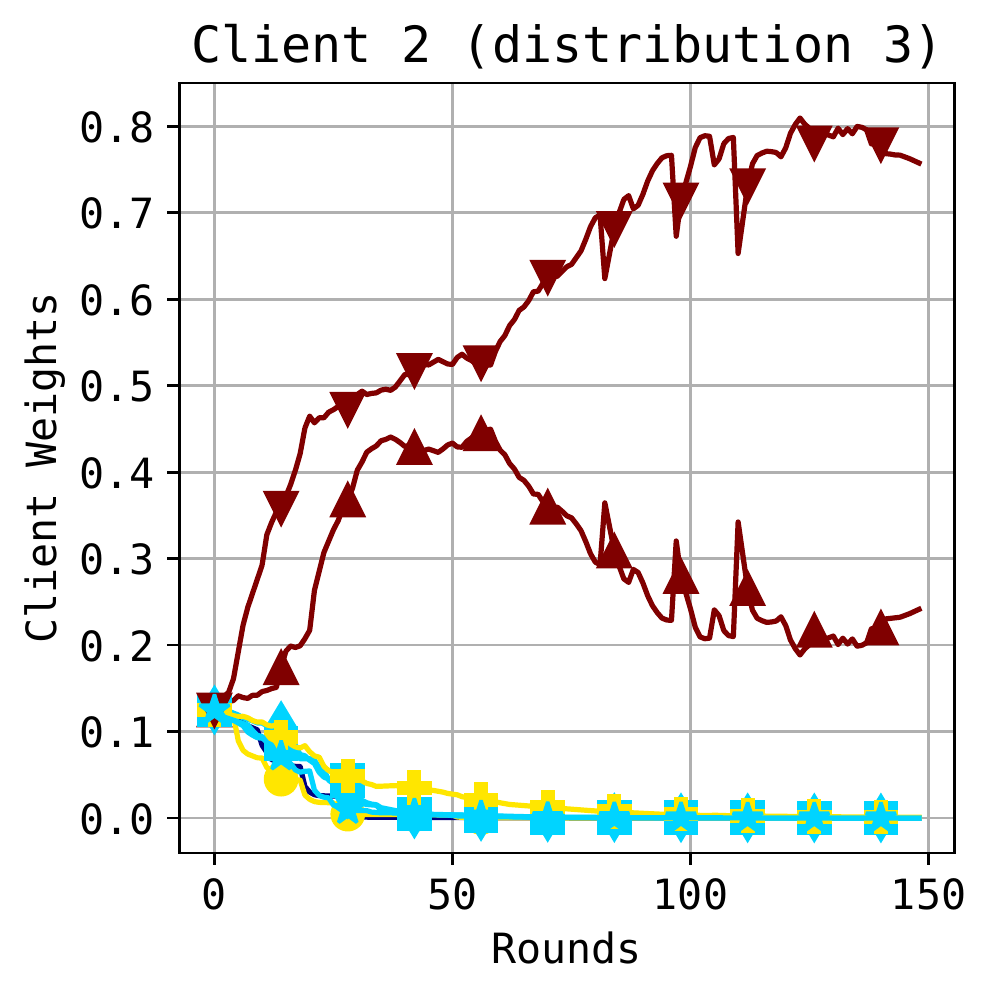}
    \end{subfigure}
    \begin{subfigure}[b]{0.24\linewidth}
        \centering
        \includegraphics[width=\textwidth]{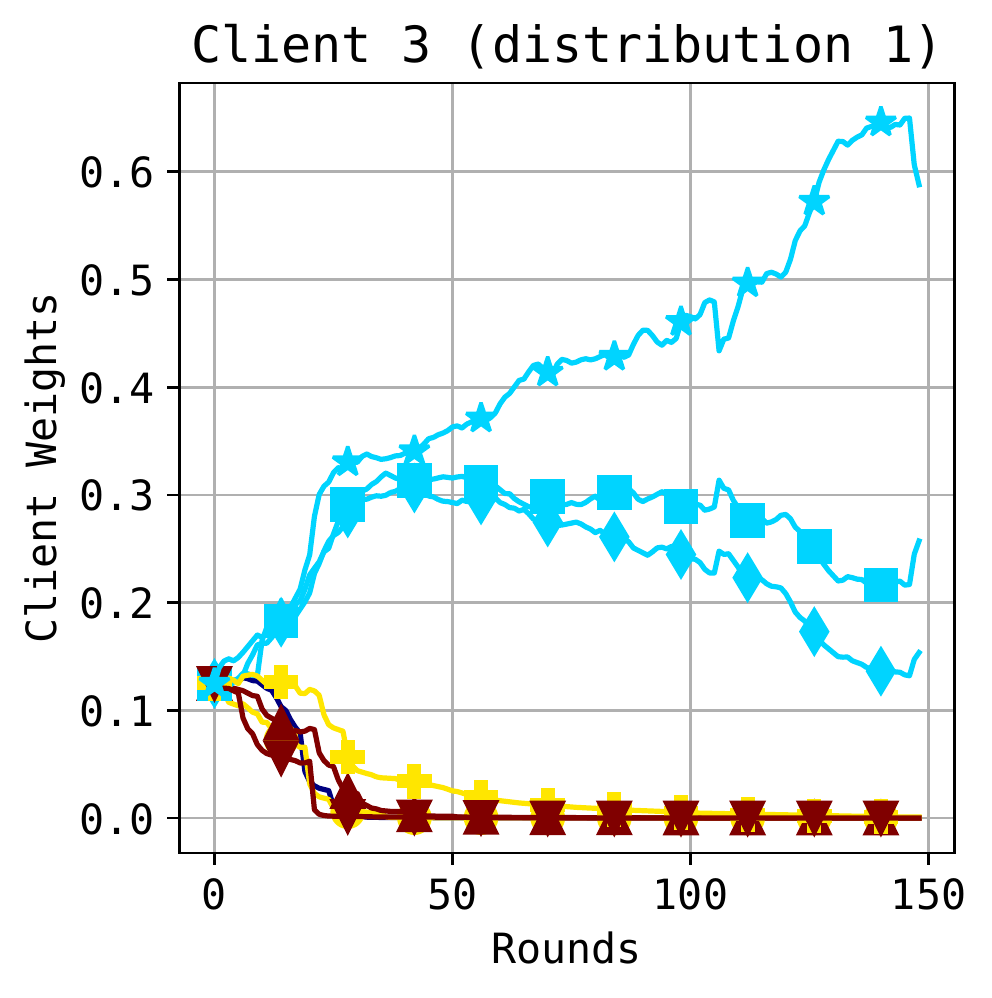}
    \end{subfigure}
    \begin{subfigure}[b]{0.24\linewidth}
        \centering
        \includegraphics[width=\textwidth]{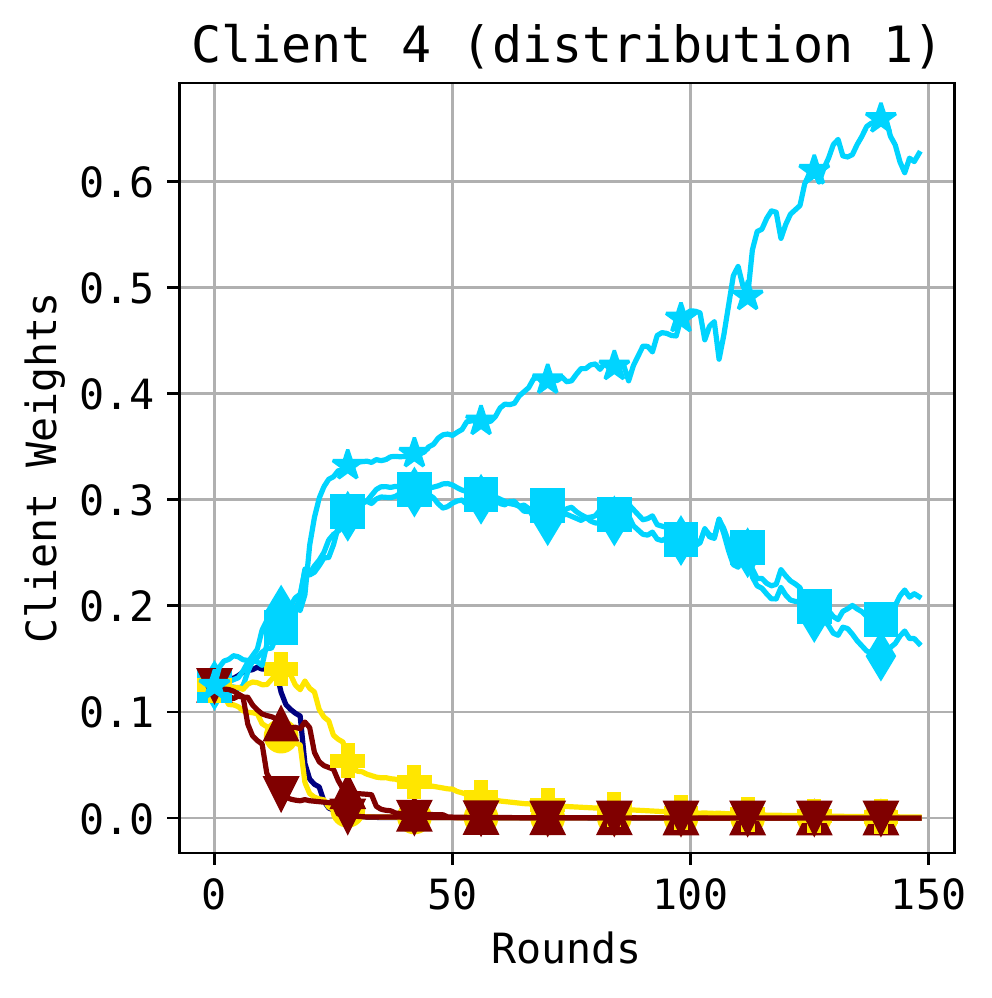}
    \end{subfigure}
    \begin{subfigure}[b]{0.24\linewidth}
        \centering
        \includegraphics[width=\textwidth]{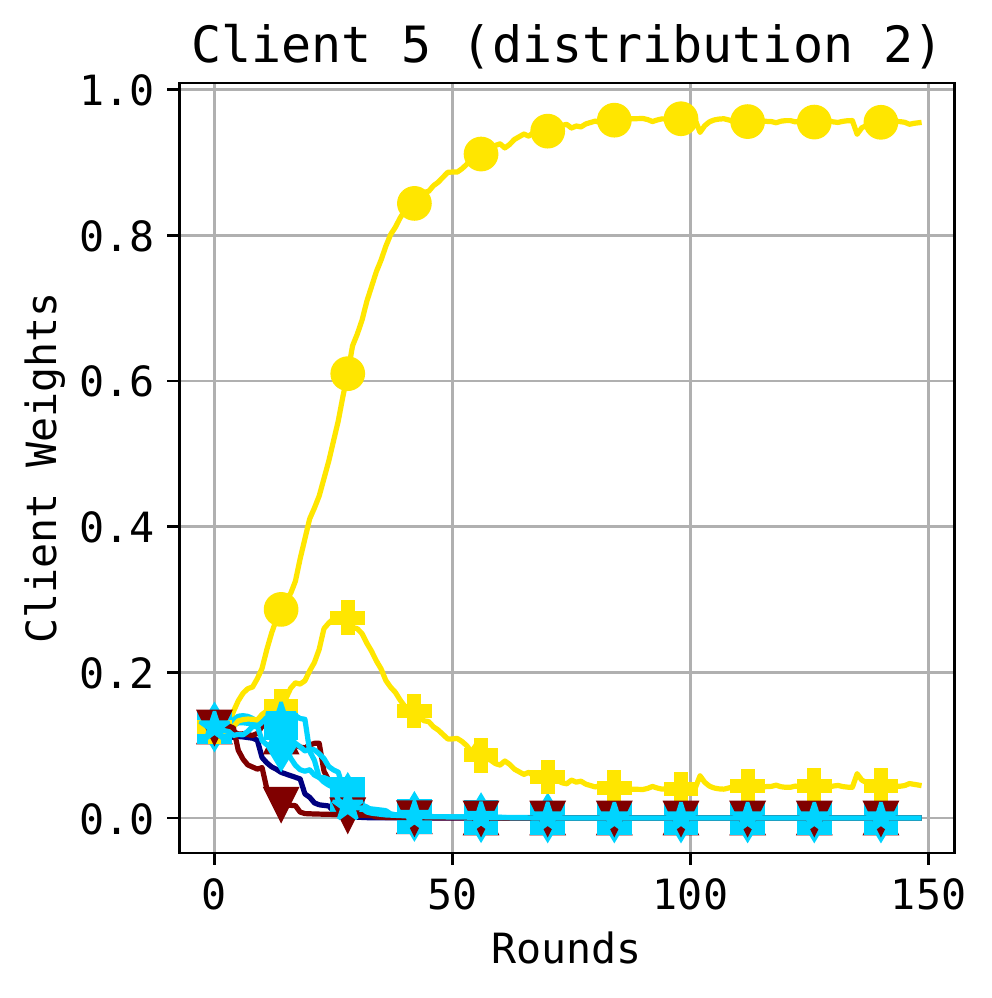}
    \end{subfigure}
    \begin{subfigure}[b]{0.24\linewidth}
        \centering
        \includegraphics[width=\textwidth]{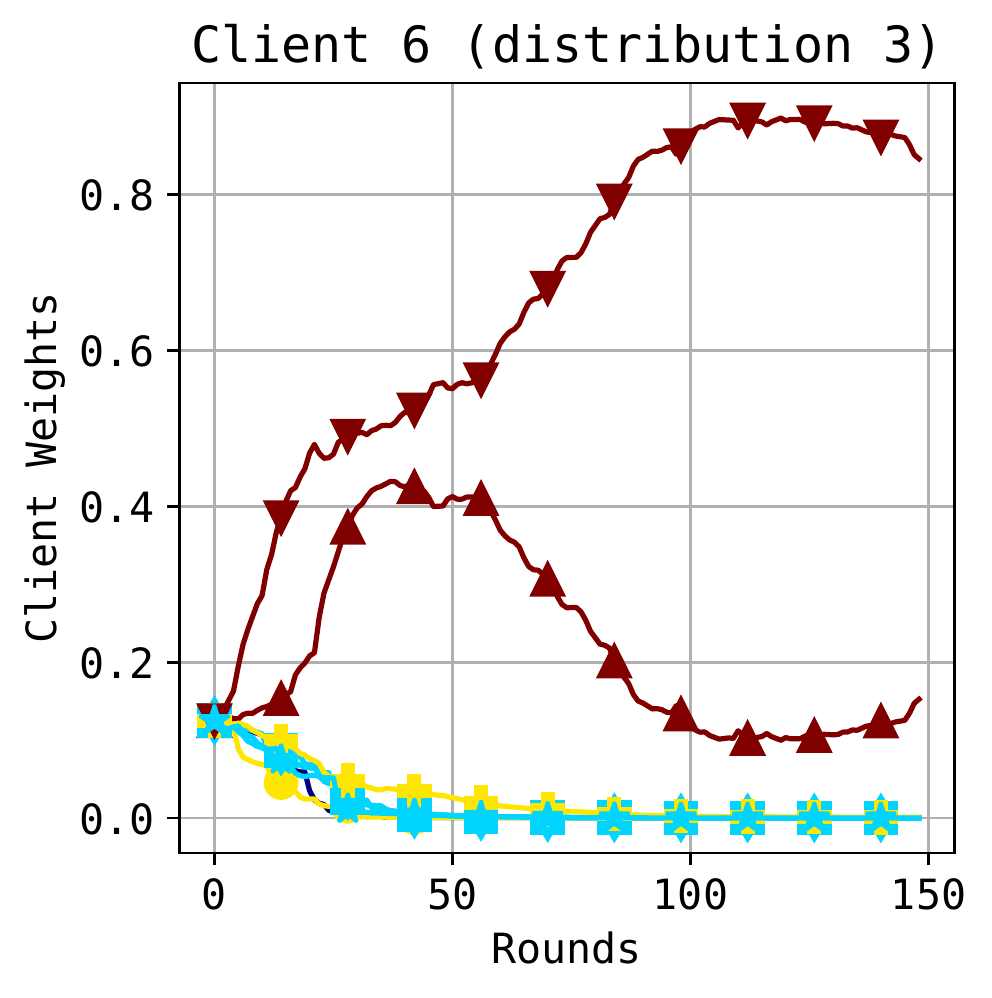}
    \end{subfigure}
    \begin{subfigure}[b]{0.24\linewidth}
        \centering
        \includegraphics[width=\textwidth]{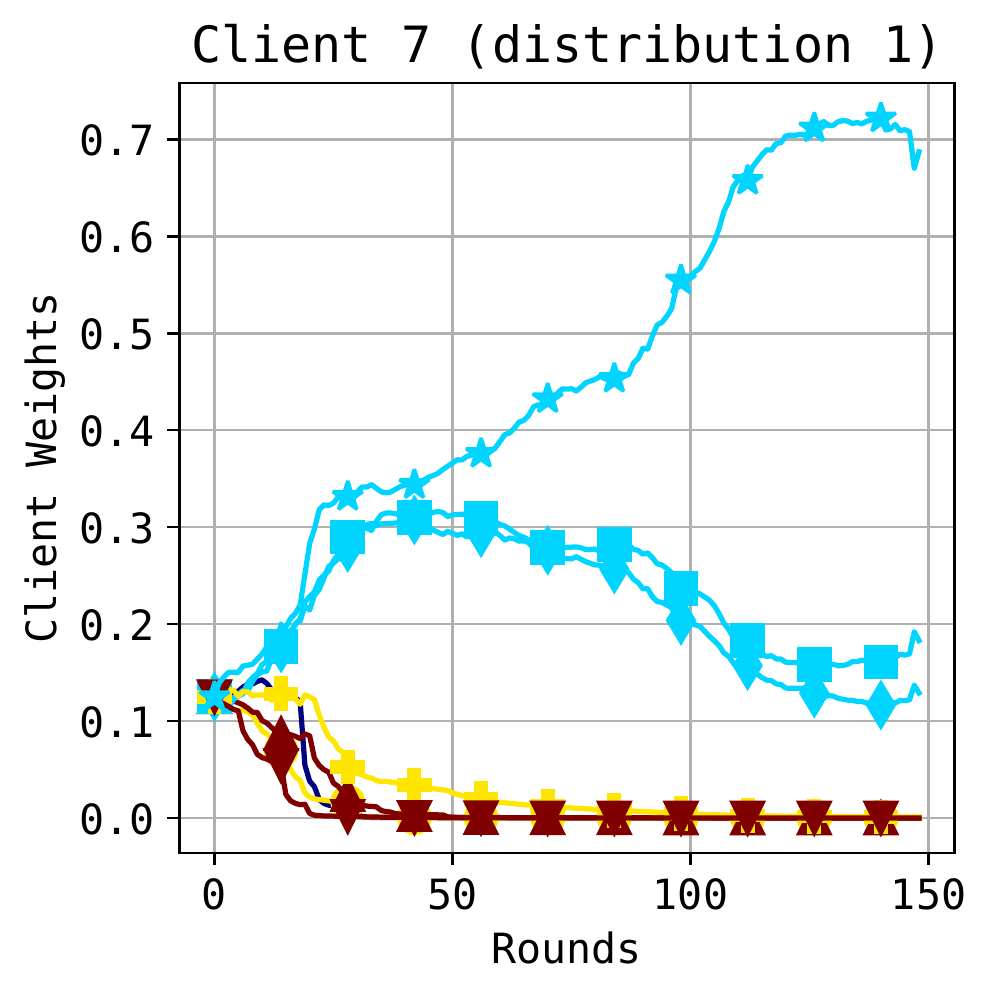}
    \end{subfigure}
    \caption{Client weights over time of {\ourshort} with CIFAR100 data and four different client distributions. Clients are color coded by their private data's distribution.}
    \label{fig:client_weight_ours}
\end{figure}

In this section, we investigate client collaboration by plotting the personalized client weights $w_{ij}^{(t)}$ of {\ourshort} over training. 
With different client data distributions, we show that {\ourshort} can assign more weight to clients from the same distribution. 
As shown in \cref{fig:client_weight_ours}, we observe that clients with similar distributions collaborate to make the final predictions. 
For example, clients 3, 4 and 7 use a mixture of predictions from each other (in light blue) whereas client 0 only uses itself for prediction since it is the only client coming from distribution 0 (in dark blue) in this particular random split. 

\begin{figure}[htb]
    \centering
    \begin{subfigure}[b]{0.24\linewidth}
        \centering
        \includegraphics[width=\textwidth]{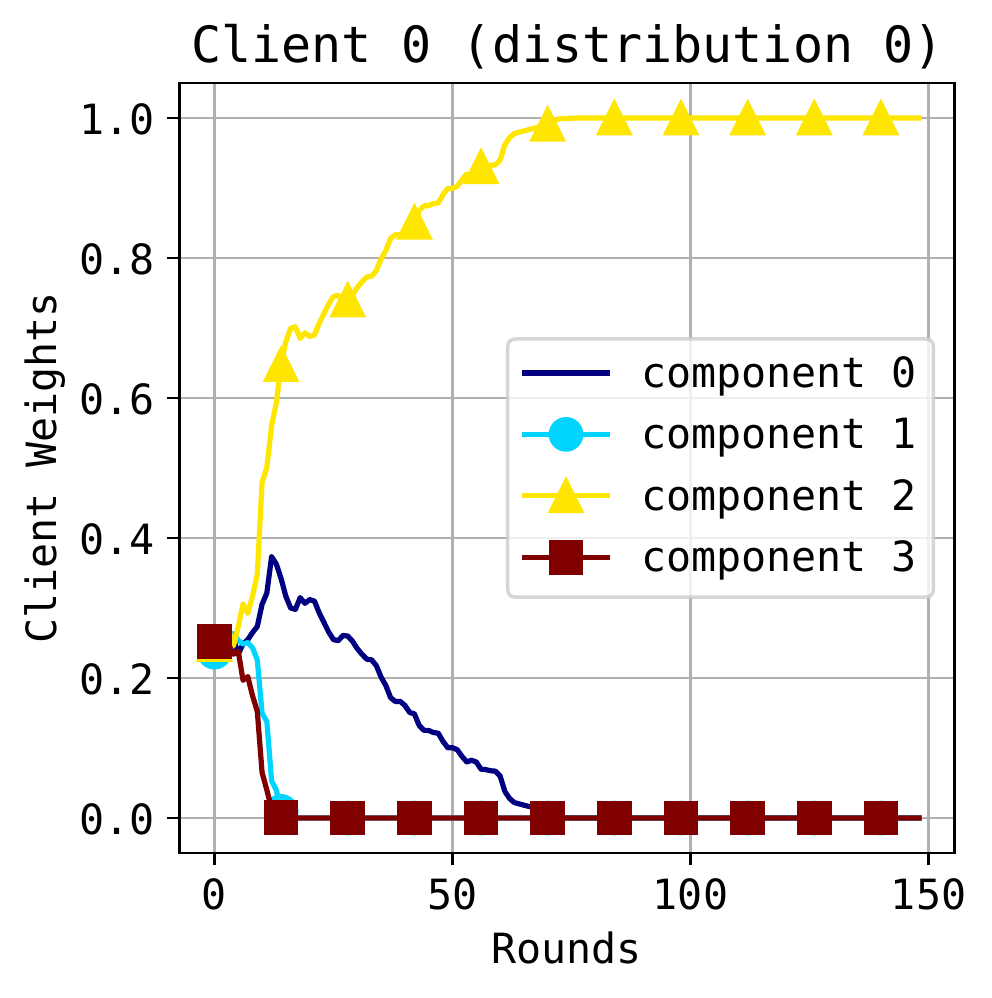}
    \end{subfigure}
    \begin{subfigure}[b]{0.24\linewidth}
        \centering
        \includegraphics[width=\textwidth]{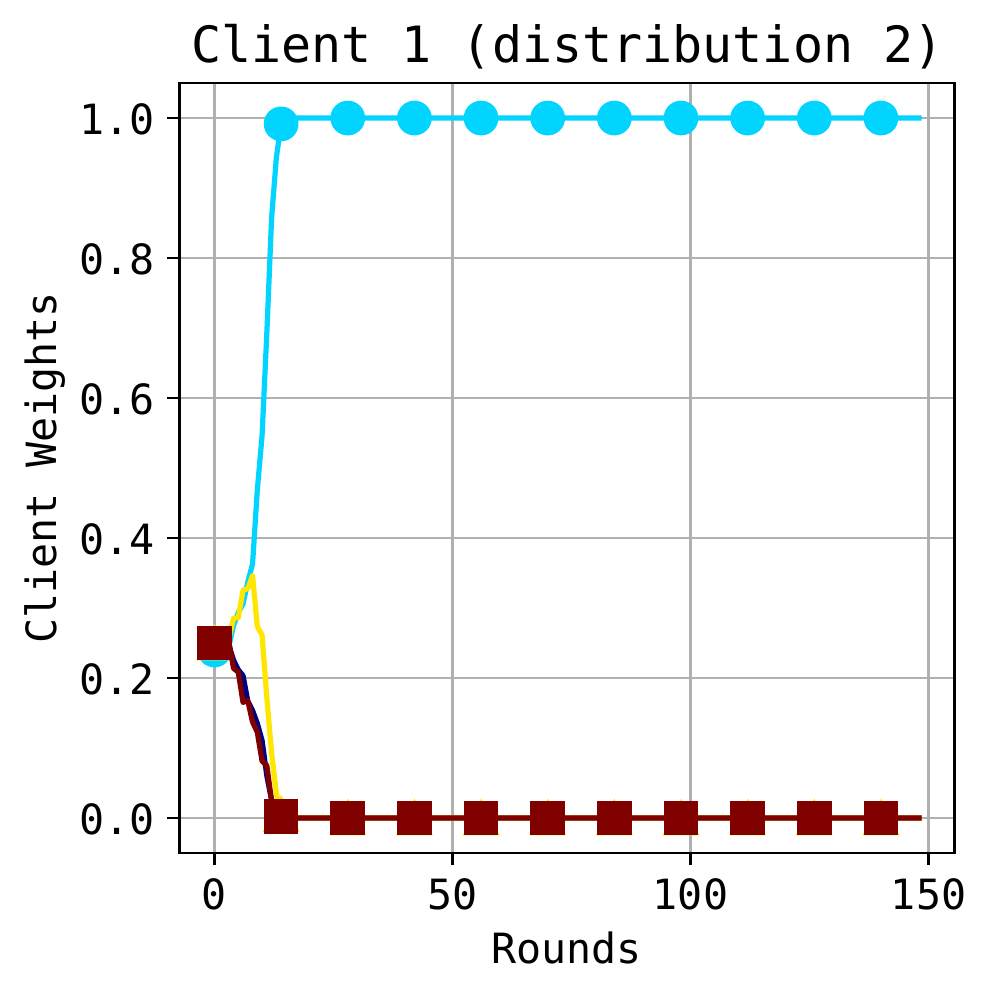}
    \end{subfigure}
    \begin{subfigure}[b]{0.24\linewidth}
        \centering
        \includegraphics[width=\textwidth]{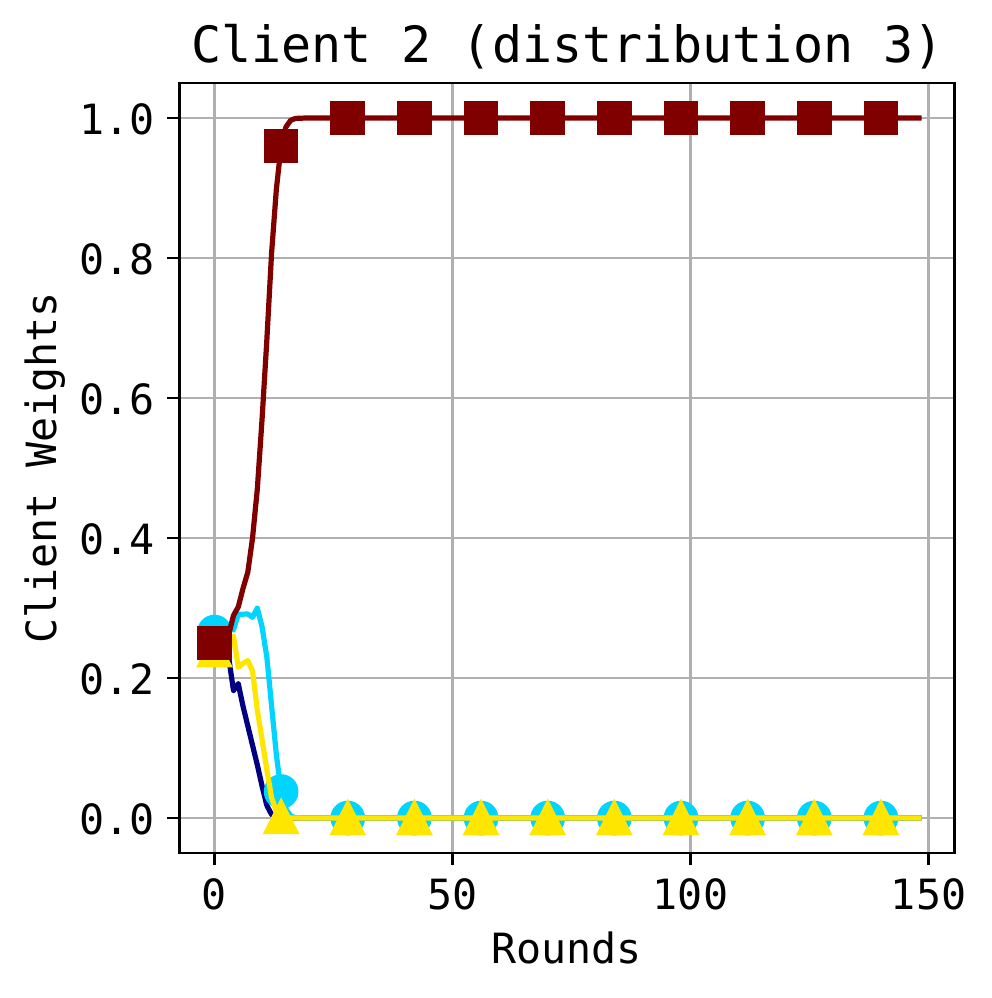}
    \end{subfigure}
    \begin{subfigure}[b]{0.24\linewidth}
        \centering
        \includegraphics[width=\textwidth]{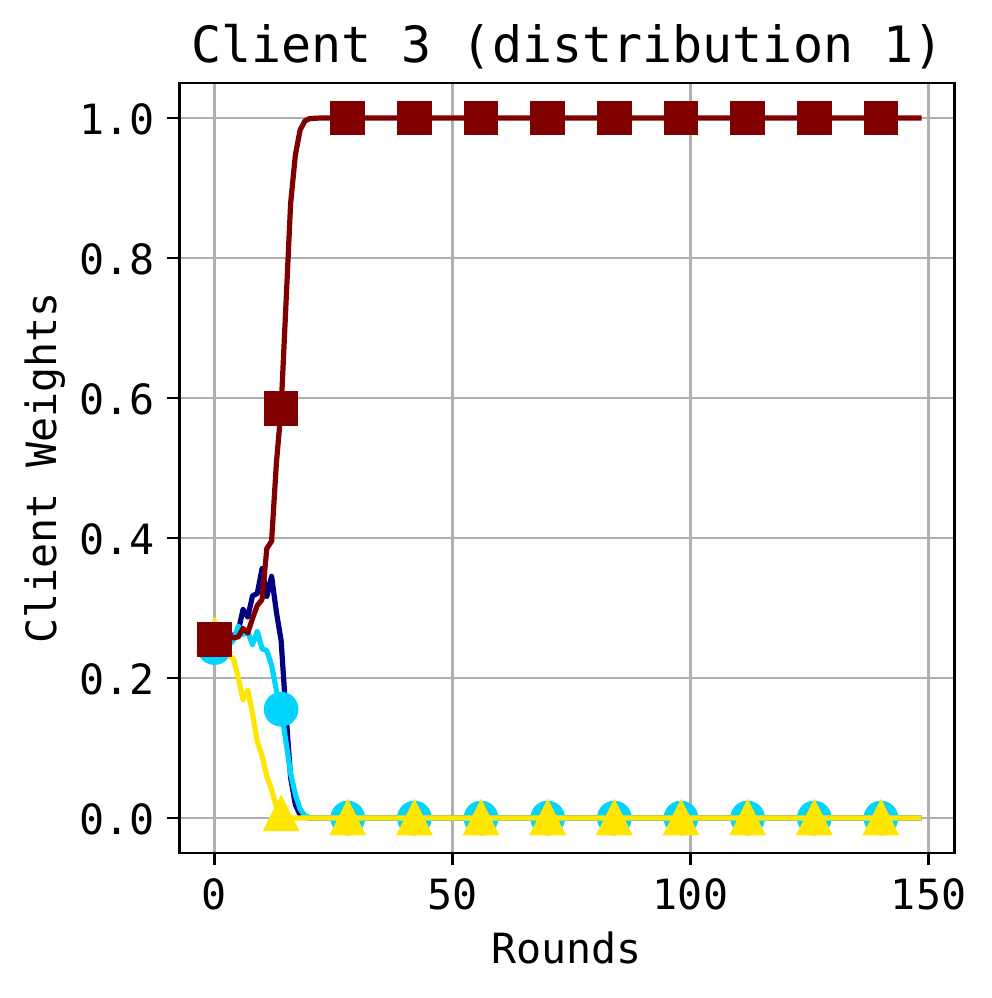}
    \end{subfigure}
    \begin{subfigure}[b]{0.24\linewidth}
        \centering
        \includegraphics[width=\textwidth]{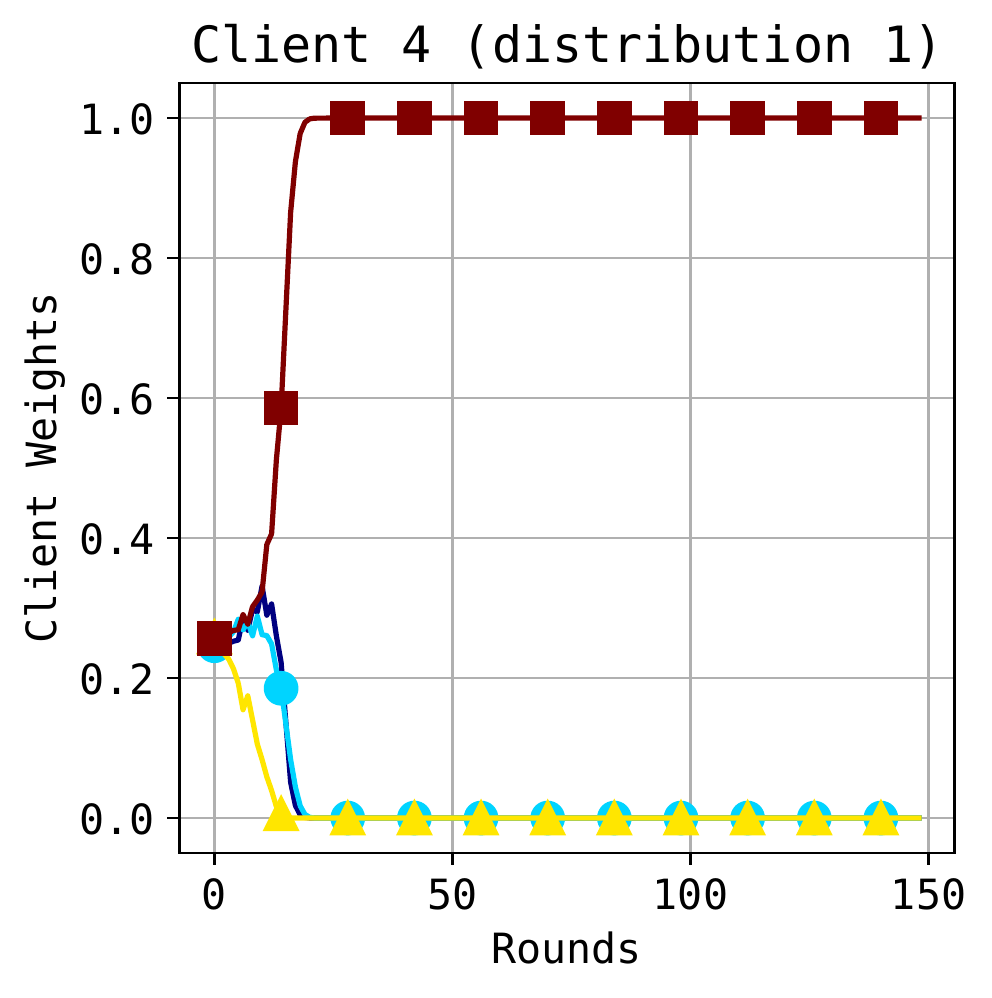}
    \end{subfigure}
    \begin{subfigure}[b]{0.24\linewidth}
        \centering
        \includegraphics[width=\textwidth]{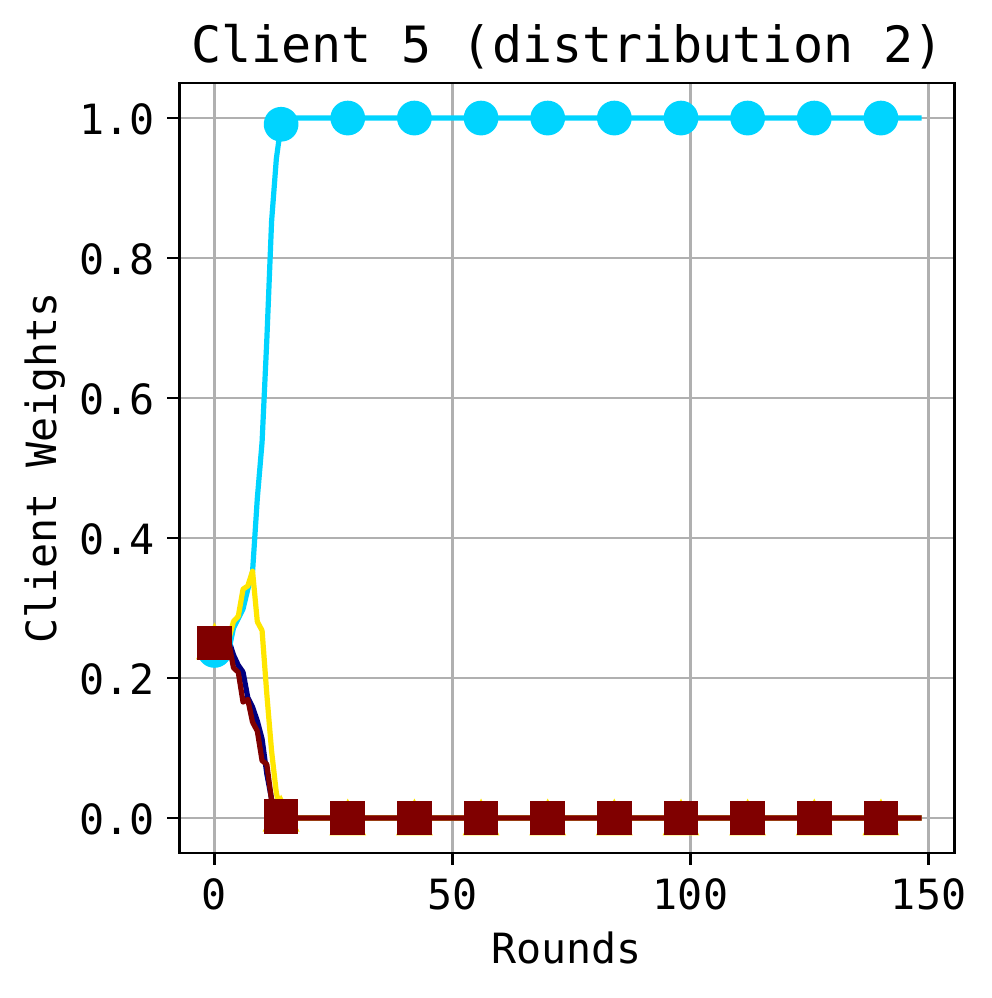}
    \end{subfigure}
    \begin{subfigure}[b]{0.24\linewidth}
        \centering
        \includegraphics[width=\textwidth]{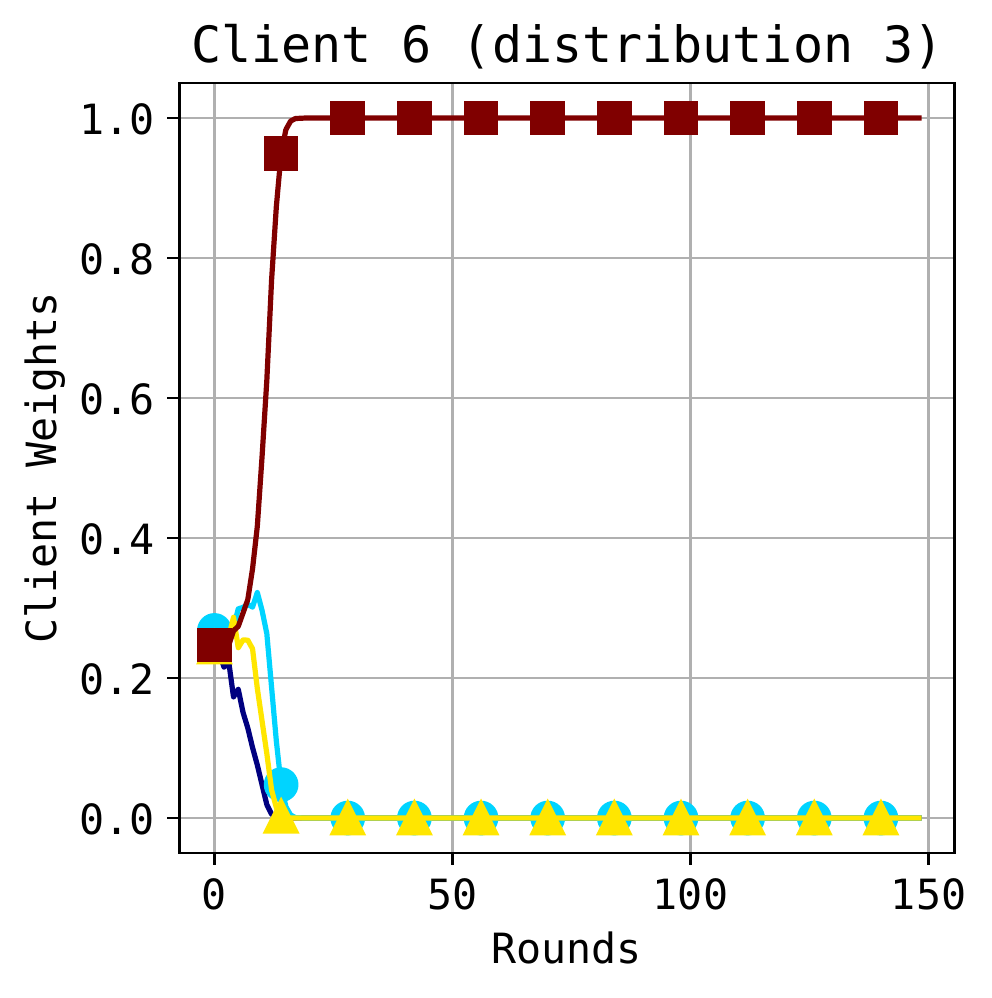}
    \end{subfigure}
    \begin{subfigure}[b]{0.24\linewidth}
        \centering
        \includegraphics[width=\textwidth]{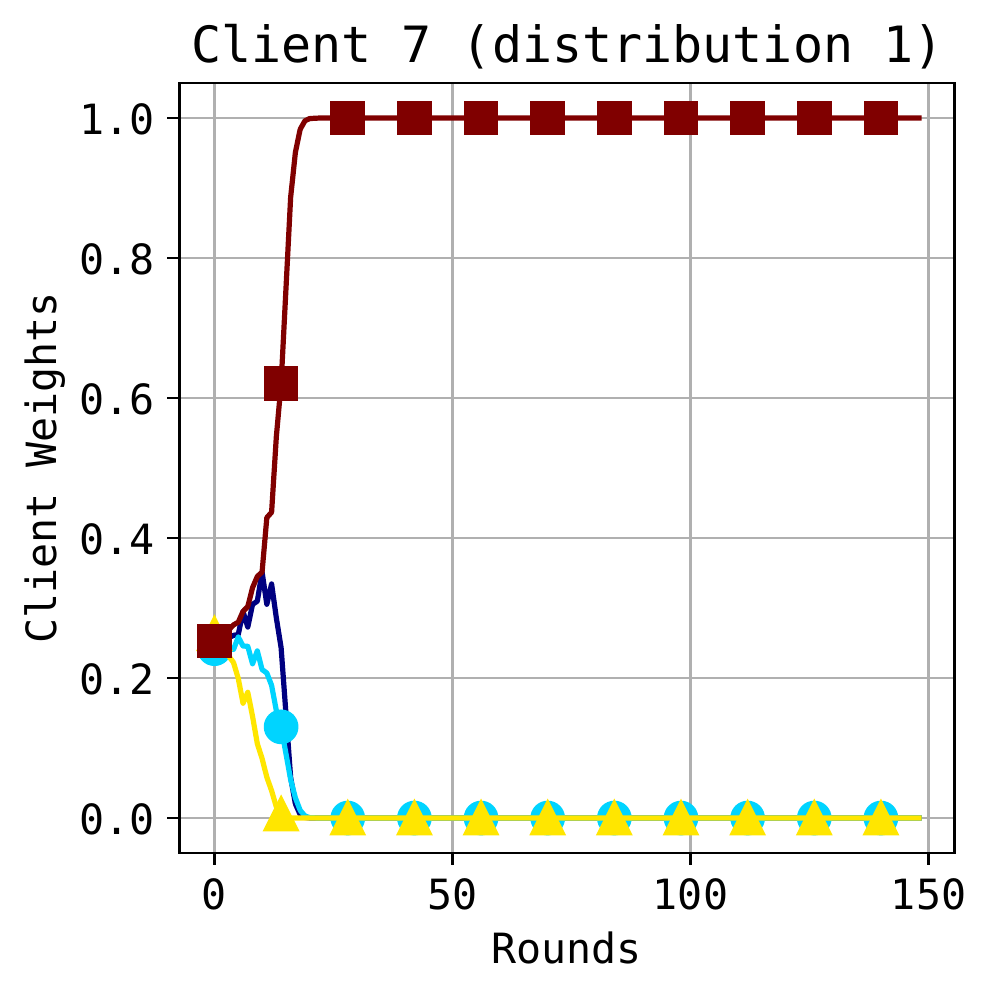}
    \end{subfigure}
    \caption{Component weights over training for FedEM with 4 components, on CIFAR100 data with 4 different client distributions. Clients are color coded by their private data's distribution.}
    \label{fig:client_weight_fedem}
\end{figure}

On the contrary, as shown in \cref{fig:client_weight_fedem}, even with 4 components, FedEM fails to use all of them for predictions for the 4 different data distributions. 
In fact, clients 2, 3, 4, 6 and 7 coming from two different distributions are using only the model of component 3 for prediction, whereas component 0 is never used by any client. 
Based on this, we find {\ourshort} better encourages the clients to collaborate with other similar clients and less with different clients. 
Each client can collaborate as much or as little as they need. 
Additionally, since all the non-similar clients have a weight of (almost) 0, each client only needs a few models from their collaborators for prediction.

\subsection{Effect of using exponential moving average loss}

\begin{figure}[htb]
    \centering
    \begin{subfigure}[b]{0.49\linewidth}
        \centering
        \includegraphics[width=0.49\textwidth]{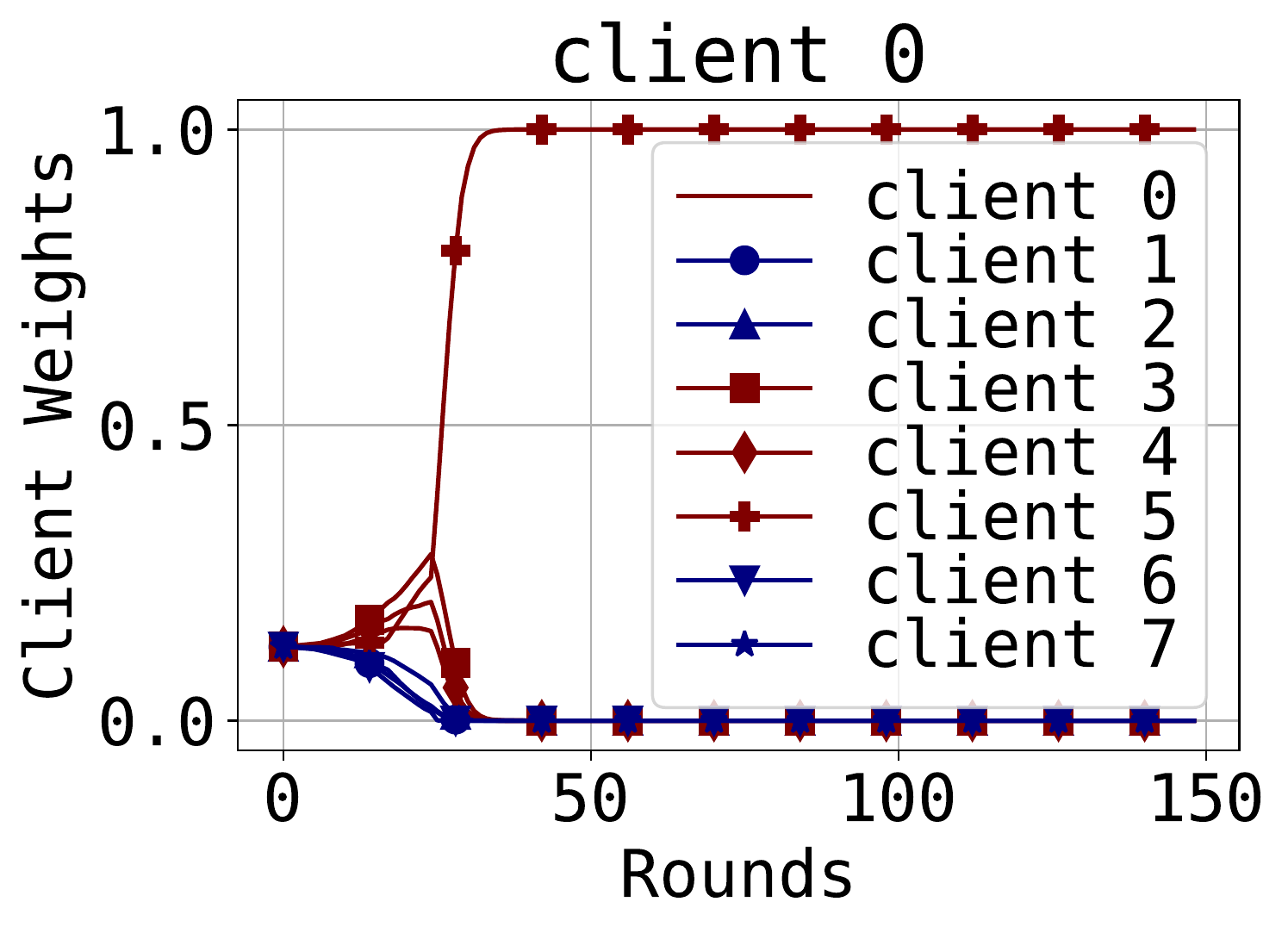}
        \includegraphics[width=0.49\textwidth]{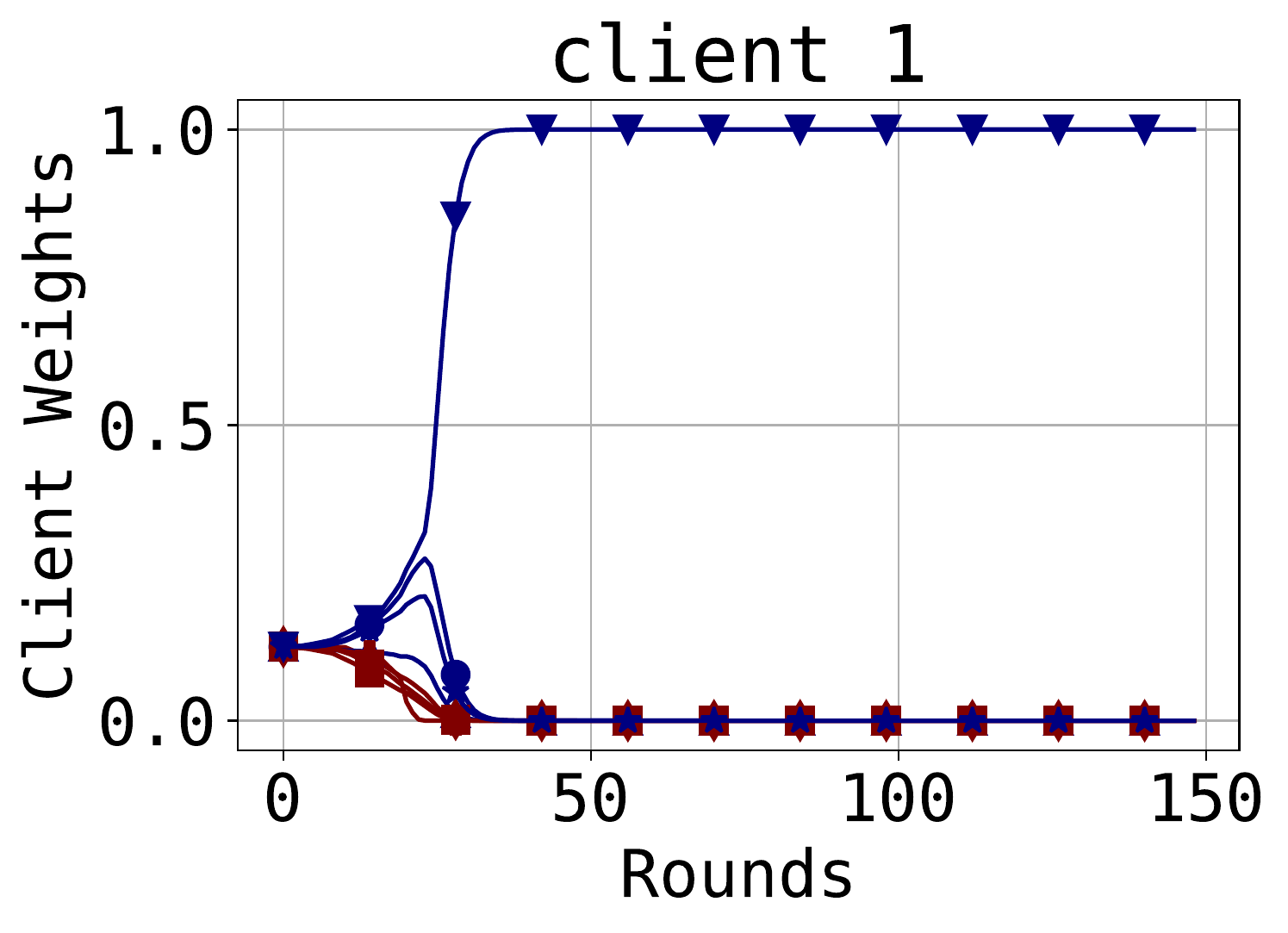}
        \caption{Client weights with accumulative loss.}
        \label{fig:accumulative}
    \end{subfigure}
    \begin{subfigure}[b]{0.49\linewidth}
        \centering
        \includegraphics[width=0.49\textwidth]{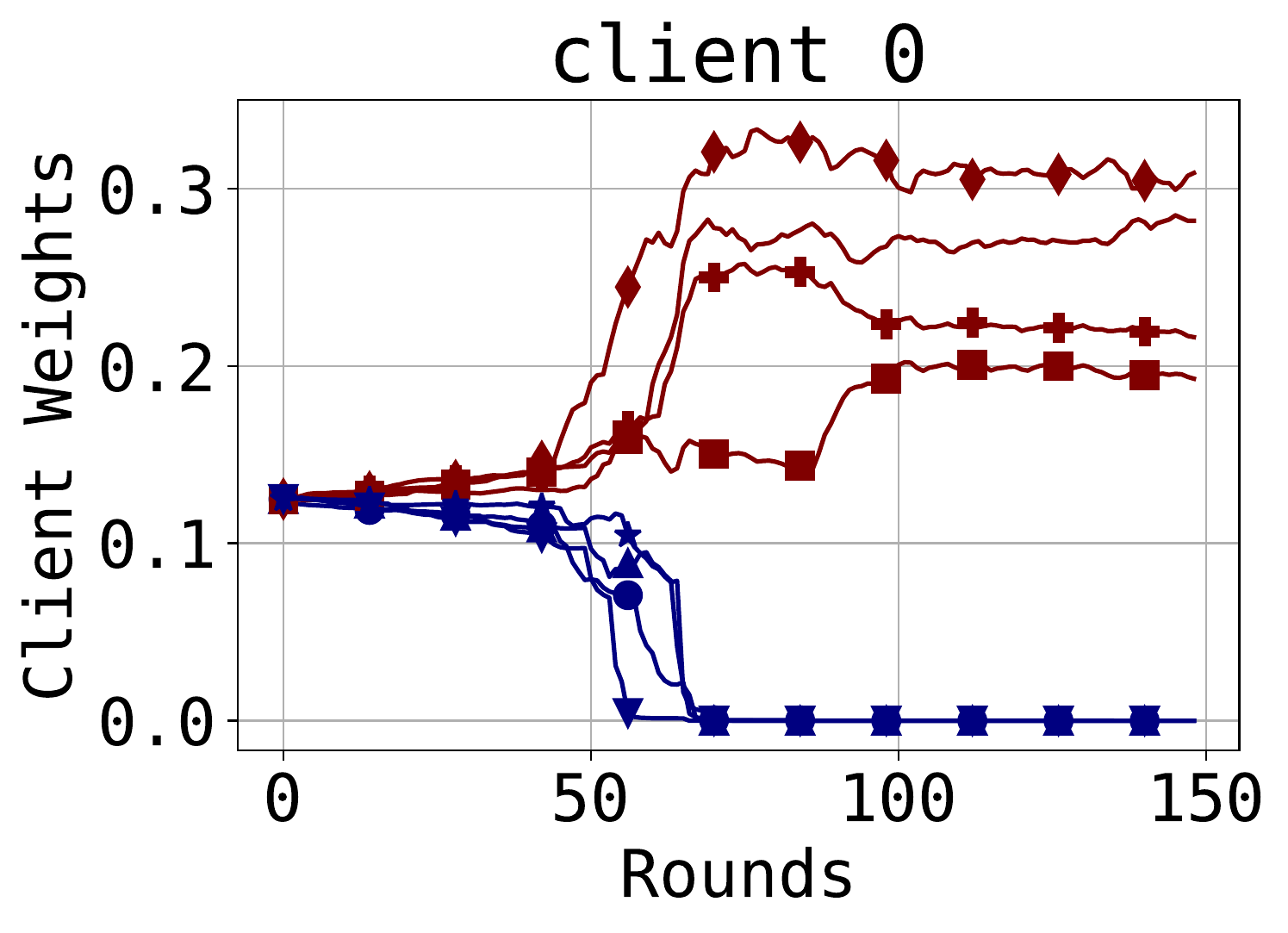}
        \includegraphics[width=0.49\textwidth]{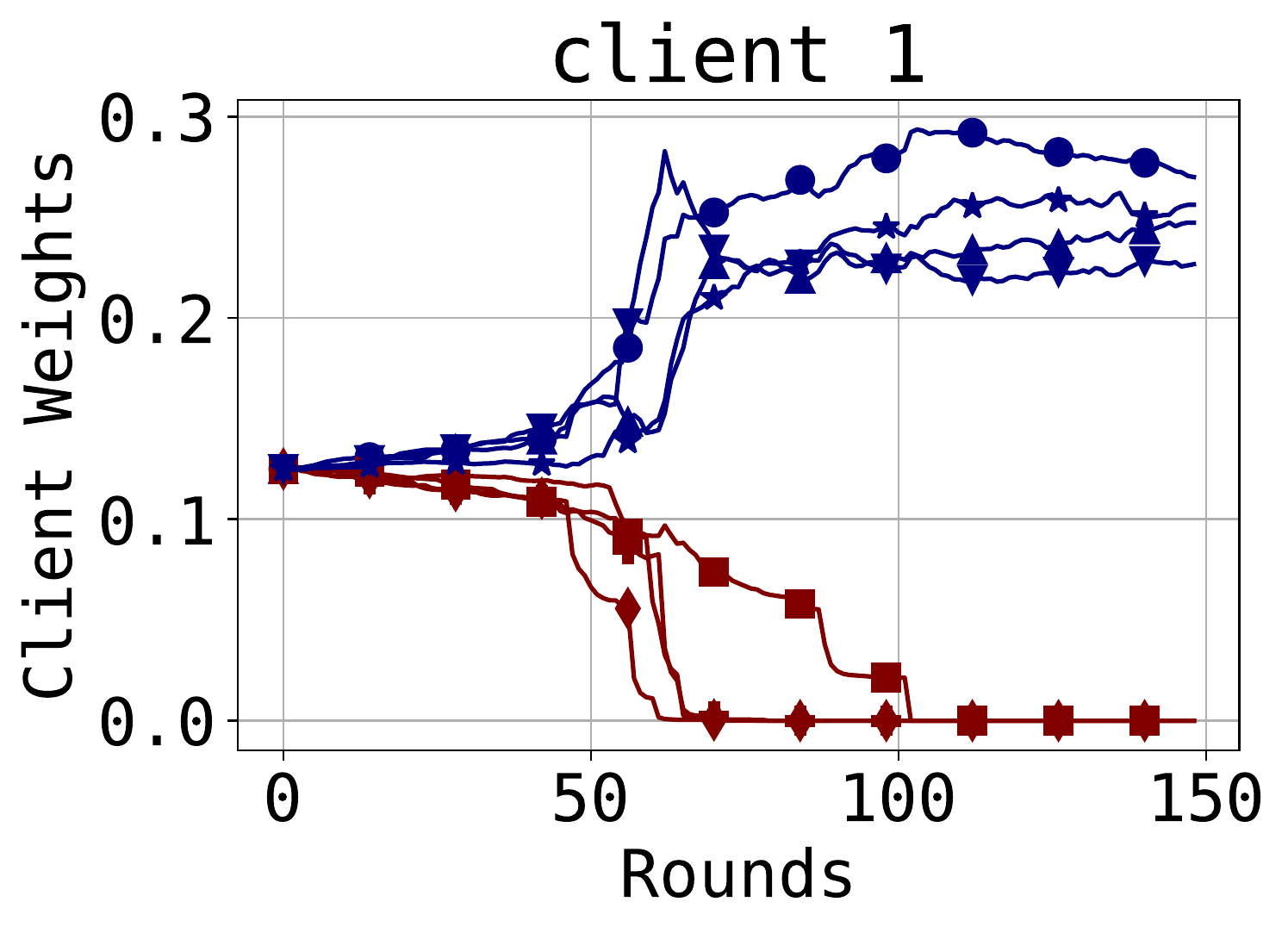}
        \caption{Client weights with exponential moving average.}
        \label{fig:EMA}
    \end{subfigure}
\caption{Effect on client weights with different implementations. The client weights on CIFAR-10 with 2 different client distributions are reported.}
\label{fig:loss_implementations}
\end{figure}


Here, we visualize the effect of using the exponential moving average loss by plotting client weights with both accumulative loss and exponential moving average loss in \cref{fig:loss_implementations}\footnote{We used uniform sampling for \cref{fig:accumulative} ($\epsilon=1$) as most of the client weights are 0 after a few rounds.}. 
We observe that with the accumulative loss in \cref{fig:accumulative}, the client weights quickly converge to one-hot, while with the exponential moving average loss in \cref{fig:EMA}, the client weights are more distributed to similar clients. 
This corresponds to our expectation stated in \cref{subsec:decentralized_federico}: the clients using exponential moving average loss are expected to seek for more collaboration compared to using accumulative loss.

\subsection{Hyperparameter sensitivity}
In this section, we explore the effect of hyperparameters of our proposed {\ourshort}.

\begin{figure}[htb]
    \centering
    \begin{subfigure}[b]{0.32\linewidth}
        \centering
        \includegraphics[width=\textwidth, trim= 4 8 7 4, clip]{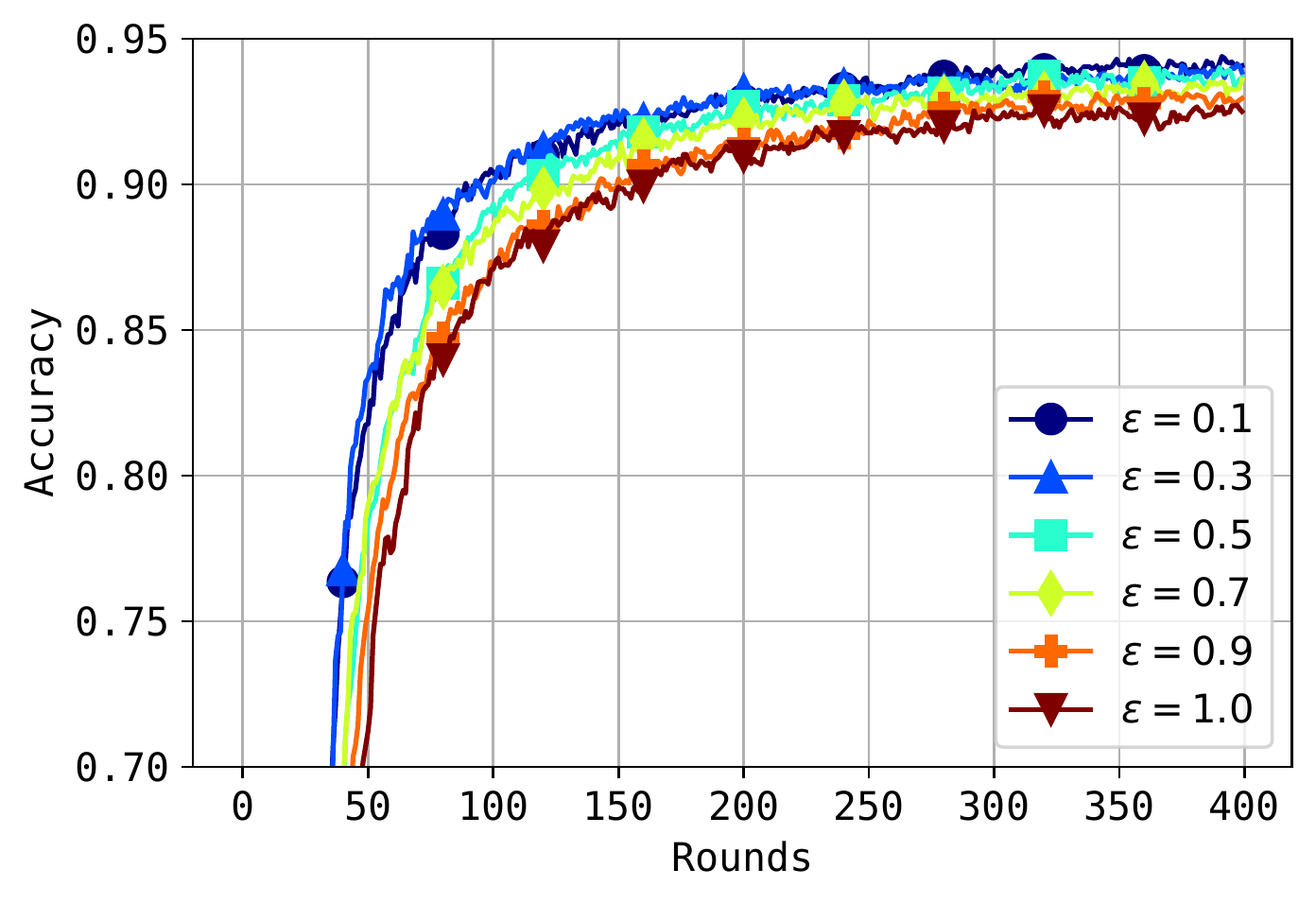}
        \subcaption{$\epsilon$ for sampling.}
        \label{fig:office-eps}
    \end{subfigure}
    \begin{subfigure}[b]{0.32\linewidth}
        \centering
        \includegraphics[width=\textwidth, trim= 4 8 7 4, clip]{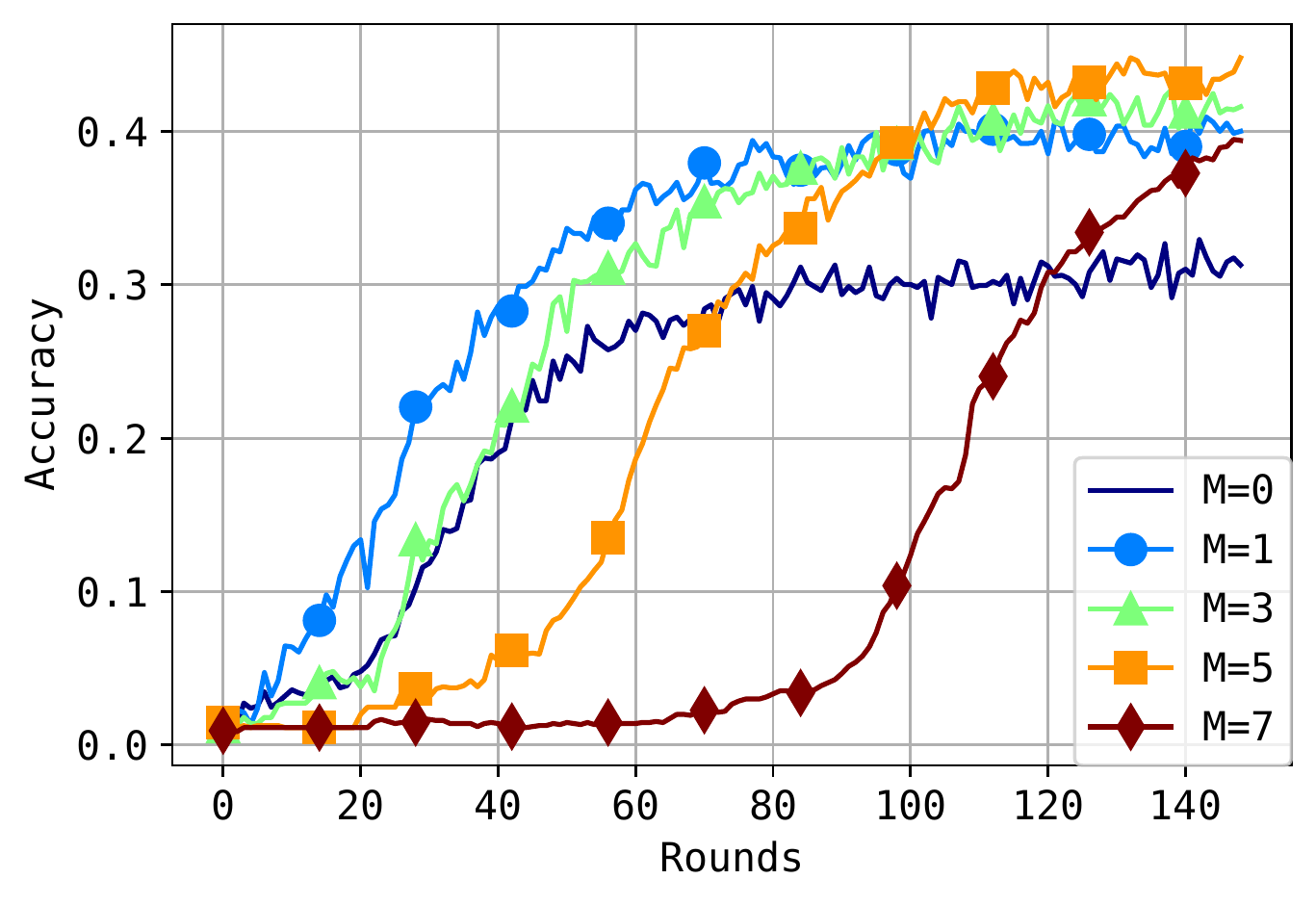}
        \subcaption{\# of sampled neighbors.}
        \label{fig:neighbors}
    \end{subfigure}
    \begin{subfigure}[b]{0.32\linewidth}
        \centering
        \includegraphics[width=\textwidth, trim= 4 8 7 4, clip]{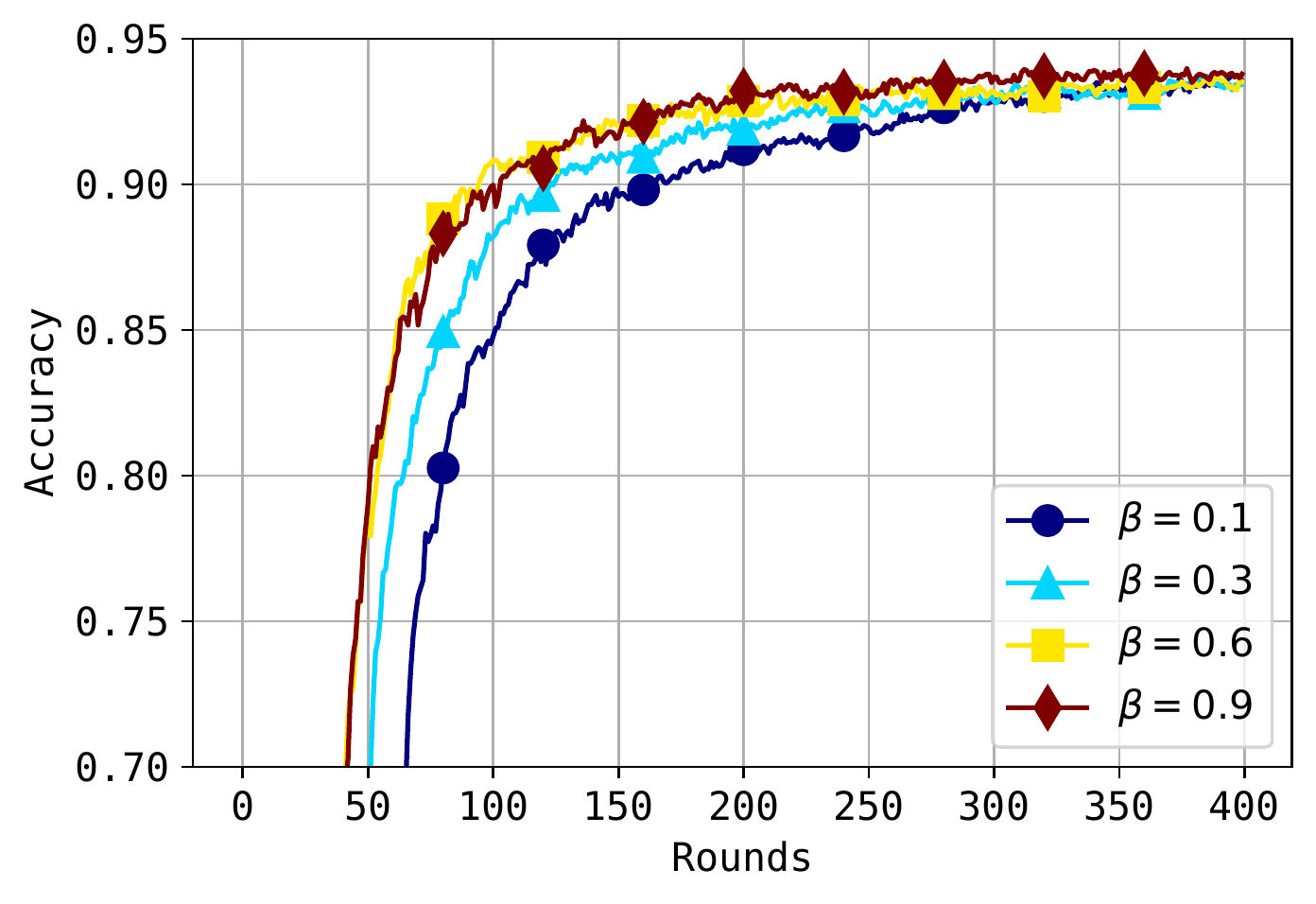}
        \subcaption{Momentum $\beta$.}
        \label{fig:accuracy_momentum}
    \end{subfigure}
    \caption{Test accuracy with different hyperparameters.}
    \label{fig:accuracy_sensitivity}
\end{figure}

\para{Effect of \texorpdfstring{$\epsilon$}{eps}-greedy sampling}
Here we show the effect of different $\epsilon$ values. Recall that each client deploys an $\epsilon$-greedy selection strategy. The smaller the value of $\epsilon$, the more greedy the client is in selecting the most relevant collaborators with high weights,  leading to less exploration. \cref{fig:office-eps} shows the accuracy along with training rounds with different $\epsilon$ values on the Office-Home dataset. One can see that there is a trade-off between exploration and exploitation. 
If $\epsilon$ is too high (e.g., $\epsilon=1$, uniform sampling), then estimates of the likelihoods/losses are more accurate. 
However, some gradient updates will vanish because the client weight is close to zero (see \cref{subsec:decentralized_federico}), resulting in slow convergence.
On the other hand, if $\epsilon$ is too small, the client may miss some important collaborators due to a lack of exploration.
As a result, we use a moderate $\epsilon=0.3$ in all experiments.


\para{Effect of number of sampled neighbors}
We plot accuracy with number of neighbors $M \in \{0, 1, 3, 5, 7\}$ on CIFAR100 with 4 different client distributions, where $M=0$ is similar to Local Training as no collaboration happens. As shown in \cref{fig:neighbors}, when the number of neighbors increases, {\ourshort} converges more slowly as each client is receiving more updates on other client's models. While a smaller number of neighbors seems to have a lower final accuracy, we notice that even with $M=1$, we still observe significant improvement compared to no collaboration. Therefore, we use $M=3$ neighbors in our experiments as it has reasonable performance and communication cost.

\begin{figure}[!ht]
    \centering
    \includegraphics[width=.245\textwidth]{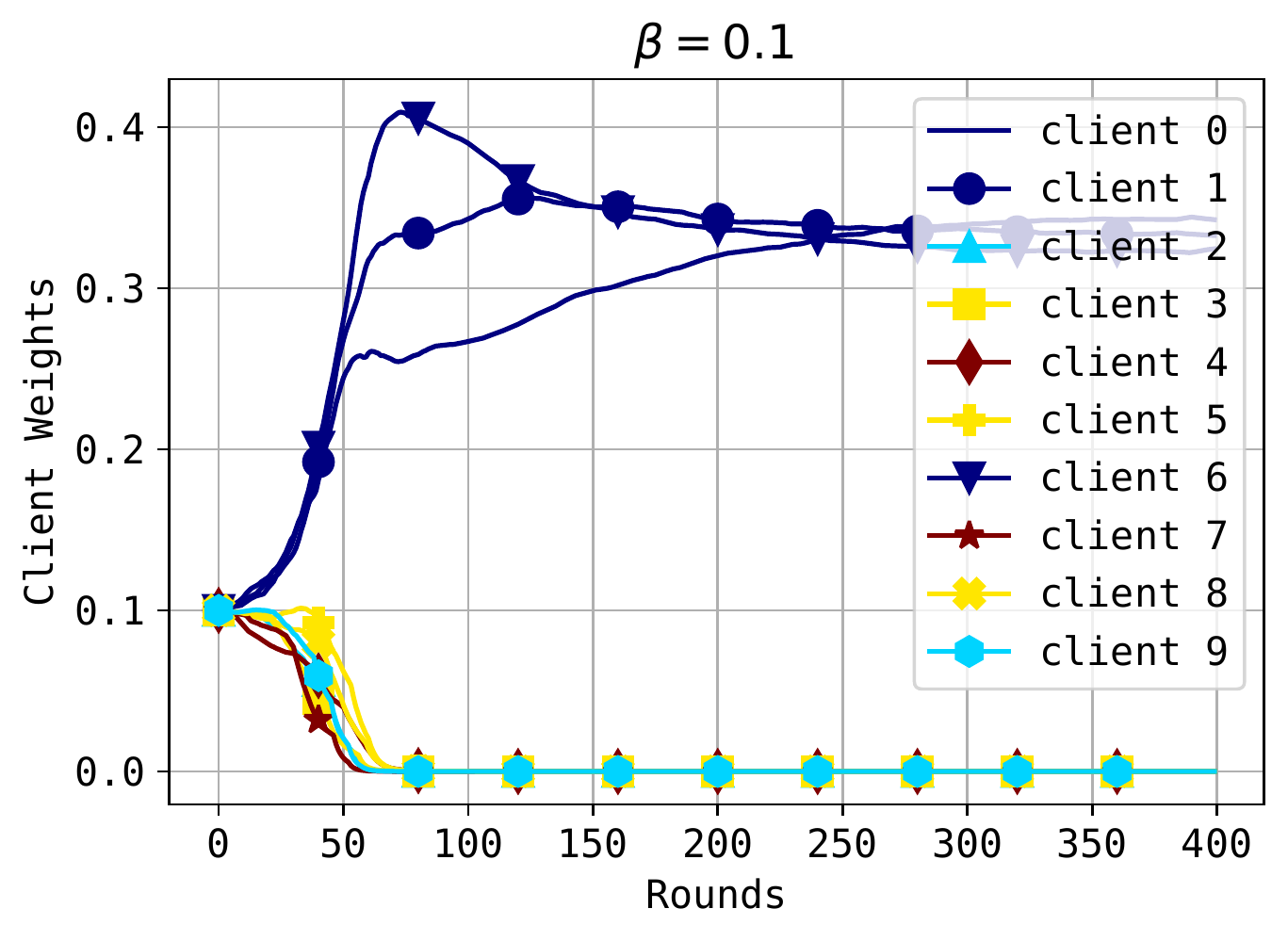}
    \includegraphics[width=.245\textwidth]{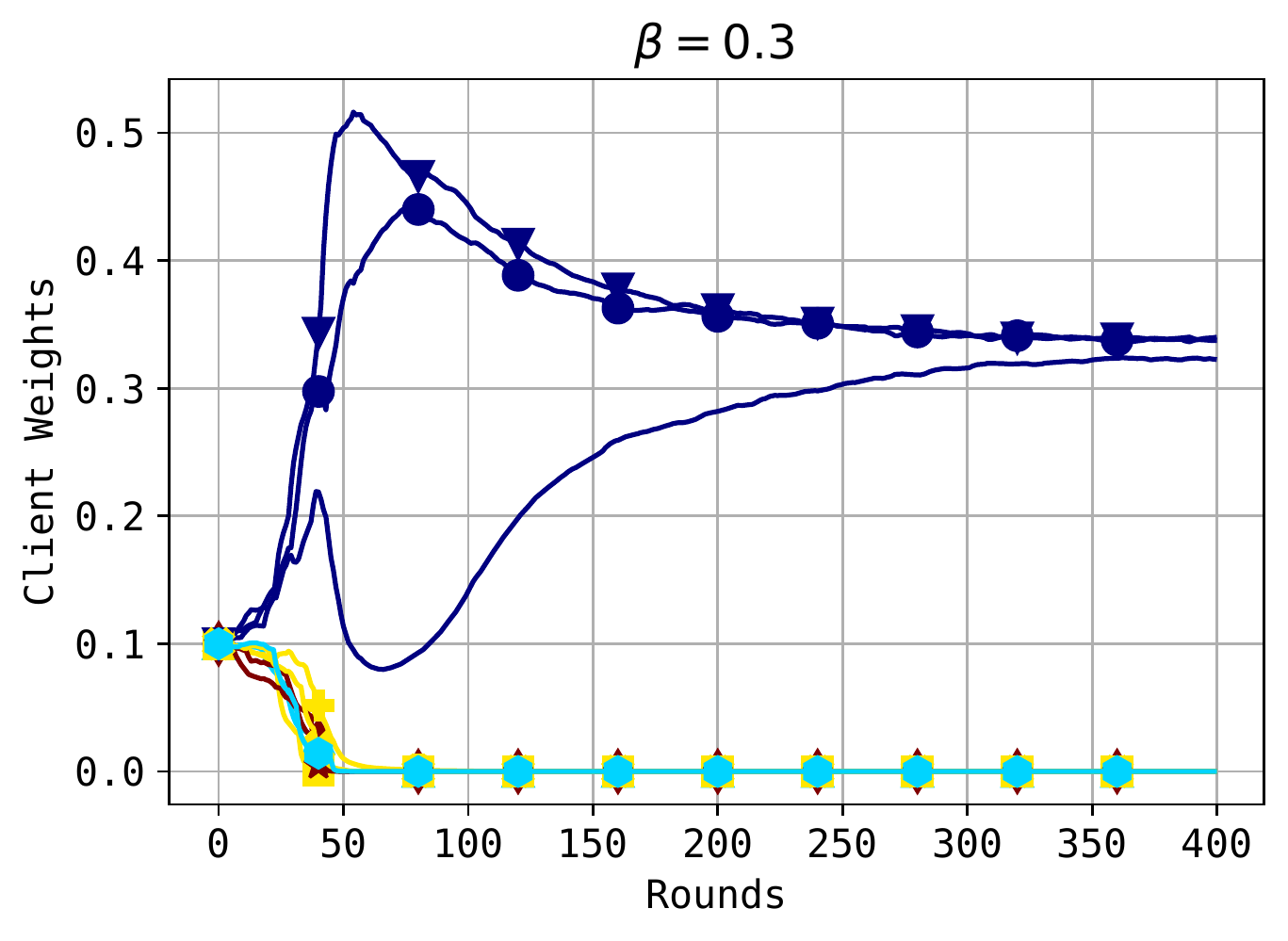}
    \includegraphics[width=.245\textwidth]{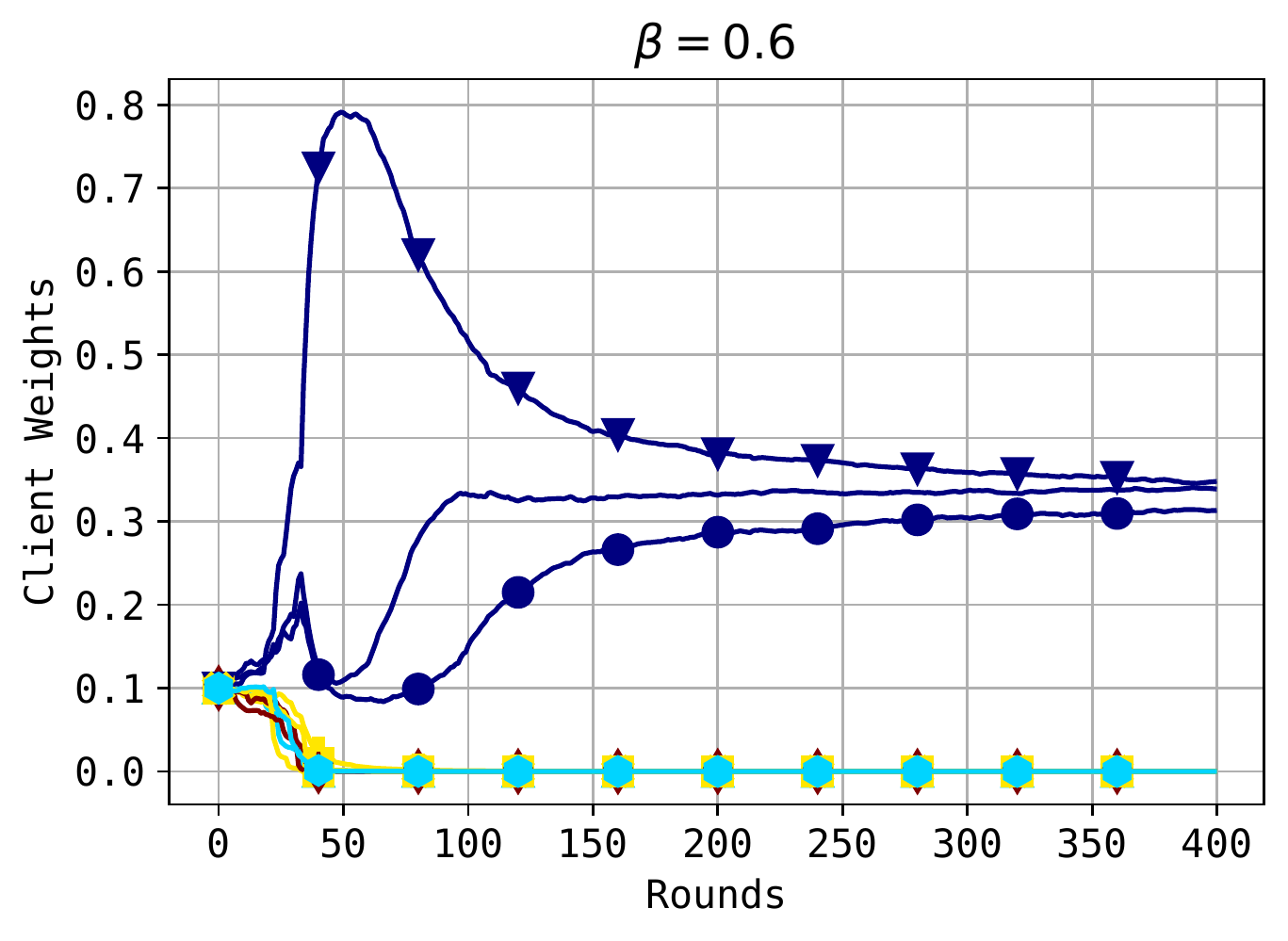}
    \includegraphics[width=.245\textwidth]{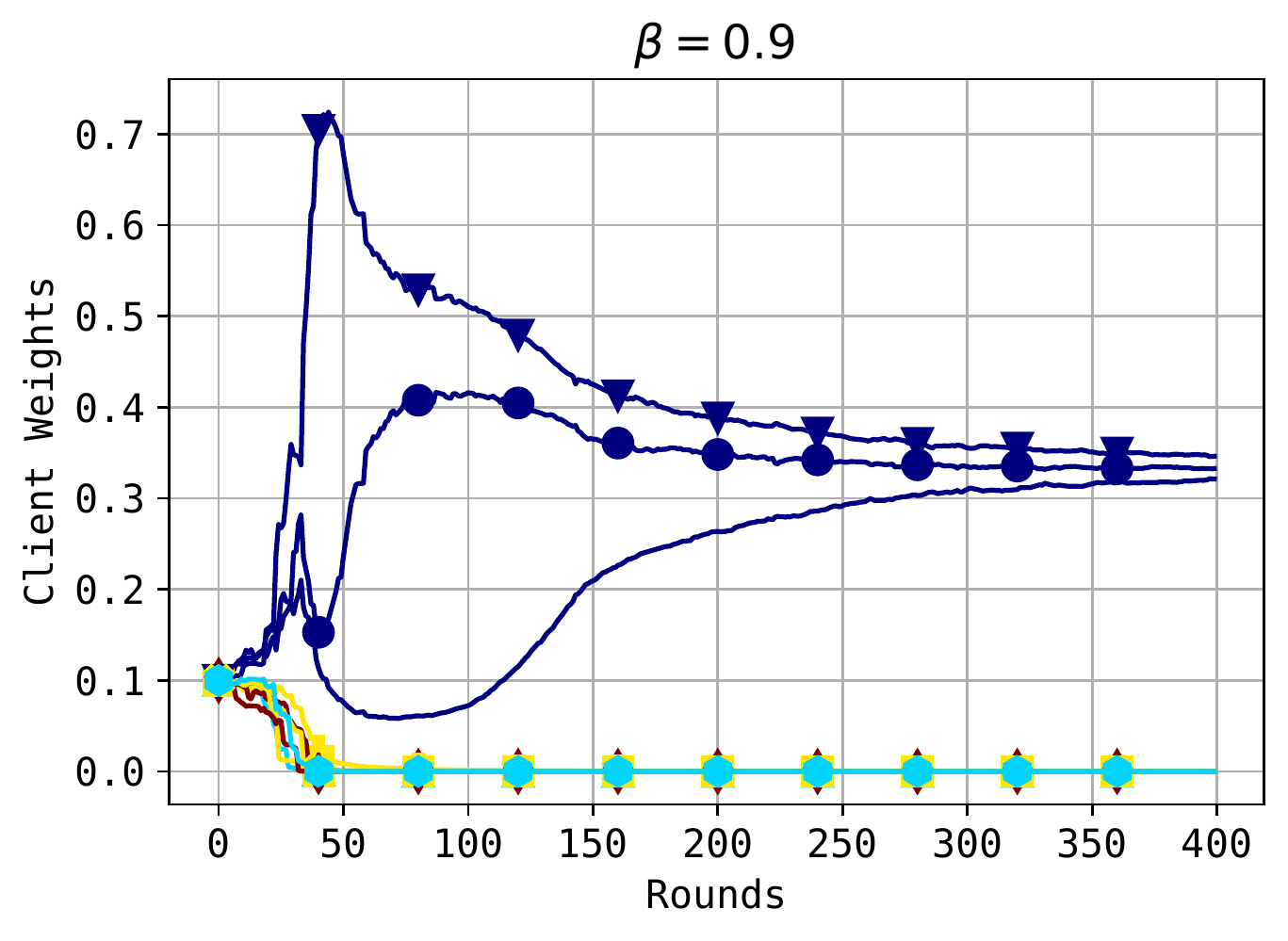}
    \caption{Client weights (client 0) with different momentum values $\beta$ on the client weight update.}
    \label{fig:client_weight_momentum}
\end{figure}
\para{Effect of client weight momentum}
We plot the overall test accuracy of client 0 on the Office-Home dataset with 4 different data distributions over $\beta \in \{0.1, 0.3, 0.6, 0.9\}$ in \cref{fig:accuracy_momentum} and similarly for the client weights in \cref{fig:client_weight_momentum}.
With smaller $\beta$, as shown in \cref{fig:client_weight_momentum}, we observe a smoother update on the client weights, which is expected as the old tracking loss dominates the new one. 
Although various values produce similar final client weights, a bigger $\beta$ can lead to more drastic changes in early training. 
However, one shouldn't pick a very small $\beta$ just because it can produce smoother weights. 
As shown in \cref{fig:accuracy_momentum}, the algorithm may converge more slowly with smaller $\beta$. 
Therefore, we use $\beta=0.6$ as it encourages smoother updates and also maintains good convergence speed.

\section{Conclusion and Future Work}
\label{sec:conclusion}

In this paper, we proposed {\ourshort}, a novel framework  for decentralized and personalized FL derived from EM for non-i.i.d client data. We evaluated {\ourshort} across different datasets and demonstrated that {\ourshort} outperforms multiple existing personalized FL baselines and encourages clients to collaborate with similar clients, i.e., the right collaborators.


While the decentralized FL scheme could significantly reduce the risk of single point failure in the centralized FL setting by using peer-to-peer communication, it also raises concerns about security risks with the absence of a mutually trusted central server.  Therefore, a promising direction is to incorporate trust mechanisms into the decentralized FL scheme~\cite{Kairouz2019advances}, such as blockchain frameworks~\cite{qin2022secure}. 


\bibliography{ref}
\bibliographystyle{iclr2023_conference}

\newpage
\appendix
\section*{Appendix}
\section{Derivations}
\label{app:derivations}

\para{Variational lower bound}
Here we derive the variational lower bound \cref{eq:var_lower_bound} for the log-likelihood objective \cref{eq:log_like_obj}.
For each $i\in[K]$, 
\begin{align}
\log \sum_{z_i=1}^K
p(D_i,z_i;\Theta)
&=
\log \sum_{z_i=1}^K
q(z_i)\cdot
\frac{p(D_i,z_i;\Theta)}{q(z_i)}
\\
&=
\log
\EE_{q(z_i)}
\left[
\frac{p(D_i,z_i;\Theta)}{q(z_i)}
\right]
\\
&\ge
\EE_{q(z_i)}
\left[
\log
\frac{p(D_i,z_i;\Theta)}{q(z_i)}
\right]
\\
&=
\EE_{q(z_i)}
\left[
\log
p(D_i,z_i;\Theta)
\right]
-\EE_{q(z_i)}[\log q(z_i)],
\end{align}
where $q$ is an alternative distribution, the inequality is due to Jensen’s Inequality and the last term $\EE_{q(z_i)}[\log q(z_i)]$ is constant independent of the parameter $\Theta$.

\para{Derivations of the EM steps}
Given the assumptions in the main text about $p_i(y|\vx)$ and $p_i(\vx)$, we know that 
\begin{equation}
-\log p(D_i|z_i=j;\Phi)
=
\sum_{s=1}^{n_i}
\ell(h_{\vphi_j}(\vx_s)^{(i)}, y_s^{(i)}) - \log p(\vx_s^{(i)}) + c.
\end{equation}
\begin{itemize}[leftmargin=*,nolistsep,noitemsep,topsep=0em]
\item {\bf E-step:} 
    Find the best $q$ for each client given the current parameters $\Theta^{(t-1)}$:
    \begin{align}
    w_{ij}^{(t)}
    &:=q^{(t)}(z_i=j)
    =p(z_i=j|D_i;\Theta^{(t-1)})
    \\
    &=\frac{p(z_i=j|\Pi^{(t-1)})\cdot p(D_i|z_i=j;\Phi^{(t-1)})}
    {\sum_{j'=1}^K p(z_i=j'|\Pi^{(t-1)})\cdot p(D_i|z_i=j';\Phi^{(t-1)})}
    \\
    &=\frac{\Pi_{ij}^{(t-1)}\cdot p(D_i|z_i=j;\Phi^{(t-1)})}
    {\sum_{j'=1}^K \Pi_{ij'}^{(t-1)}\cdot p(D_i|z_i=j';\Phi^{(t-1)})}
    \\
    &\propto\Pi_{ij}^{(t-1)}
    \exp\left[
        -\sum_{s=1}^{n_i} 
        \ell\left(
            h_{\vphi_j^{(t-1)}}(\vx_s^{(i)}),\ y_s^{(i)}
        \right)
    \right].
    \label{eq:e_step_posterior_detail}
    \end{align}
    
    Then the variational lower bound becomes
    \begin{align}
    \Lcal(q^{(t)},\Theta)
    &=\frac{1}{n}\sum_i\sum_j w_{ij}^{(t)}
    \cdot
    \log p(D_i,z_{i}=j;\Theta) 
    +C
    \\
    &=\frac{1}{n}\sum_i\sum_j w_{ij}^{(t)}
    \cdot
    \left(\ \log p(z_i=j;\Pi)
    +\log p(D_i|z_{i}=j;\Phi)\ \right)
    +C
    \\
    &=\frac{1}{n}\sum_i\sum_j w_{ij}^{(t)}
    \cdot
    \left(\ \log \Pi_{ij}
    +\log p(D_i|z_{i}=j;\Phi)\ \right)
    +C.
    \label{eq:lower_bound_with_E}
    \end{align}
\item {\bf M-step:}
    Given the posterior $w_{ij}^{(t)}$ from the E-step, we need to maximize $\Lcal$ w.r.t.\ $\Theta=(\Phi,\Pi)$. 
    For the priors $\Pi$, we can optimize each row $i$ of $\Pi$ individually since they are decoupled in \cref{eq:lower_bound_with_E}. 
    Note that each row of $\Pi$ is also a probability distribution, so the optimum solution is given by $\Pi_{ij}^{(t)}=w_{ij}^{(t)}$. 
    This is because the first term of \cref{eq:lower_bound_with_E} for each $i$ is the negative cross entropy, which is maximized when $\Pi_{ij}$ matches $w_{ij}^{(t)}$.
    
    Optimizing \cref{eq:lower_bound_with_E} w.r.t.\ $\Phi$ gives
    \begin{align}
    \Phi^{(t)}
    \in\argmax_{\Phi}
    \Lcal(q^{(t)},\Theta)
    =\argmin_{\Phi}
    \frac{1}{n}
    \sum_{i=1}^K
    \sum_{j=1}^K
    w_{ij}^{(t)}
    \sum_{s=1}^{n_i}
    \ell\left(
        h_{\vphi_j}(\vx_s^{(i)}),\ y_s^{(i)}
    \right).
    \end{align}

\end{itemize}

\para{Posterior and accumulative loss}
Here we show an alternative implementation for \cref{eq:e_step_posterior} using accumulative loss. 
To shorten notations, let $\ell_{ij}^{(t)}:=\sum_{s=1}^{n_i} \ell\left(h_{\vphi_j^{(t)}}(\vx_s^{(i)}),\ y_s^{(i)}\right)$.
Combining \cref{eq:e_step_posterior} and \cref{eq:m_step_updates} gives
\begin{align}
w_{ij}^{(t)}
&=p(z_i=j|D_i;\Theta^{(t-1)})
\\
&\propto w_{ij}^{(t-1)}
\exp\left[
    -\ell_{ij}^{(t-1)}
\right]
\\
&\propto w_{ij}^{(t-2)}
\exp\left[
    -\left(
    \ell_{ij}^{(t-2)}
    +
    \ell_{ij}^{(t-1)}
    \right)
\right].
\end{align}
We can see that it is accumulating the losses of previous models (e.g., $\vphi_j^{(t-2)}$, $\vphi_j^{(t-1)}$ and so on) inside the exponential. 
Therefore, assuming the uniform prior $\Pi_{ij}^{(0)}=1/K,\forall j$, $w^{(t)}$ is the softmax transformation of the negative of the accumulative loss $L_{ij}^{(t)}:=\sum_{\tau=1}^{t-1} \ell_{ij}^{(\tau)}$ up until round $t$.
\newpage 

\section{The {\ourshort} algorithm}
Algorithm \ref{alg:federico} describes our proposed {\ourshort} algorithm.
\RestyleAlgo{ruled}
\SetKwComment{tcp}{\string// }{}
\begin{algorithm}
\caption{{\ourshort}: {\ours}}\label{alg:federico}
\KwInput{Client local datasets $\{D_i\}_{i=1}^K$, number of communication rounds $r$, number of neighbors $M$, $\epsilon$-greedy sampling probability $\epsilon$, momentum for exponential moving average loss tracking $\beta$, learning rate $\eta$.}
\KwOutput{Client models $\{\vphi_i\}_{i=1}^K$ and client weights $w_{ij}$.}
\tcp{Initialization}
Randomly initialize $\{\vphi_i\}_{i=1}^K$;\\
\For{client $C_i$ in $\{C_i\}_{i=1}^K$}{
Initialize $\widehat{L}_{ij}^{(0)} = 0, \ell_{ij}^{(0)}=0, w_{ij}^{(0)}=\frac{1}{K}$;
}
\For{iterations $t=1\dots T$}{
    \For{client $C_i$ in $\{C_i\}_{i=1}^K$}{
        Sample $M$ neighbors of this round $B^t$ according to $\epsilon$-greedy selection w.r.t.\ $w_{ij}^{(t-1)}$;\\
        Send $\vphi_i$ to other clients that sampled $C_i$;\\
        Receive $\vphi_j$ from sampled neighbors $B^t$;\\
        \tcp{E-step}
        $\ell_{ij}^{(t)} = \ell_{ij}^{(t-1)}$
        \tcp*[r]{Keep the loss from previous round}
        \For{b in $B^t$}{
            $\ell_{ib}^{(t)} =\sum_{s=1}^{n_i} \ell\left(h_{\vphi_b^{(t)}}(\vx_s^{(i)}),\ y_s^{(i)}\right)$
            \tcp*[r]{Update the sampled ones}
        }
        $\widehat{L}_{ij}^{(t)}=(1-\beta)\widehat{L}_{ij}^{(t-1)}+\beta \ell_{ij}^{(t)}$
        \tcp*[r]{Update exponential moving averages}
        $w_{ij}^{(t)} =\frac{\exp(-\widehat{L}_{ij}^{(t)})}{\sum_{j'=1}^K \exp(-\widehat{L}_{ij'}^{(t)})}$;\\
        \tcp{M-step}
        \For{$C_b$ in $B^t$}{
            \tcp{Could also do multiple gradient steps instead}
            Compute and send $\vg_{bi}=w_{ib}^{(t)} \nabla_{\vphi_b}\sum_{s=1}^{n_i}\ell\left( h_{\vphi_b}(\vx_s^{(i)}),\ y_s^{(i)} \right)$ to $C_b$;
        }
        \For{$C_j$ that sampled $C_i$}{
            Receive $\vg_{ij}=w_{ji}^{(t)} \sum_{s=1}^{n_j} \nabla_{\vphi_i}\ell\left( h_{\vphi_i}(\vx_s^{(j)}),\ y_s^{(j)} \right)$;\\
        }
         $\vphi_i^t = \vphi_i^{(t-1)} - \eta\sum_{j}\vg_{ij}$ \tcp*[r]{Or any other gradient-based method}
        }
 }
\end{algorithm}

\newpage

\section{Additional experimental results}
\textbf{Dirichlet data split}
Here we compare {\ourshort} with the other baselines with Office-Home dataset using a different data split approach. Specifically, we firstly partition the data labels into 4 clustersm and then distribute data within the same clusters across different clients using a symmetric Dirichlet distribution with parameter of 0.4, as in FedEM~\cite{marfoq2021federated}\footnote{We use the implementation from \url{https://github.com/omarfoq/FedEM}}. As a result, each client contains a slightly different mixture of the 4 distributions. The results are reported over a single run.

\begin{table}[htb]
    \centering
    \resizebox{\linewidth}{!}{
    \begin{tabular}{lccccccc} \toprule
        \textbf{Method} &  FedAvg& FedAvg+& Local Training& Clustered FL & FedEM& FedFomo& {\ourshort}\\ \midrule
        \textbf{Accuracy} & 69.73 {\tiny $\pm$ 11.02}&71.20 {\tiny $\pm$ 24.41} &68.32 {\tiny $\pm$ 19.43} & 69.73 {\tiny $\pm$ 11.02}& 47.15 {\tiny $\pm$ 25.43} & 75.78 {\tiny $\pm$ 6.20}& \textbf{83.90 {\tiny $\pm$ 4.11}} \\\bottomrule
    \end{tabular}
    }
    \caption{Accuracy of different algorithms with Office-Home dataset and Dirichlet distribution.}
    \label{tab:dirichlet}
\end{table}

\textbf{Client collaboration}
Here we include more client weight plots of our proposed {\ourshort} on CIFAR100 with four client distributions using different data partition and training seeds. As shown in \cref{fig:client_weight_ours_1} and \cref{fig:client_weight_ours_2}, clients from the same distribution collaborates has more client weights and more collaboration.
\begin{figure}[ht]
    \centering
    \begin{subfigure}[b]{0.24\linewidth}
        \centering
        \includegraphics[width=\textwidth]{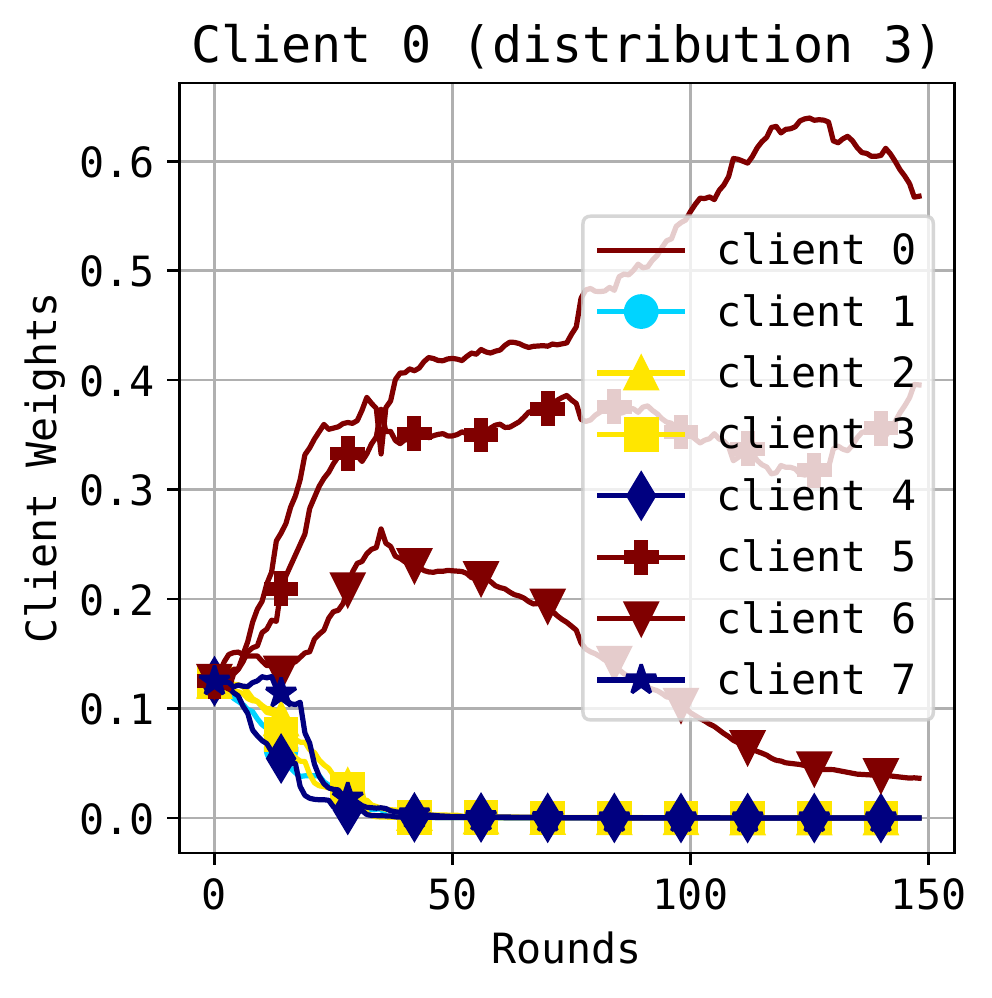}
    \end{subfigure}
    \begin{subfigure}[b]{0.24\linewidth}
        \centering
        \includegraphics[width=\textwidth]{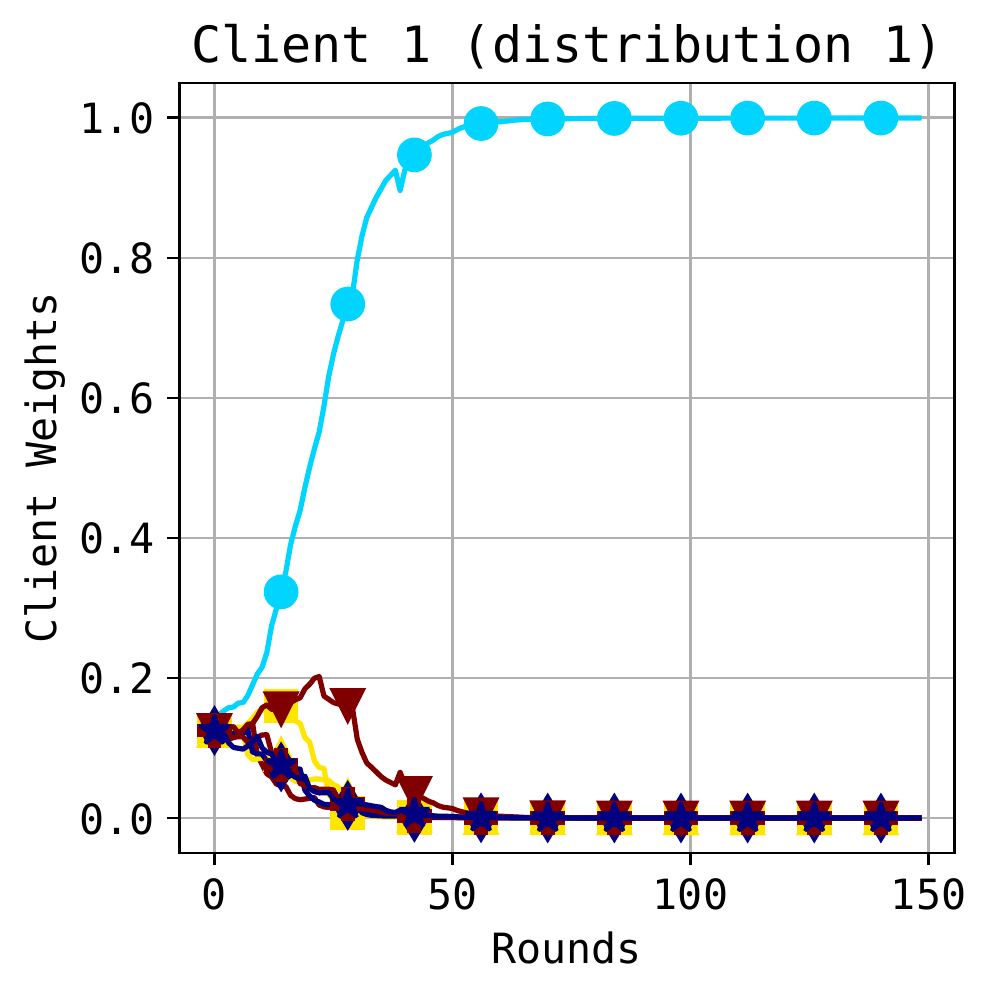}
    \end{subfigure}
    \begin{subfigure}[b]{0.24\linewidth}
        \centering
        \includegraphics[width=\textwidth]{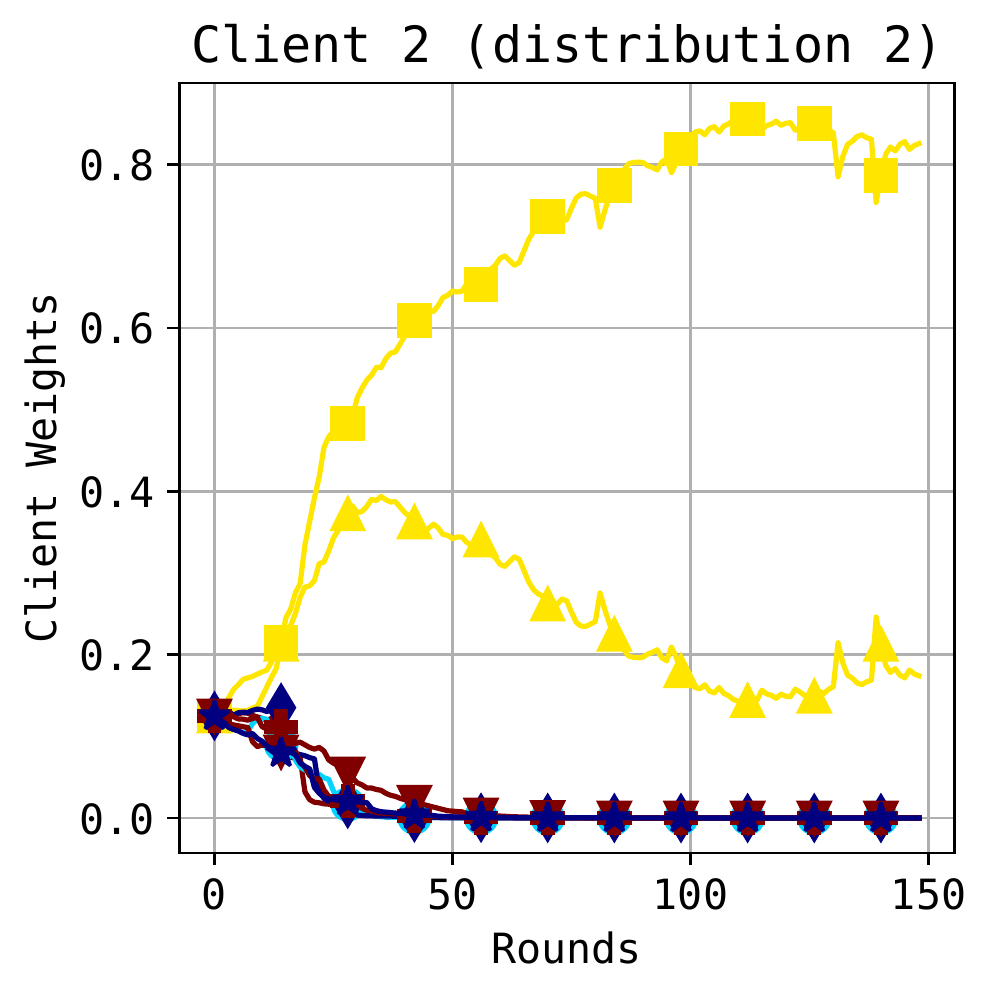}
    \end{subfigure}
    \begin{subfigure}[b]{0.24\linewidth}
        \centering
        \includegraphics[width=\textwidth]{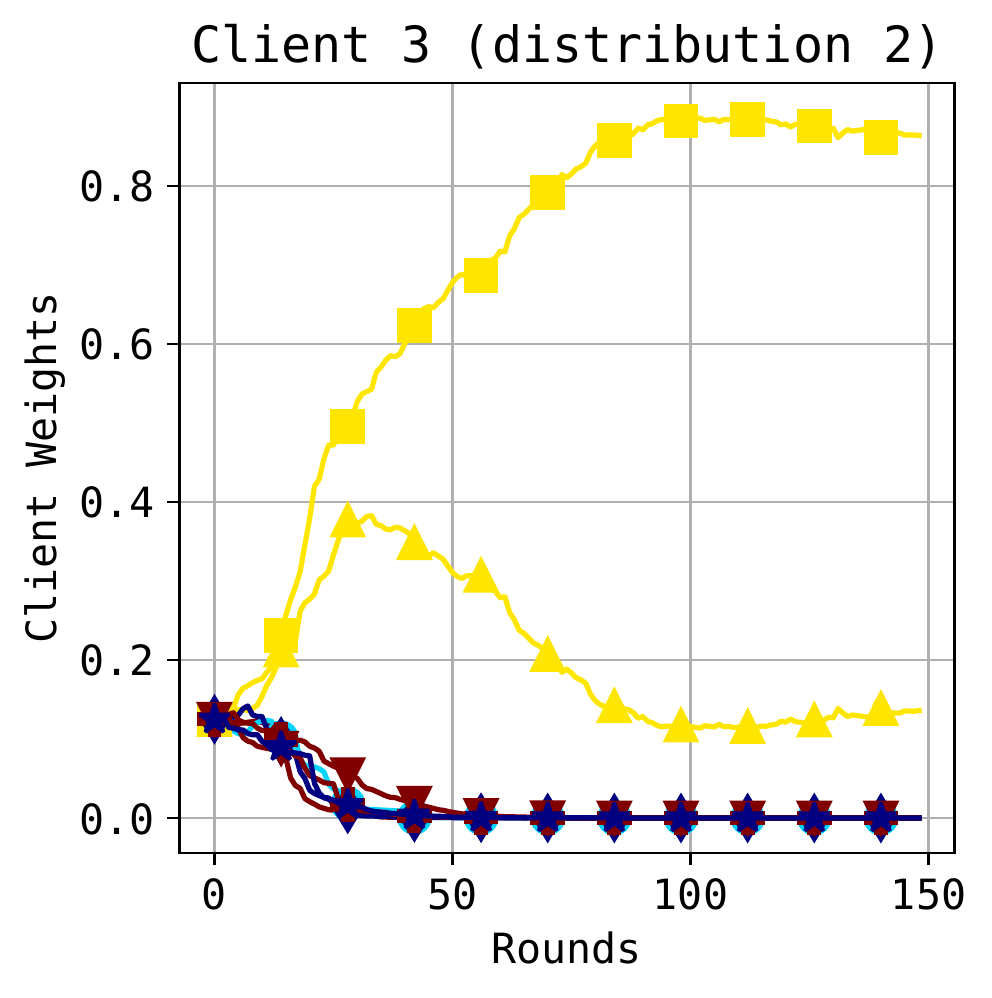}
    \end{subfigure}
    \begin{subfigure}[b]{0.24\linewidth}
        \centering
        \includegraphics[width=\textwidth]{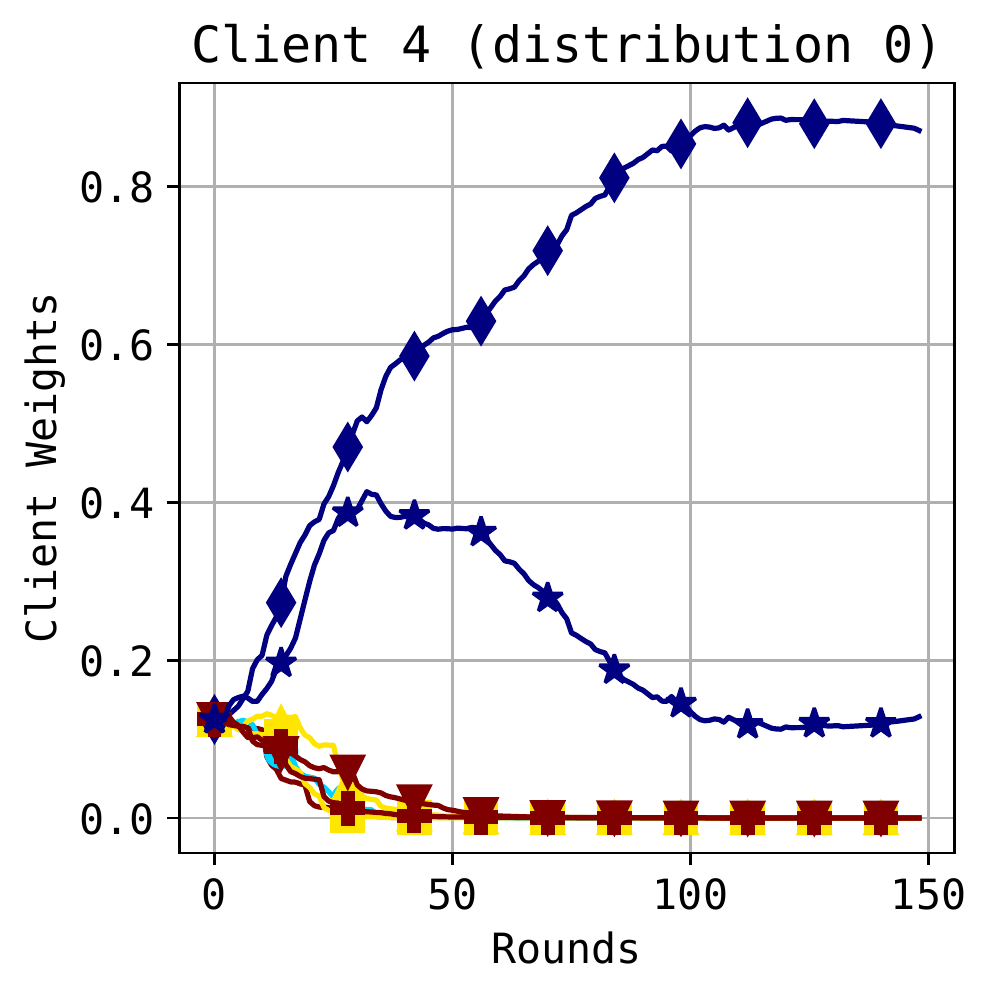}
    \end{subfigure}
    \begin{subfigure}[b]{0.24\linewidth}
        \centering
        \includegraphics[width=\textwidth]{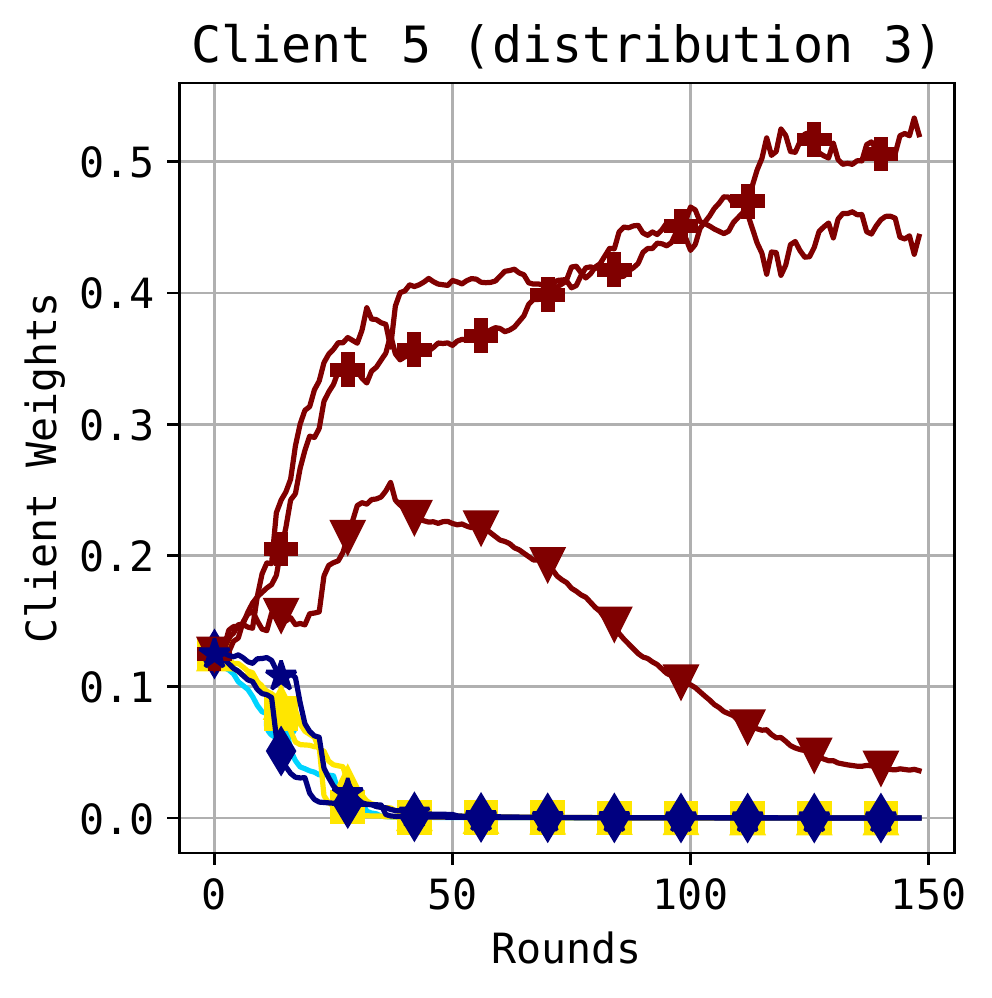}
    \end{subfigure}
    \begin{subfigure}[b]{0.24\linewidth}
        \centering
        \includegraphics[width=\textwidth]{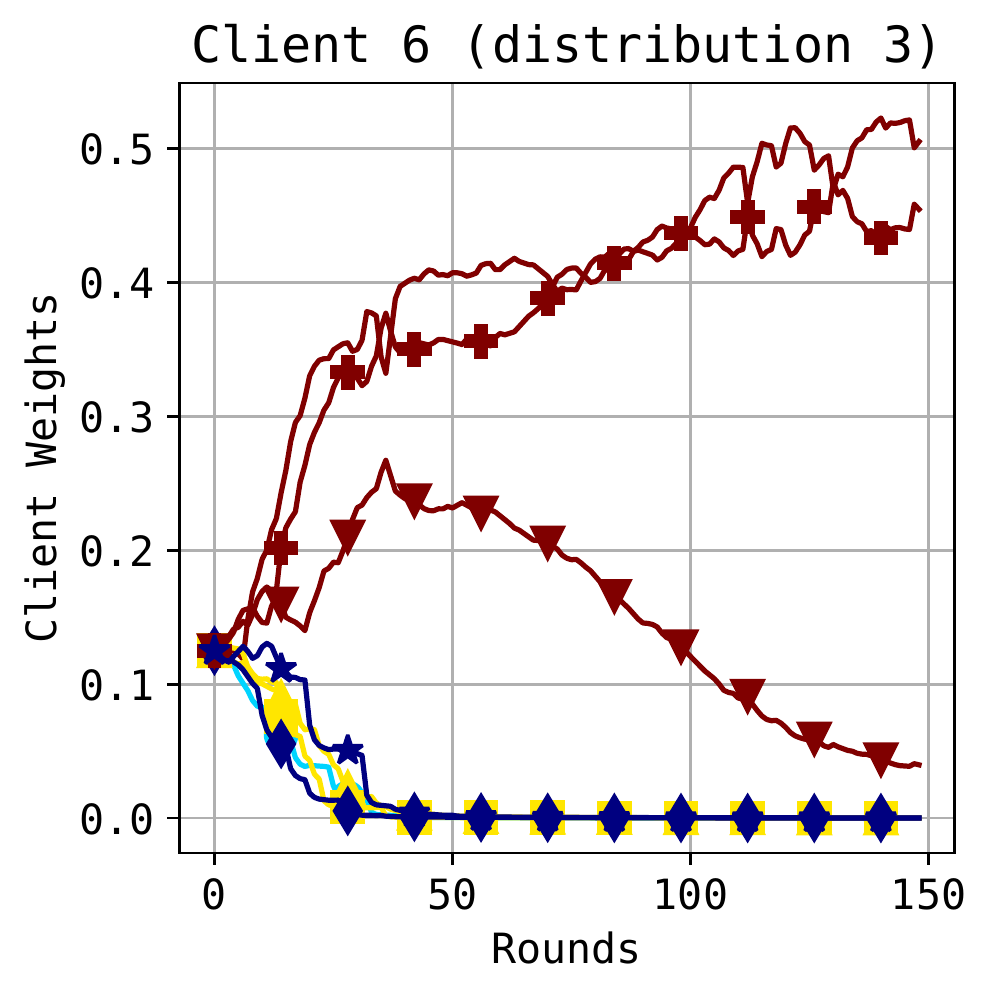}
    \end{subfigure}
    \begin{subfigure}[b]{0.24\linewidth}
        \centering
        \includegraphics[width=\textwidth]{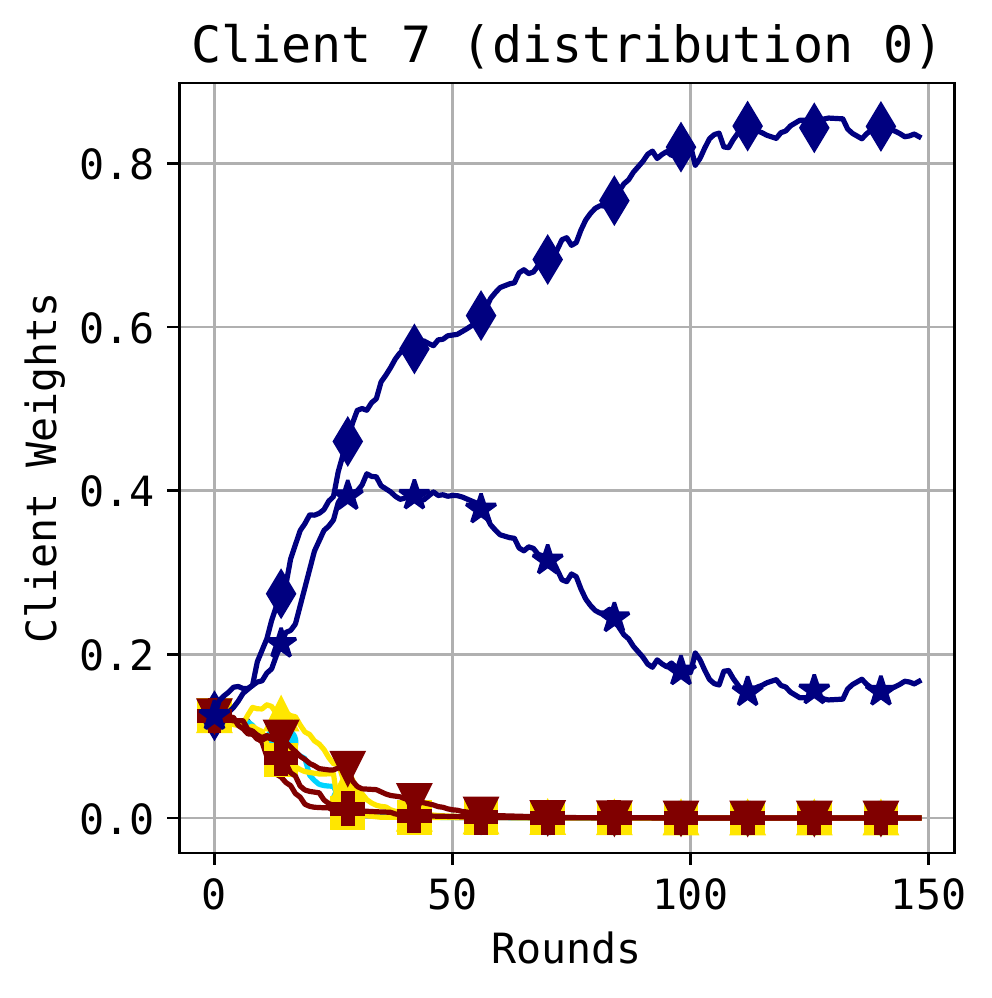}
    \end{subfigure}
    \caption{Client weights over time of {\ourshort} with CIFAR100 data and four different client distributions. Clients are color coded by their private data's distribution.}
    \label{fig:client_weight_ours_1}
\end{figure}

\begin{figure}[ht]
    \centering
    \begin{subfigure}[b]{0.24\linewidth}
        \centering
        \includegraphics[width=\textwidth]{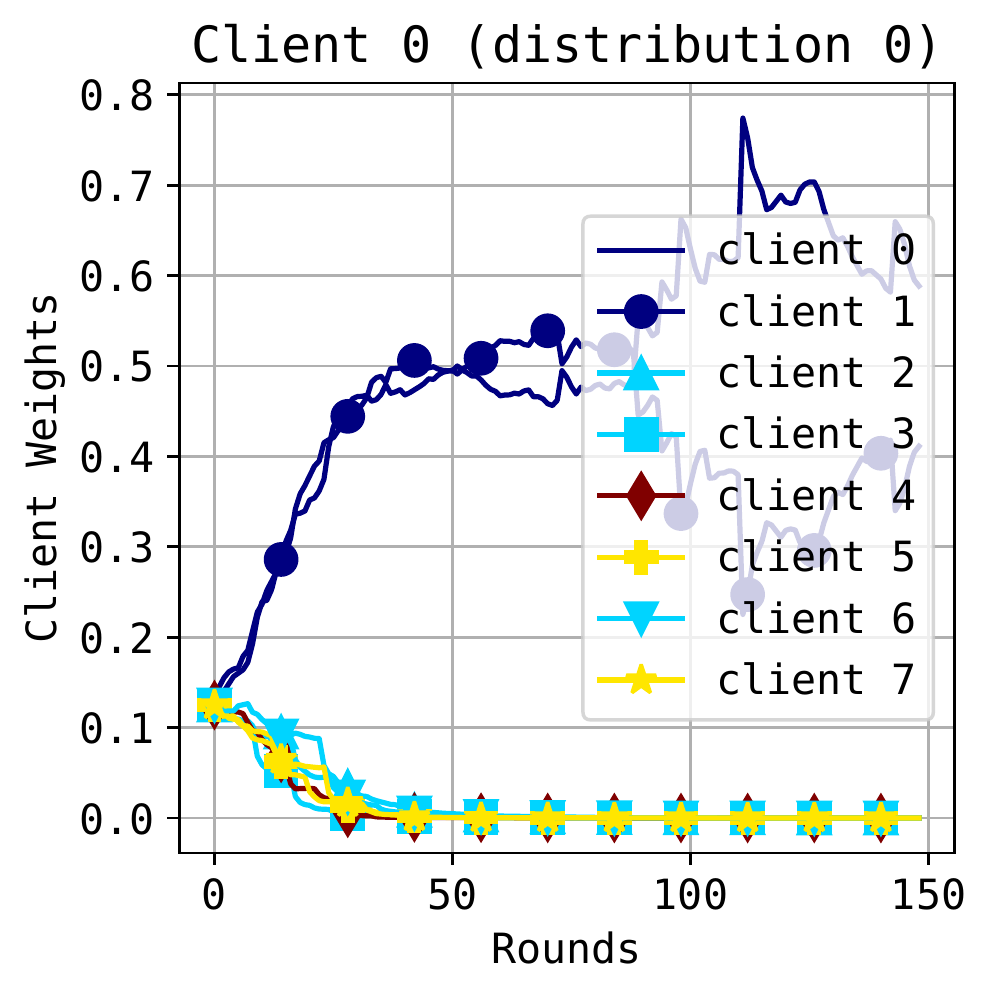}
    \end{subfigure}
    \begin{subfigure}[b]{0.24\linewidth}
        \centering
        \includegraphics[width=\textwidth]{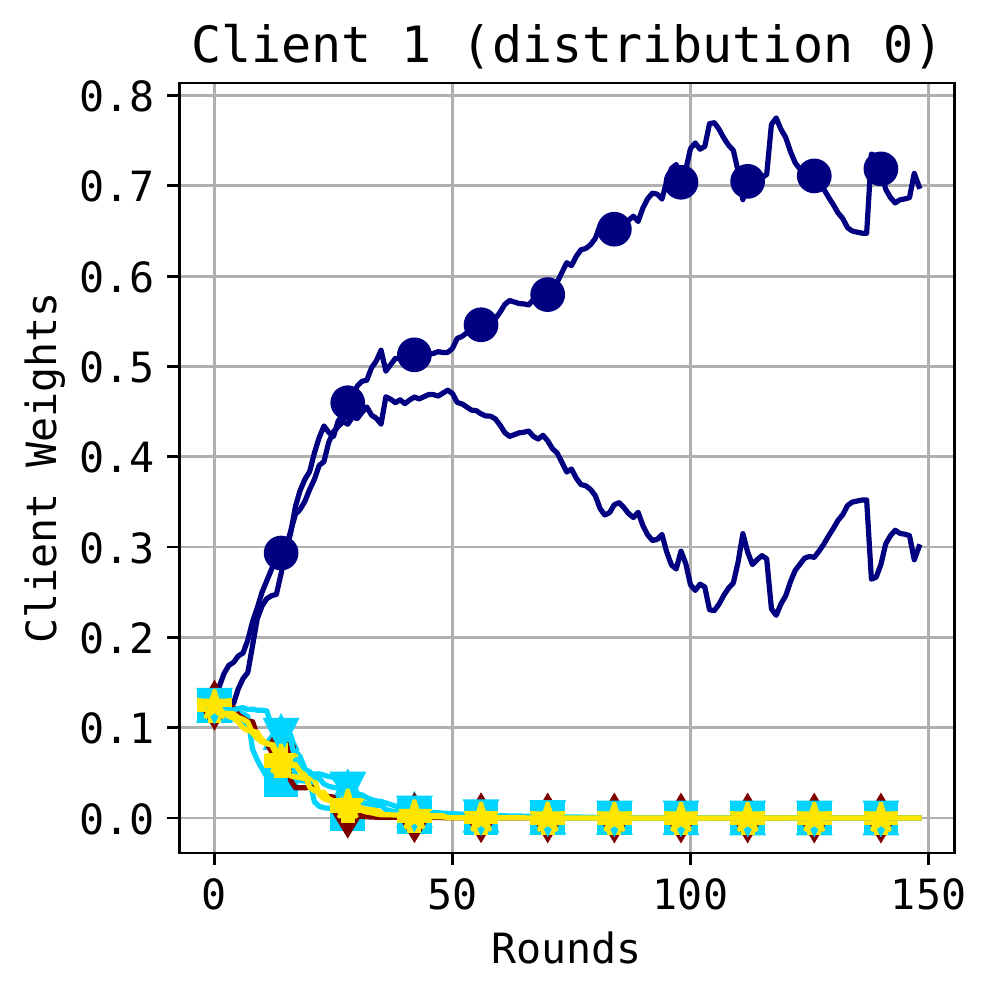}
    \end{subfigure}
    \begin{subfigure}[b]{0.24\linewidth}
        \centering
        \includegraphics[width=\textwidth]{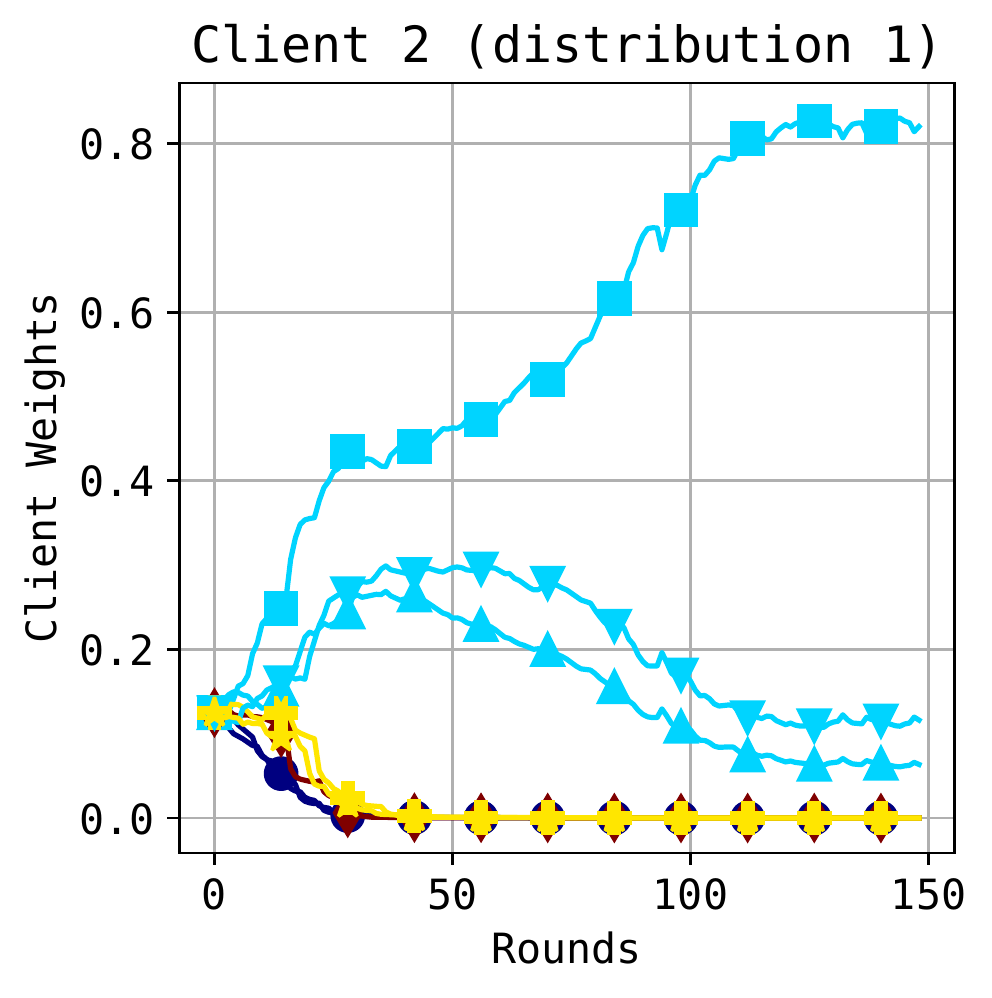}
    \end{subfigure}
    \begin{subfigure}[b]{0.24\linewidth}
        \centering
        \includegraphics[width=\textwidth]{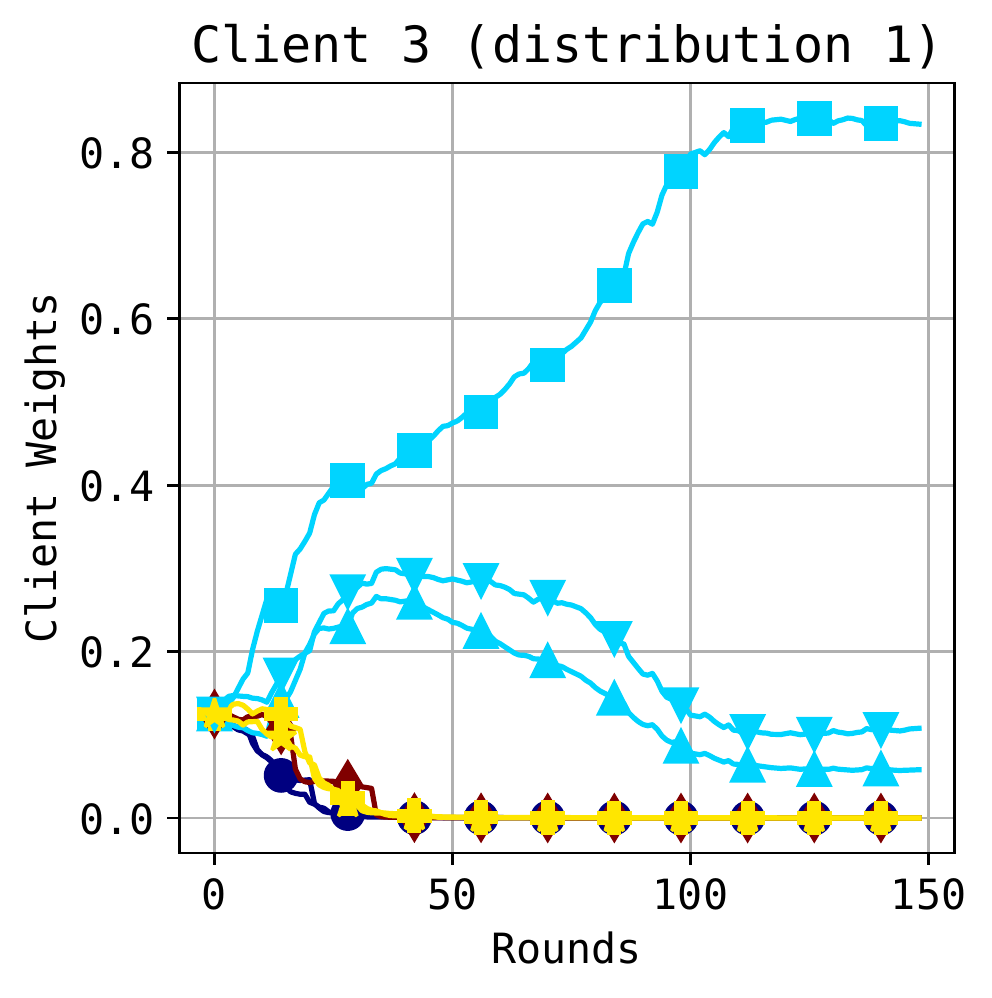}
    \end{subfigure}
    \begin{subfigure}[b]{0.24\linewidth}
        \centering
        \includegraphics[width=\textwidth]{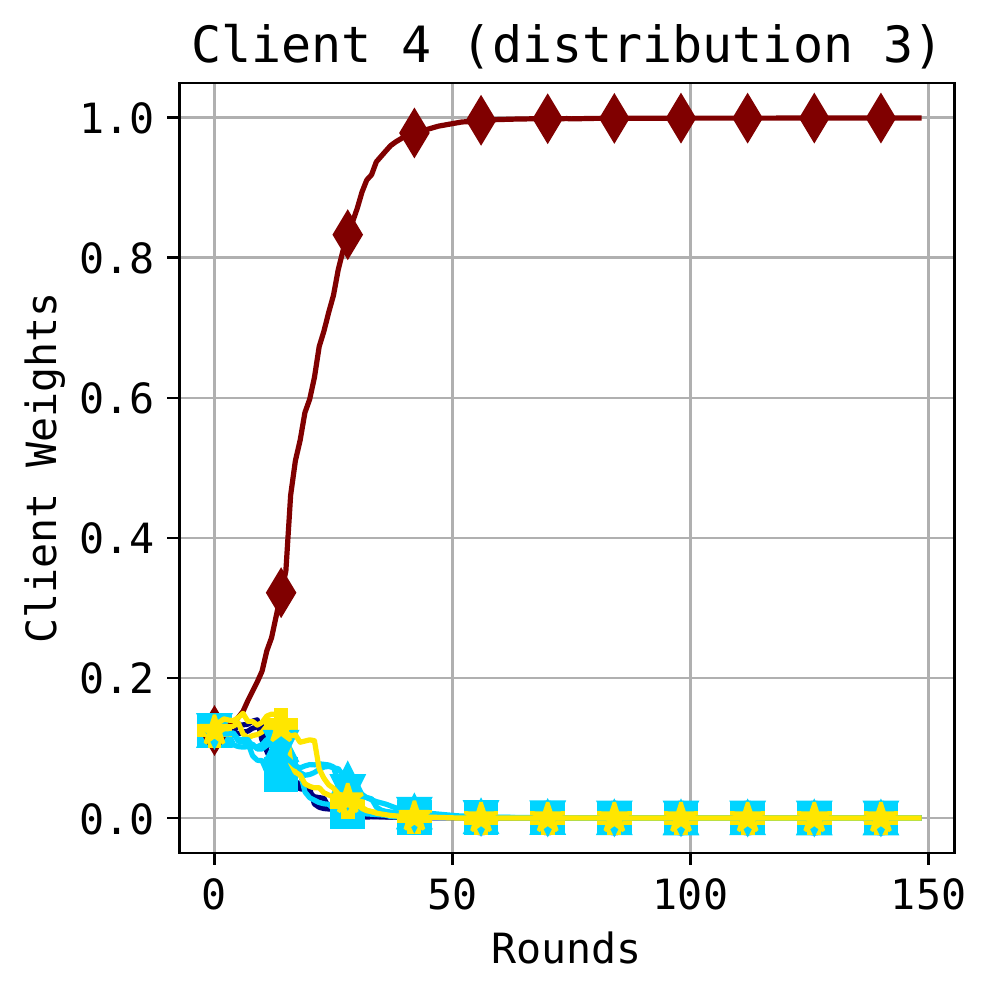}
    \end{subfigure}
    \begin{subfigure}[b]{0.24\linewidth}
        \centering
        \includegraphics[width=\textwidth]{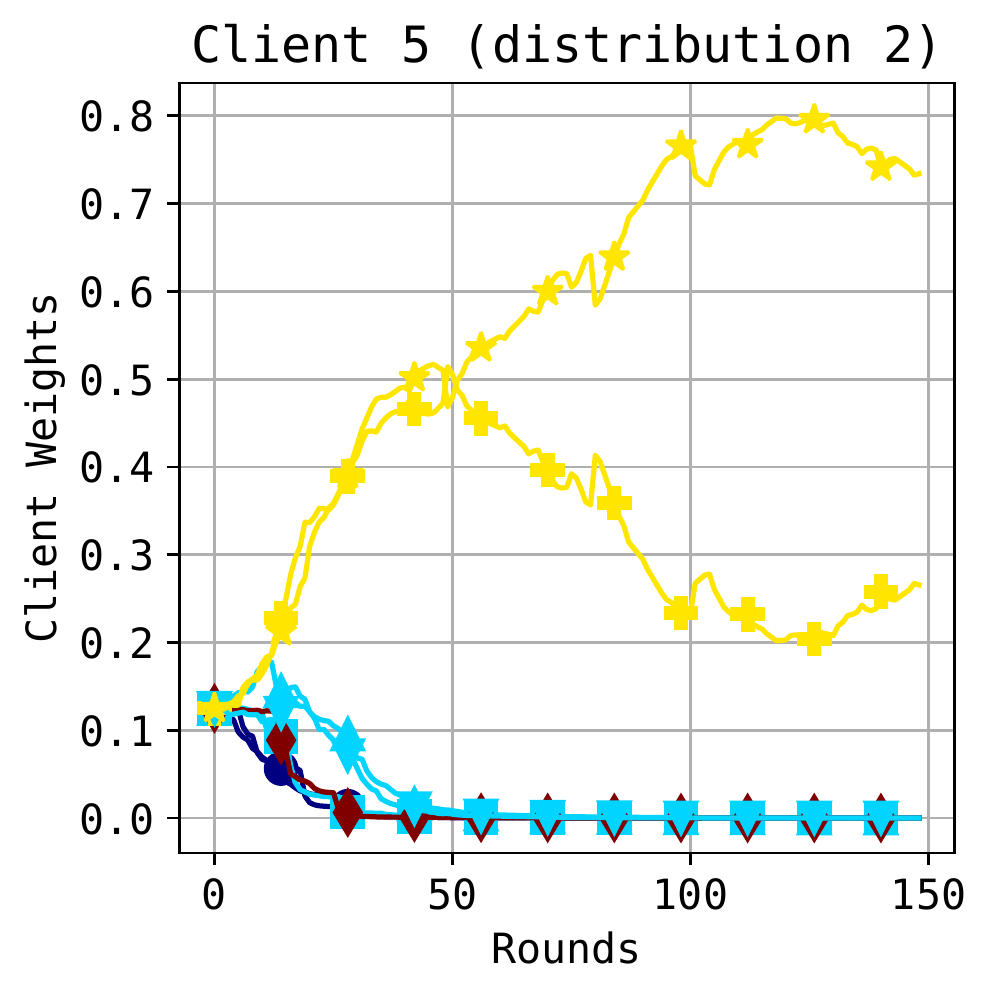}
    \end{subfigure}
    \begin{subfigure}[b]{0.24\linewidth}
        \centering
        \includegraphics[width=\textwidth]{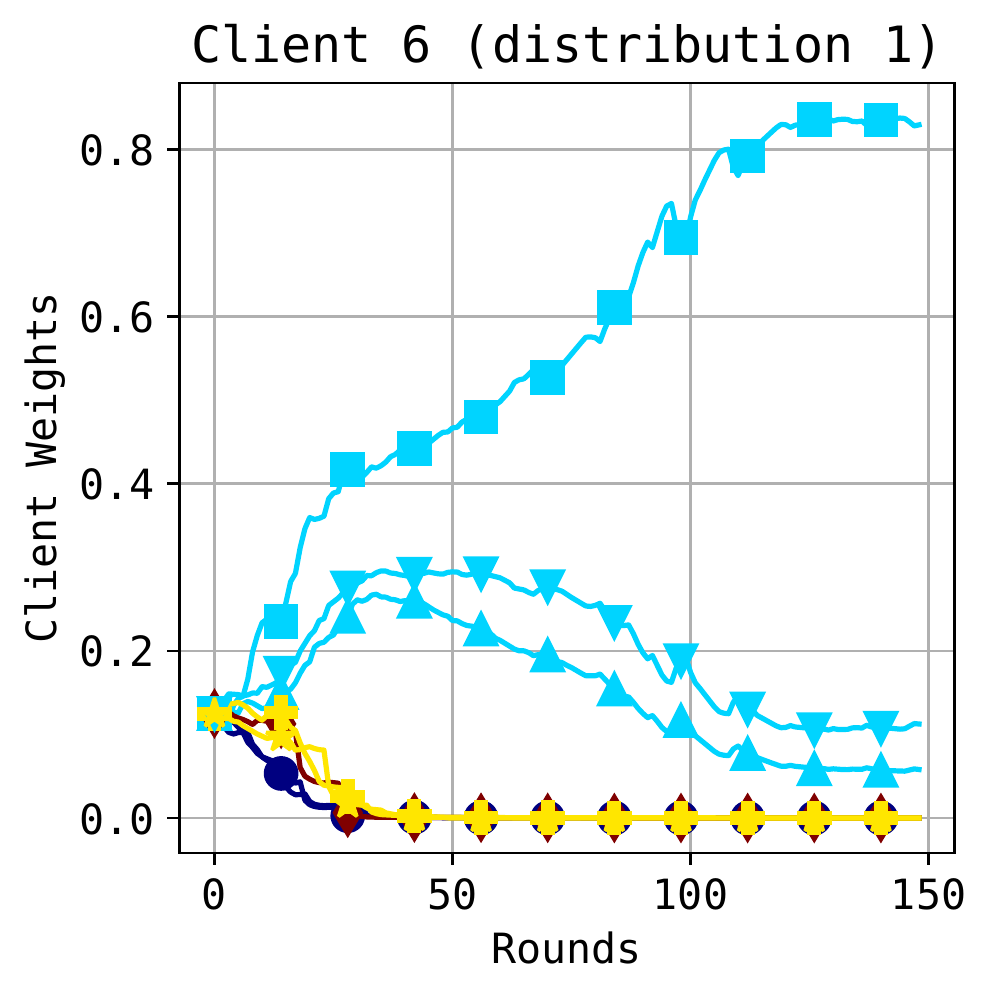}
    \end{subfigure}
    \begin{subfigure}[b]{0.24\linewidth}
        \centering
        \includegraphics[width=\textwidth]{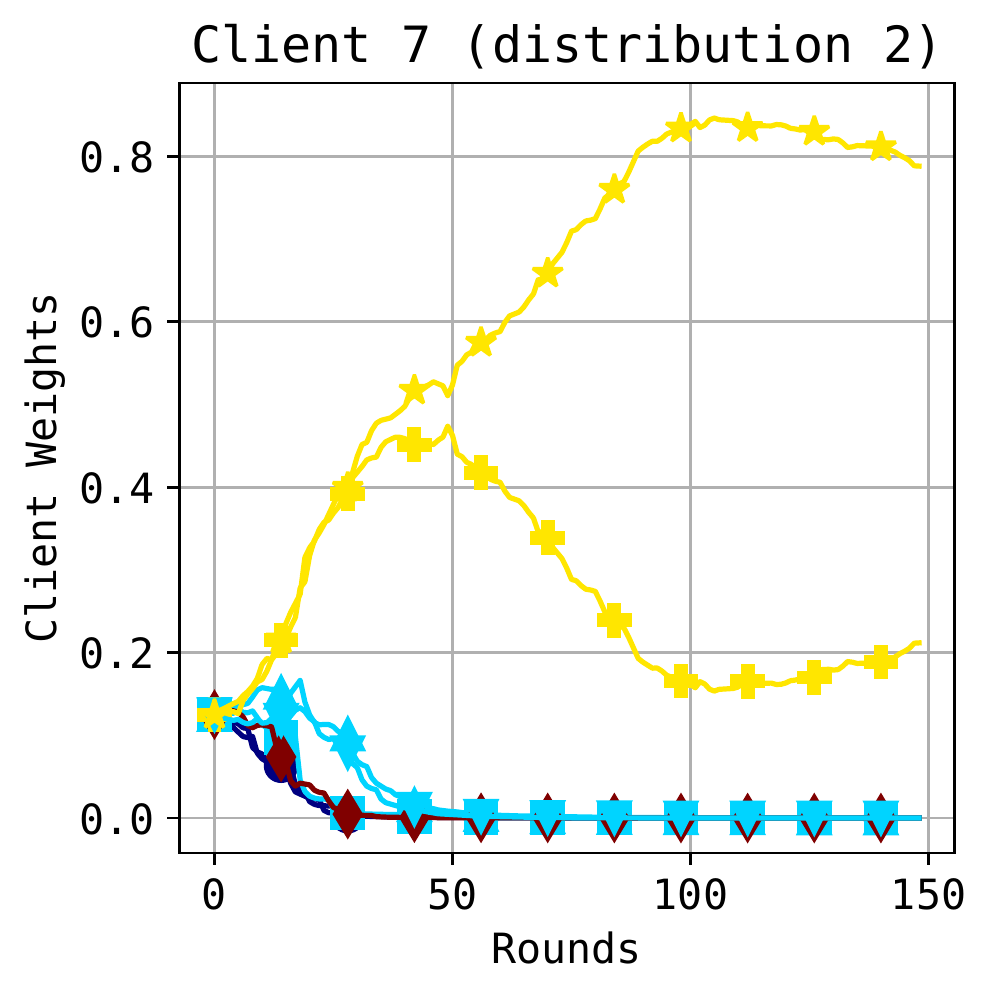}
    \end{subfigure}
    \caption{Client weights over time of {\ourshort} with CIFAR100 data and four different client distributions. Clients are color coded by their private data's distribution.}
    \label{fig:client_weight_ours_2}
\end{figure}

\newpage
\section{Additional Experiment Details}
\label{app:exp_details}

\textbf{Dataset} To speed up training, we take 10\%, and 15\% of the training data from CIFAR-10, and CIFAR-100 respectively. 
For the Office-Home dataset, we merge images from all domains to get the training dataset, and use the features extracted from the penultimate layer of ResNet-18 pretrained on ImageNet. 

\textbf{Models and Methods}
For CIFAR-10, we use the CNN2 from~\cite{shen2020federated} with three 3x3 convolution layers (each with 128 channels followed with 2x2 max pooling and ReLu activation) and one FC layer. 
For CIFAR-100, we use ResNet-18 as in ~\cite{marfoq2021federated}. 
For Office-Home, the model is an MLP with two hidden layers (1000 and 200 hidden units).
The batch size is 50 for CIFAR, and 100 for Office-Home.
For FedFomo, we use 5 local epochs in CIFAR-100 to adapt to the noisiness of training and 1 local epoch per communication round for all other experiments. 

\textbf{Settings}
CIFAR experiments use 8 clients and Office-Home experiments use 10 clients. 

\textbf{Computational resources and software}
We summarize the computational resources used for the experiments in \cref{tab:resources} and software versions in \cref{tab:software}.
\begin{table}[!h]
    \centering
    \caption{Summary of computational resource}
    \resizebox{\linewidth}{!}{
    \begin{tabular}{lccccc}
    \toprule
        Operating System&Memory& CPU& GPU \\\midrule
       Ubuntu 18.04.5& 700GB& Intel(R) Xeon(R) Platinum 8168@2.70GHz& 8 Tesla V100-SXM2\\\bottomrule
    \end{tabular}}
    \label{tab:resources}
\end{table}

\begin{table}[!h]
    \centering
    \caption{Software versions}
    \begin{tabular}{lccc}
    \toprule
       Python & Pytorch& mpi4py \\ \midrule
       3.9 &1.9.0 &3.1.2\\\bottomrule
    \end{tabular}
    \label{tab:software}
\end{table}

\newpage

\section{Convergence Proof}
\label{app:convergence}

We adapt assumptions 2 to 7 of \citet{marfoq2021federated} to our setting as follows:

\begin{assumption}
$\forall i\in[K], p_i(x)=p(x)$.
\label{asm:same_dist}
\end{assumption}

\begin{assumption}
The conditional probability $p_i(y|x)$ satisfies 
\begin{equation}
-\log p_i(y|x)
=\ell(h_{\vphi_i^*}(x), y)+c,
\end{equation} 
for some parameters $\vphi^*_i\in\RR^d$, loss function $\ell:\Ycal \times \Ycal \mapsto \RR^+$ and normalization constant $c$.
\label{asm:likelihood_loss}
\end{assumption}

Let $f(\Phi,\Pi):=\frac{1}{n}\log p(D;\Phi,\Pi)$ be the log-likelihood objective as in \cref{eq:log_like_obj}. 

\begin{assumption}
$f$ is bounded below by $f^*\in\RR$.
\label{asm:f_bounded_below}
\end{assumption}

\begin{assumption}[Smoothness and bounded gradient]
For all $x, y$, the function $\vphi\mapsto \ell(h_{\vphi}(x), y)$ is $L$-smooth, twice continuously differentiable and has bounded gradient: there exists $B<\infty$ such that $\|\nabla_{\vphi} \ell(h_{\vphi}(x), y)\|\le B$.
\label{asm:loss_smooth_bounded_grad}
\end{assumption}

\begin{assumption}[Unbiased gradients and bounded variance]
Each client $i\in[K]$ can sample a random batch $\xi$ and compute an unbiased estimator $\vg_i(\vphi, \xi)$ of the local gradient with bounded variance, i.e., $\EE_{\xi}[\vg_i(\vphi,\xi)]\!=\!\frac{1}{n_i}\sum_{s=1}^{n_i}\nabla \ell(h_{\vphi}(\vx_i^{(s)}),y_i^{(s)})$ and $\EE_{\xi}\|\vg_t(\vphi,\xi)\!-\!\frac{1}{n_i}\sum_{s=1}^{n_i}\nabla \ell(h_{\vphi}(\vx_i^{(s)}),y_i^{(s)})\|\le\sigma^2$.
\label{asm:unbiased_grad_bounded_var}
\end{assumption}

\begin{assumption}[Bounded dissimilarity]
There exist $\beta$ and $G$ such that any set of weights $\vgamma\in\Delta^K$:
\begin{equation}
\sum_{i=1}^K \frac{n_i}{n}
\left\|
  \frac{1}{n_i}\sum_{s=1}^{n_i}\sum_{j=1}^K
  \gamma_j\nabla \ell(h_{\vphi}(\vx_i^{(s)}),y_i^{(s)})
\right\|^2
\le
G^2+\beta^2
\left\|
  \frac{1}{n}\sum_{i=1}^K\sum_{s=1}^{n_i}\sum_{j=1}^K
  \gamma_j\nabla \ell(h_{\vphi}(\vx_i^{(s)}),y_i^{(s)})
\right\|^2.
\end{equation}
\label{asm:bounded_dissim}
\end{assumption}

\convergence*

\begin{proof}
At a high level, we apply the generic convergence result from \citet[Thm.3.2']{marfoq2021federated} for the proof. 
Whereas other conditions can be easily verified,
we need to find \emph{partial first-order surrogates}~\citep[Def.1]{marfoq2021federated} $g_i$ and $g$ for $f_i$ and $f$, respectively, where
\begin{equation}
f_i(\Theta)
=f_i(\Phi,\pi_i)
:=-\frac{1}{n_i}\log p(D_i|\Phi,\pi_i)
=-\frac{1}{n_i}\sum_{s=1}^{n_i}
\log p(x_i^{(s)},y_i^{(s)}|\Phi,\pi_i),
\end{equation}
is the local objective function.
In the following, we will verify that
\begin{align}
g_i^{(t)}(\Phi, \Pi)
&:=g_i^{(t)}(\Phi, \vpi_i)
\\
&:=\frac{1}{n_i}\sum_{s=1}^{n_i}
\sum_{j=1}^K q_{j}^{(t)} 
\left[
\ell\left(h_{\vphi_{j}}(x_i^{(s)}), 
y_i^{(s)}\right)
-\log p_{j}(x_i^{(s)})
-\log\pi_{ij}
+\log q_{j}^{(t)}
-c
\right],
\\
g^{(t)}(\Phi,\Pi)
&:=\sum_{i=1}^K\frac{n_i}{n}
g_i^{(t)}(\Phi, \vpi_i),
\end{align}
satisfy the three conditions of partial first-order surrogates near $(\Phi^{(t-1)}, \Pi^{(t-1)})$: (similarly defined for $g^{(t)}$ and $f$)
\begin{enumerate}
\item $g_i^{(t)}(\Phi,\Pi)\ge f_i(\Phi,\Pi),\forall t,\Phi,\Pi$;
\item $r_i^{(t)}(\Phi,\Pi):=g_i^{(t)}(\Phi,\Pi)-f_i(\Phi,\Pi)$ is differentiable and $\widetilde{L}$-smooth w.r.t.\ $\Phi$ (for some $\widetilde{L}<\infty$). 
Moreover, $r_i^{(t)}(\Phi^{(t-1)},\Pi^{(t-1)})=0$ and $\nabla_{\Phi}r_i(\Phi^{(t-1)},\Pi^{(t-1)})=\zerovec$;
\item $g_i^{(t)}(\Phi,\Pi^{(t-1)})-g_i(\Phi,\Pi)=\mathsf{d}(\Pi^{(t-1)},\Pi)$ for all $\Phi$ and $\Pi\in\argmin_{\Pi'} g(\Phi, \Pi')$ where $\mathsf{d}$ is non-negative and $\mathsf{d}(\Pi,\Pi')=0$ iff $\Pi=\Pi'$.
\end{enumerate}
To simplify notations, define the following (the dependency on round $t$ is ignored when it is clear from context)
\begin{align}
q_{j}&:=q_{i}(z_i=j),
\\
\Lcal_j&:=\sum_{s=1}^{n_i}
\ell\left(h_{\vphi_{j}}(x_i^{(s)}), y_i^{(s)}\right),
\\
\gamma_{j}&:=p_i(z_i=j|D_i,\Phi,\vpi_i).
\end{align}

\paragraph{(1)}
To start verifying the first condition,
\begin{align}
g_i(\Phi, \vpi_i)
&=\frac{1}{n_i}\sum_{s=1}^{n_i}
\sum_{j=1}^K q_{j} 
\left[
\ell\left(h_{\vphi_{j}}(x_i^{(s)}), 
y_i^{(s)}\right)
-\log p_{j}(x_i^{(s)})
-\log\pi_{ij}
+\log q_{j}
-c\right]
\\
&=\frac{1}{n_i}\sum_{s=1}^{n_i}
\sum_{j} q_j 
\left[
-\log 
\left(
  p_{j}(y_i^{(s)}|x_i^{(s)},\vphi_{j})
  \cdot
  p_{j}(x_i^{(s)})
  \cdot
  p_{i}(z_i=j)
\right)
+\log q_{j}
\right]
\\
&=\frac{1}{n_i}\sum_{s=1}^{n_i}
\sum_{j} q_j 
\left[
-\log p_i\left(x_i^{(s)},y_i^{(s)},z_i=j
\middle|\Phi,\vpi_i\right)
+
\log q_j
\right]
\\
&=\frac{1}{n_i}
\sum_{j} q_j 
\left[
-\log p_i\left(D_i,z_i=j
\middle|\Phi,\vpi_i\right)
+
\log q_j
\right].
\end{align}
Then
\begin{align}
r_i(\Phi, \vpi_i)
&=
g_i(\Phi, \vpi_i)
-f_i(\Phi, \vpi_i)
\\
&=\frac{1}{n_i}
\mathcal{KL}
\left(\ 
q(\cdot)\ \middle\|
\ p_t(\cdot|D_i,\Phi,\vpi_i)
\ \right),
\label{eq:r_KL}
\end{align}
where $\mathcal{KL}$ is the KL-divergence.
This verifies the first condition of partial first-order surrogates since the KL-divergence is non-negative.

\paragraph{(2)} Now we verify the second condition.
Note that $r_t$ is twice continuously differentiable due to \cref{asm:loss_smooth_bounded_grad}. 
With \cref{asm:same_dist}
\begin{align}
\gamma_{j}
&=p_i(z_i=j|D_i,\Phi,\vpi_i)
=\frac{\exp\left[-\Lcal_{j'}+\log\pi_{ij}\right]}
{\sum_{j'}\exp\left[-\Lcal_{j'}+\log\pi_{ij'}\right]},
\\
\nabla_{\vphi_{j'}}\gamma_{j}&=
\begin{cases}
  (-\gamma_{j}+\gamma^2_{j})
  \nabla\Lcal_{j}
  \qquad \text{if } j'=j
  \\
  \gamma_{j}\gamma_{j'}
  \nabla\Lcal_{j'}
  \qquad \text{if } j'\neq j,\\
\end{cases}
\end{align}
where $\nabla\Lcal_{j}$ is shorthand for $\nabla_{\vphi_{j}}\Lcal_{j}$. Then
\begin{alignat}{4}
\nabla_{\vphi_{j'}}r_i
&=\frac{1}{n_i} \nabla_{\vphi_{j'}}
\sum_{j}(-q_{j}\log\gamma_{j})
&&\text{Definition of $\Kcal\Lcal$}\\
&=\frac{1}{n_i}
\sum_{j}\left(-\frac{q_{j}}{\gamma_{j}}
\nabla_{\vphi_{j'}}\gamma_{j}\right)\\
&=\frac{1}{n_i}\left[
q_{j'}(1-\gamma_{j'})-
\sum_{j\neq j'}q_{j}
\gamma_{j'}
\right]\nabla\Lcal_{j'}
\qquad&& \text{When $j=j'$ vs $j\neq j'$}\\
&=\frac{1}{n_i}\left[
q_{j'}(1-\gamma_{j'})-
(1-q_{j'})\gamma_{j'}
\right]\nabla\Lcal_{j'}
&&\sum_{j}q_{j}=1\\
&=\frac{1}{n_i}(q_{j'}-\gamma_{j'})\nabla\Lcal_{j'}.
\end{alignat}
The Hessian of $r_i$, $\Hvec(r_i)\in\RR^{dK\times dK}$ w.r.t.\ $\Phi$, is a block matrix, with blocks given by 
\begin{equation}
\Big(\Hvec(r_t)\Big)_{j,j'}
=
\begin{cases}
\frac{1}{n_i}
\left[
(q_{j}-\gamma_{j})\Hvec(\Lcal_j)
+(\gamma_{j}-\gamma^2_{j})
(\nabla\Lcal_{j})
(\nabla\Lcal_{j})^{\top}
\right]
\\
-\frac{1}{n_i}
\gamma_{j}\gamma_{j'}
(\nabla\Lcal_{j})
(\nabla\Lcal_{j'})^{\top}
\qquad\text{when $j\neq j'$},
\end{cases}
\end{equation}
where $\Hvec(\Lcal_j)\in\RR^{d\times d}$ is the Hessian of $\Lcal_{\vphi_{j}}(D_t)$ w.r.t.\ $\vphi_{j}$. 
Introduce block matrices $\widetilde{\Hvec},\widehat{\Hvec}\in\RR^{dK\times dK}$ as
\begin{equation}
\begin{split}
\widetilde{\Hvec}_{j,j'}
&=
\begin{cases}
\frac{1}{n_i}
(\gamma_{j}-\gamma^2_{j})
(\nabla\Lcal_{j})
(\nabla\Lcal_{j})^{\top}
\\
-\frac{1}{n_i}
\gamma_{j}\gamma_{j'}
(\nabla\Lcal_{j})
(\nabla\Lcal_{j'})^{\top}
\qquad\text{when $j\neq j'$},
\end{cases}
\\
\widehat{\Hvec}_{j,j'}
&=
\begin{cases}
\frac{1}{n_i}
(q_{j}-\gamma_{j})\Hvec(\Lcal_j)
\\
\zerovec
\qquad\text{when $j\neq j'$}.
\end{cases}
\end{split}
\end{equation}
Since $q_{j},\gamma_{j}\in[0, 1]$ and $\ell$ is $L$-smooth by \cref{asm:loss_smooth_bounded_grad}, we have $-L\cdot I_{dK}\preccurlyeq \widehat{\Hvec} \preccurlyeq L\cdot I_{dK}$. 
Using \cref{lemma:bounded_psd} (see below), we have $\zerovec\preccurlyeq\widetilde{\Hvec}\preccurlyeq B^2\cdot I_{dK}$ (note that $\nabla\Lcal_{j}$ is the sum of $n_i$ individual gradients and $\Hvec(r_t)$ has $1/n_i$). 
As a result, $-\widetilde{L}\cdot I_{dK}\preccurlyeq\Hvec(r_t)\preccurlyeq \widetilde{L}\cdot I_{dK}$ (where $\widetilde{L}=L+B^2<\infty$) and therefore $r_t$ is $\widetilde{L}$-smooth.

Finally, $q^{(t)}_j=p_i(z_i=j|D_i,\Phi^{(t-1)},\vpi^{(t-1)}_i),\forall t>0$ by the algorithm, which means
\begin{equation}
r^{(t)}_i(\Phi^{(t-1)},\Pi^{(t-1)})
=r_i^{(t)}(\Phi^{(t-1)}, \vpi_i^{(t-1)})=0.
\end{equation}
Additionally, from \cref{eq:r_KL} we know that $r^{(t)}_i(\Phi,\vpi_i)$ is a (non-negative) KL-divergence for all $\Phi,\Pi$.
Recall that $r_i^{(t)}$ is differentiable.
It follows that $\Phi^{(t-1)}$ is a minimizer of the function $\{\Phi\mapsto r_i^{(t)}(\Phi, \vpi_i^{(t-1)})\}$ and
\begin{equation}
\nabla_{\Phi}r_i^{(t)}(\Phi^{(t-1)}, \vpi_i^{(t-1)})=\zerovec.
\end{equation}
This verifies the second condition of the partial first-order surrogate.

\paragraph{(3)}
Note that $\vpi_i^{(t)}=\argmin_{\vpi}g_i^{(t)}(\Phi,\vpi)$ due to the choice of $q_i^{(t)}$ by the algorithm. 
Then for any $\vpi_i$ and $i\in[K]$, 
\begin{equation}
\begin{split}
g_i^{(t)}(\Phi, \vpi_i)
-g_i^{(t)}(\Phi, \vpi_i^{(t)})
&=\sum_{j}q_{j}^{(t)}
(\log\pi_{ij}^{(t)}-\log\pi_{ij})
\\
&=\sum_{j}\pi_{ij}^{(t)}
(\log\pi_{ij}^{(t)}-\log\pi_{ij})
\\
&=\mathcal{KL}(\vpi_i^{(t)}\|\vpi_i),
\end{split}
\end{equation}
which is non-negative and equals zero iff $\vpi_i^{(t)}=\vpi_i$.
This verifies the third condition of partial first-order surrogate.

At last, $g,f$ are convex combinations of $\{g_i\}_{i=1}^K,\{f_i\}_{i=1}^K$, respectively, thus the same properties hold between $g$ and $f$.
This completes the proof.
\end{proof}

\begin{lemma}
Suppose $\vg_1,\dots,\vg_K\in\RR^d$ and $\vgamma=(\gamma_1,\dots,\gamma_K)\in\Delta^{K}$.
The block matrix $\Hvec\in\RR^{dK}$:
\begin{equation}
\Hvec_{j,j'}
=
\begin{cases}
(\gamma_{j}-\gamma^2_{j})
\vg_{j}\vg_{j}^{\top}
\\
-\gamma_{j}\gamma_{j'}
\vg_{j}\vg_{j'}^{\top}
\qquad\text{when $j\neq j'$},
\end{cases}
\end{equation}
is positive semi-definite~(PSD). 
If in addition $\|\vg_j\|\le B<\infty,\forall j\in[K]$, then $\Hvec\preccurlyeq B^2\cdot I_{dK}$
\label{lemma:bounded_psd}
\end{lemma}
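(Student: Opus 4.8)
The plan is to work directly with the quadratic form induced by $\Hvec$ and collapse the block structure down to scalars. I would take an arbitrary test vector $\vv=(\vv_1,\dots,\vv_K)$ with each $\vv_j\in\RR^d$ and introduce the scalars $a_j:=\vg_j^\top\vv_j$. Expanding $\vv^\top\Hvec\vv$ block by block, the diagonal blocks contribute $\sum_j(\gamma_j-\gamma_j^2)a_j^2$ and the off-diagonal blocks contribute $-\sum_{j\neq j'}\gamma_j\gamma_{j'}a_ja_{j'}$, so that
\[
\vv^\top\Hvec\vv
=\sum_j\gamma_j a_j^2
-\sum_j\gamma_j^2 a_j^2
-\sum_{j\neq j'}\gamma_j\gamma_{j'}a_ja_{j'}.
\]
The last two sums combine into $\big(\sum_j\gamma_j a_j\big)^2$, giving the compact identity $\vv^\top\Hvec\vv=\sum_j\gamma_j a_j^2-\big(\sum_j\gamma_j a_j\big)^2$.

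The key observation is then that, since $\vgamma\in\Delta^{K}$ is a probability vector, this expression is exactly the variance $\var_{\vgamma}(a)$ of the random scalar that equals $a_j$ with probability $\gamma_j$. A variance is non-negative (equivalently, apply Jensen's inequality to the convex map $a\mapsto a^2$), which immediately yields $\vv^\top\Hvec\vv\ge 0$ for every $\vv$, proving $\Hvec\succcurlyeq 0$.

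For the upper bound under $\|\vg_j\|\le B$, I would bound the variance by the second moment: $\vv^\top\Hvec\vv=\var_{\vgamma}(a)\le\sum_j\gamma_j a_j^2$. Cauchy--Schwarz gives $a_j^2=(\vg_j^\top\vv_j)^2\le\|\vg_j\|^2\|\vv_j\|^2\le B^2\|\vv_j\|^2$, and combining with $\gamma_j\le 1$ produces $\sum_j\gamma_j a_j^2\le B^2\sum_j\|\vv_j\|^2=B^2\|\vv\|^2$. Hence $\vv^\top\Hvec\vv\le B^2\|\vv\|^2$ for all $\vv$, i.e.\ $\Hvec\preccurlyeq B^2\cdot I_{dK}$.

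I expect the only delicate step to be the algebraic bookkeeping that recognizes the squared-diagonal and off-diagonal terms as $\big(\sum_j\gamma_j a_j\big)^2$; once the quadratic form is rewritten as a variance, both claims follow almost immediately. Everything else (Cauchy--Schwarz and $\gamma_j\in[0,1]$) is elementary, so no separate regularity or boundedness machinery is needed.
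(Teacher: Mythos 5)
Your proof is correct, and the first half is exactly the paper's argument: both expand the quadratic form blockwise, recognize the diagonal-square and cross terms as combining into $\bigl(\sum_j\gamma_j a_j\bigr)^2$, and identify $\vv^\top\Hvec\vv$ as the variance $\var_{\vgamma}(a)$ of the scalar $a_j=\vg_j^\top\vv_j$ under the probability vector $\vgamma$, whence PSD-ness is immediate. Where you diverge is the operator-norm bound. The paper bounds the variance via Popoviciu's inequality: it first shows $-B\|\vv\|\le \vg_j^\top\vv_j\le B\|\vv\|$ (using Cauchy--Schwarz and $\|\vv_j\|\le\|\vv\|$), then applies $\var\le\tfrac14(\max-\min)^2$ to get $B^2\|\vv\|^2$. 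You instead drop the variance to the second moment, $\var_{\vgamma}(a)\le\sum_j\gamma_j a_j^2$, bound each term by Cauchy--Schwarz as $a_j^2\le B^2\|\vv_j\|^2$, and sum using $\gamma_j\le 1$ to obtain $B^2\sum_j\|\vv_j\|^2=B^2\|\vv\|^2$. Both routes yield the same constant, but yours is the more elementary one: it needs no named inequality beyond Cauchy--Schwarz, and it exploits the block structure ($\sum_j\|\vv_j\|^2=\|\vv\|^2$) rather than the cruder envelope $\|\vv_j\|\le\|\vv\|$, so nothing is lost and one external tool is saved.
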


\begin{proof}
Let $\vx=[\vx_1,\dots,\vx_K]\in\RR^{dK}$, then
\begin{align}
\vx^\top\Hvec\vx
&=\sum_{j,j'=1}^K\vx_j^\top\Hvec_{j,j'}\vx_{j}
\\
&=\sum_{j=1}^K\left(\vx_j^\top\Hvec_{j,j}\vx_j+
\sum_{j'\neq j}\vx_j^\top\Hvec_{j,j'}\vx_{j'}
\right)
\\
&=\sum_{j=1}^K(\gamma_j-\gamma_j)^2
\cdot(\vx_j^\top\vg_j)^2
-
\sum_{j=1}^K\left(\sum_{j'\neq j}\gamma_j\gamma_{j'}\cdot
(\vx_j^\top\vg_j)\cdot(\vx_{j'}^\top\vg_{j'})
\right)
\\
&=\sum_{j=1}^K\gamma_j(1-\gamma_j)
\cdot(\vx_j^\top\vg_j)^2
-
\sum_{j=1}^K\left(\gamma_j(\vx_j^\top\vg_j)\cdot
\sum_{j'\neq j}\gamma_{j'}\cdot
(\vx_{j'}^\top\vg_{j'})
\right)
\\
&=\sum_{j=1}^K\gamma_j
\left(\sum_{j'\neq j}\gamma_{j'}\right)
\cdot(\vx_j^\top\vg_j)^2
-
\sum_{j=1}^K\left(\gamma_j(\vx_j^\top\vg_j)\cdot
\sum_{j'\neq j}\gamma_{j'}\cdot
(\vx_{j'}^\top\vg_{j'})
\right)
\\
&=\sum_{j=1}^K\gamma_j(\vx_j^\top\vg_j)
\cdot
\sum_{j'\neq j}\gamma_{j'}\left(
  \vx_j^\top\vg_j
  -
  \vx_{j'}^\top\vg_{j'}
\right)
\\
&=\sum_{j=1}^K\gamma_j(\vx_j^\top\vg_j)
\cdot
\sum_{j'=1}^K\gamma_{j'}\left(
  \vx_j^\top\vg_j
  -
  \vx_{j'}^\top\vg_{j'}
\right)
\\
&=\sum_{j=1}^K\gamma_j(\vx_j^\top\vg_j)^2
-
\left(\sum_{j=1}^K\gamma_{j}
\vx_j^\top\vg_j
\right)^2
\\
&=\EE_{j\sim\vgamma}[(\vx_j^\top\vg_j)^2]
-
\left(\EE_{j\sim\vgamma}
[\vx_j^\top\vg_j]
\right)^2
\\
&=\VV_{j\sim\vgamma}[\vx_j^\top\vg_j]\ge 0,
\end{align}
where we have repeatedly applied $\sum\gamma_j=1$ and $\EE,\VV$ denote expectation and variance, treating $\vx_j^\top\vg_j$ as a random variable.
As a result, $\Hvec$ is PSD.

Suppose in addition $\|\vg_j\|\le B<\infty,\forall j\in[K]$.
Using the Cauchy-Schwarz inequality, we have
\begin{equation}
-B\cdot\|\vx_j\|
\le -\|\vx_j\|\cdot\|\vg_j\|
\le \vx_j^\top\vg_j
\le \|\vx_j\|\cdot\|\vg_j\|
\le B\cdot \|\vx_j\|.
\end{equation}
Since $\|\vx_j\|\le\|\vx\|,\forall j\in[K]$, we have
\begin{equation}
-B\cdot\|\vx\|
\le \vx_j^\top\vg_j
\le B\cdot \|\vx\|.
\end{equation}
Finally, with the Popoviciu's inequality on variances, we have
\begin{equation}
\vx^\top\Hvec\vx
=\VV_{j\sim\vgamma}[\vx_j^\top\vg_j]
\le\frac{1}{4}(B\cdot \|\vx\|+B\cdot \|\vx\|)^2
=B^2\|\vx\|^2,
\end{equation}
which means $\Hvec\preccurlyeq B^2 I_{dK}$.
\end{proof}

\end{document}